\documentclass{article}
\usepackage{graphicx} 

\usepackage{authblk}
\usepackage[left=1in, right=1in, top=1in, bottom=1in]{geometry}

\usepackage{amsthm}
\usepackage{amssymb}
\usepackage{amsmath}
\usepackage{mathtools}
\usepackage{csquotes}
\usepackage{enumitem}
\usepackage{hyperref}
\usepackage{xcolor}
\usepackage{fancyhdr}
\usepackage{mdframed}
\usepackage{framed}

\setlist[itemize]{leftmargin=*}
\setlist{nosep}
\setlist{nolistsep}

\hypersetup{
    colorlinks,
    linkcolor={red!50!black},
    citecolor={blue!100!black},
    urlcolor={black!80!black}
}

\renewcommand{\paragraph}[1]{\vspace{.5 cm} \noindent \textbf{#1} }

\usepackage{bbm}

\usepackage{algorithm}
\usepackage[noend]{algpseudocode} 

\usepackage[capitalise,nameinlink,noabbrev]{cleveref}
\crefname{equation}{}{} 
\crefname{enumi}{Step}{} 

\makeatletter
\renewcommand{\theHALG@line}{\thealgorithm.\arabic{ALG@line}}
\makeatother

\theoremstyle{definition}
\newtheorem{theorem}{Theorem}[section]
\newtheorem{lemma}[theorem]{Lemma}
\newtheorem{proposition}[theorem]{Proposition}
\newtheorem{corollary}[theorem]{Corollary}

\usepackage{thmtools}
\usepackage{thm-restate}
\usepackage[round]{natbib}
\usepackage{booktabs}
\usepackage{multirow}
\usepackage{subcaption}
\setcitestyle{maxnames=10}

\crefname{protocol}{Protocol}{Protocols}
\Crefname{protocol}{Protocol}{Protocols}

\DeclareMathOperator{\E}{\mathbb{E}}

\newcommand{\interior}[1]{ {\kern0pt#1}^{\mathrm{o}} }

\newcommand{\eps}{\varepsilon}

\newcommand{\R}{\mathbb{R}}

\newcommand{\Z}{\mathbb{Z}}

\newcommand{\Cbiso}{\mathbb{C}_{\mathtt{BISO}}}
\newcommand{\Ciso}{\mathbb{C}_{\mathtt{ISO}}}
\newcommand{\Cint}{\mathbb{C}_{\mathtt{INT}}}
\newcommand{\Fiso}{\mathcal{F}_{\mathtt{ISO}}}

\newcommand{\Cgen}{\mathbb{C}_{\mathtt{GENLIN}}}
\newcommand{\Cgenint}{\mathbb{C}_{\mathtt{GENINT}}}

\newcommand{\Ex}{B}
\newcommand{\tensor}{T}
\newcommand{\numcriteria}{d}
\newcommand{\numsamp}{N}
\newcommand{\genlinf}{F}
\newcommand{\Tst}{T^*}
\newcommand{\Mst}{M}

\newcommand{\mat}{\widehat{M}}
\newcommand{\ktdist}{K}
\newcommand{\ktcomp}{\overline{K}}
\newcommand{\perm}{\pi}
\newcommand{\numvals}{m}
\newcommand{\subindex}{\ell}
\newcommand{\const}{C}
\newcommand{\indexset}{I}
\newcommand{\sqloss}{R_{\mathtt{OLS}}}
\newcommand{\numindex}{L}
\newcommand{\hatlin}{\widehat{g}_{\mathtt{LIN}}}
\newcommand{\scoref}{f}

\newcommand{\argmin}{\mathrm{argmin}}
\newcommand{\hatf}{\widehat{f}}

\newcommand{\fcv}{f_{\mathtt{CV}}}
\newcommand{\flin}{f_{\mathtt{LIN}}}

\newcommand{\Flin}{\mathcal{F}_{\mathtt{LIN}}}
\newcommand{\yval}{y}
\newcommand{\funspace}{\mathcal{F}}
\newcommand{\Fgen}{\mathcal{F}_{\mathtt{GAM}}}
\newcommand{\Fint}{\mathcal{F}_{\mathtt{GENINT}}}
\newcommand{\calX}{\mathcal{X}}
\newcommand{\calY}{\mathcal{Y}}
\newcommand{\flam}{\widehat{f}_{\lambda}}
\newcommand{\hatlam}{\widehat{f}_{\lambda}}
\newcommand{\minval}{p_{\mathtt{min}}}

\usepackage{todonotes}
\newcommand{\xval}{x}
\newcommand{\ccalX}{\widetilde{\cal{X}}}
\newcommand{\cFiso}{\widetilde{\Fiso}}
\newcommand{\cP}{\widetilde{p}}
\newcommand{\cscoref}{\widetilde{f}}
\newcommand{\cxval}{\widetilde{x}}
\newcommand{\risklam}{R_{\mathtt{VAL}}(\lambda)}
\newcommand{\RLS}{\mathtt{RLS}}
\newcommand{\zval}{z}

\usepackage{float}
\usepackage{caption}

\newfloat{protocol}{thp}{lop}[section]
\floatname{protocol}{Protocol}

\newcommand{\testset}{S_2}

\newcommand{\dataset}{S'}

\hypersetup{
    linkcolor={red!50!black},
    citecolor={blue!100!black},
    urlcolor={blue!80!black}
}
\title{Learning What Evaluators Value:~\\ A Reliable Approach to Modeling Evaluator Preferences}
\author{Madeline Kitch}
\author{Nihar B. Shah}
\affil{Carnegie Mellon University\\
\texttt{\{mckitch,nihars\}@cs.cmu.edu}}

\date{}

\begin{document}
\maketitle
\begin{abstract}
  In many applications, human and LLM evaluators use assessments of relevant criteria to create an overall evaluation for an item or individual. For example, in admissions, committees assess candidates on attributes such as test scores, GPA, and research experience to evaluate their overall fit for the program. Another example arises in medical care where clinicians use patient reports of symptoms to consider preliminary diagnoses and assess risks. Each setting involves mapping multiple criteria to an overall evaluation---a process that reflects the evaluator's underlying preferences. We focus on the fundamental question of learning these preferences. 
  
  Many applications of this problem make specific modeling assumptions on evaluator preferences that may be substantially violated in the real world. We make the minimal assumption that the preference function is coordinate-wise non-decreasing, which is reasonable in a large number of evaluation settings. We theoretically characterize the severity of model mismatch for many common assumptions and show that it can lead to significant issues for learning evaluator preferences and other important downstream tasks. We then present an algorithm for learning evaluators' preferences that is robust to model mismatch. We prove theoretically that our algorithm can learn any preference function without sacrificing performance when the linearity assumption holds. Evaluations of our algorithm with synthetic simulations and real-world data confirm its ability to learn preferences robustly and illustrate key aspects of LLM and human preferences. To conclude, we present a case study of how to use our method to provide insights into reviewer preferences and increase fairness in the peer review process. 
\end{abstract}

\section{Introduction}\label{sec:intro}

In many areas of life, we use evaluations to help guide decision-making. Doctors evaluate patients in order to recommend the best course of treatment; (LLM) reviewers evaluate academic papers to determine areas of improvement and conference acceptance; everyday individuals evaluate products and hotels that help inform others on what to buy and where to stay.

Often, to make their recommendations or assessments more clear and methodical, evaluators provide additional evaluations of relevant criteria or aspects alongside their overall rating or recommendation. 
In the medical setting, doctors typically record patients' clinical signs and symptoms. This information then guides medical diagnoses and the clinician's recommended course of treatment. In academic peer review, reviewers may be asked to provide ratings for aspects of the paper, such as soundness and contribution, prior to providing their overall score. Product ratings (e.g., of hotels) may include how the individual felt about specific aspects (e.g., rooms or location) in addition to a rating of their overall experience. \Cref{fig:tripadvisor_iclr} provides real-world examples of the criteria and overall assessment setting we study for the latter two applications: product ratings and peer review.  

\begin{figure}[tbp]\label{fig:Tripadvisor}
    \centering
    \subcaptionbox{Example Tripadvisor reviews\label{fig:tripadvisor}}{%
        \includegraphics[width=\linewidth]{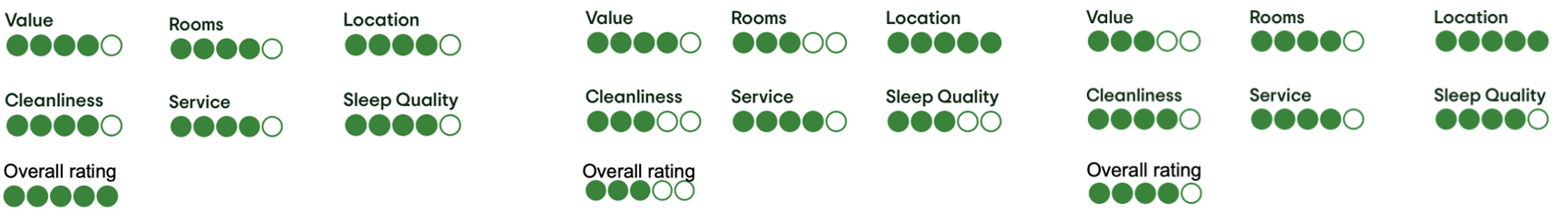}
        }%
    \vfill
    \subcaptionbox{Example ICLR 2024 reviews\label{fig:iclr}}{%
        \includegraphics[width=.6\linewidth]{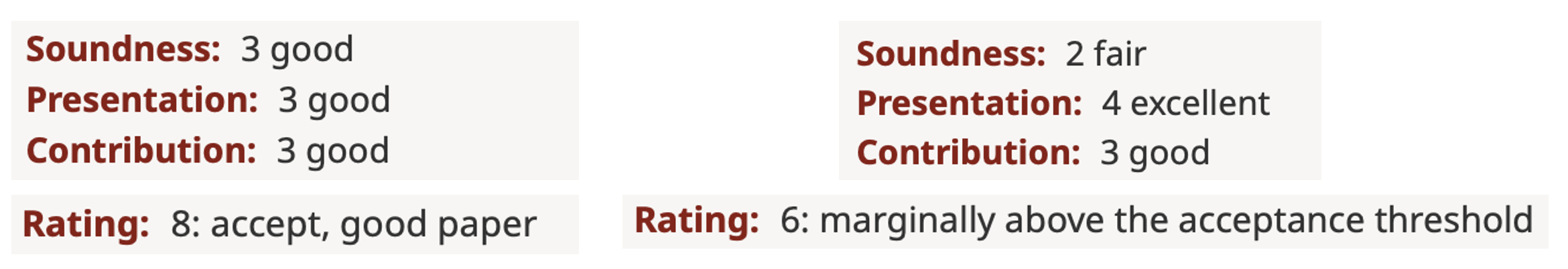}
        }
    \caption{An illustration of the evaluation settings we consider. Evaluators provide criteria evaluations alongside an overall score or recommendation. Our focus is on learning how evaluators map assessments of criteria (e.g., cleanliness or soundness) to their overall scores (e.g., a stay rated 4 out of 5 stars, or a review says 8: accept). 
    }
    \label{fig:tripadvisor_iclr}
\end{figure}

Such criteria or aspect evaluations serve many purposes. They illustrate what factors went into the evaluator's decision-making: why a doctor recommended a certain medication, or why a paper achieved a high score. Hence, they can help to increase the transparency of the evaluation process. Moreover, criteria evaluations ensure that the assessors consider multiple dimensions of the item or individual prior to reaching their conclusion. Consequently, it is harder for an evaluator to simply go with their gut and miss a potentially relevant factor. 

\Cref{fig:setting} illustrates the problem and general evaluation setting we consider. In this paper, we are interested in the fundamental question: \textit{How do evaluators map criteria evaluations to their overall scores or recommendations?}

We call this mapping the evaluators' \textit{preference function} (or collective/community preference function, to emphasize that evaluators often bring distinct individual preferences). \textbf{Learning this mapping is useful in several ways}:~\\

\begin{figure}[t]
    \centering
        \includegraphics[width=0.6\linewidth]{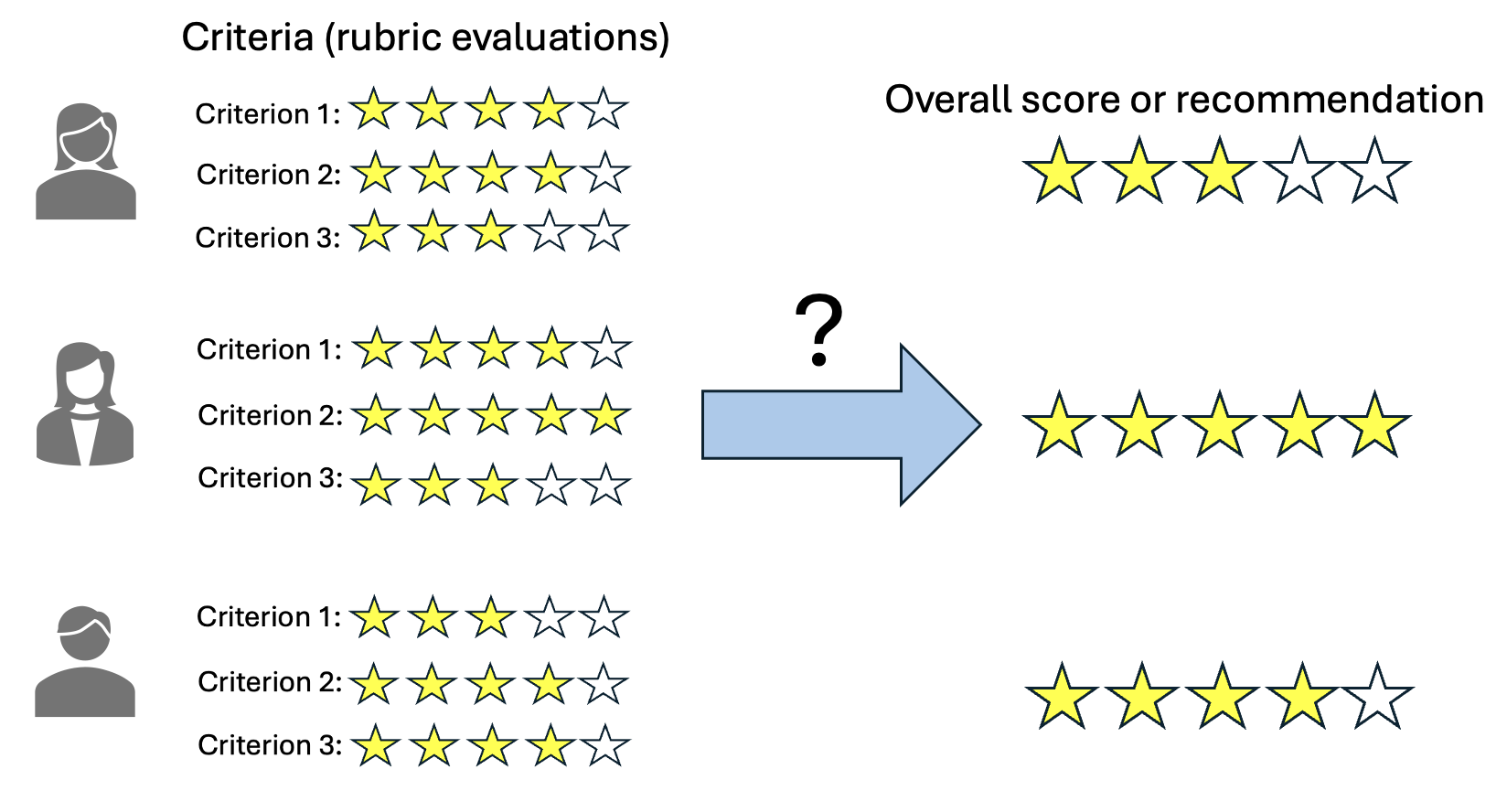}
    \caption{An illustration of the problem we address in this paper. There are many evaluators, each of whom provides assessments of multiple criteria or aspects (on the left) and an overall evaluation (on the right). The question we are interested in is denoted by the blue arrow: how evaluators map multiple criteria assessments to their overall scores or recommendations.
    }
    \label{fig:setting}
\end{figure}

\begin{itemize}[itemsep=0.5em]
    \item 
    
    Practitioners and researchers can use the preference function to understand what evaluators care about. Say, for instance, someone wants to understand the relative value of a contribution in a paper (rated from 1-4). A learned preference function can illustrate how improving the rating of contribution (all else constant) alters the overall recommendation of the paper, and how this depends on the other criteria evaluations. If evaluators believe novel ideas are most useful when they are well-motivated and presented, an increase in contribution may be more beneficial if the paper also has a high score for soundness. Further, if evaluators place special emphasis on particularly novel ideas, an increase from a rating of 1-2 may be less impactful than one from 3-4. 

    \item The preference function can reduce discrepancies arising from idiosyncratic evaluation preferences. Often, there are multiple evaluators with different preferences. Since every evaluator applies their own unique preferences, overall assessments depend on the specific preferences of the evaluator to whom an item or individual is assigned. To alleviate such discrepancies, we can learn the preference function representing the collective preferences of evaluators. This learned function can flag assessments where the mapping of criteria to overall scores deviates significantly from the norm. Then, practitioners can assess the flagged evaluation and determine whether the subsequent recommendation or assessment should be altered to align more with collective practice. 

    \item We can use evaluator preferences to increase alignment in rule-based evaluations. There are many situations where rule-based evaluations or algorithms replace human judgment. This can be for multiple reasons, including the fact that evaluations can be costly in terms of time and financial resources. One example is preliminary diagnostic tests of anxiety and depression (PHQ-9 and GAD-7). In these settings, if we apply the learned evaluation function instead of the current algorithm (e.g., the predicted psychiatric diagnosis given patient responses), this could allow for the evaluation to more closely align with evaluator beliefs and/or expertise.~\\
\end{itemize}

For these and other reasons, many researchers and practitioners have tried to learn evaluator preferences. However, they typically assume the preference function is linear, or some extension thereof (e.g., log-linear or linear with interactions). Such simplifying assumptions have benefits. Commonly used models are often more interpretable. If the preference function is captured by a basic linear model, the overall score is a weighted combination of individual criteria. Hence, we can directly look at the weights to infer the relative importance of each criterion. Additionally, stronger assumptions on preferences help to reduce the amount of information needed to learn them. Since we typically use a dataset of many criteria evaluations (inputs) alongside overall scores (outputs), this implies that we need fewer total evaluations (criteria and overall score pairs) to reliably learn evaluator preferences. Yet these benefits come with a significant downside: any parametric model implies certain assumptions about people's preferences that may or may not make sense. What's more, people often fail to check the necessary model assumptions in practice. 
\citet{jones_common_2025}, a recent study in a biomedical journal underscores this point. They surveyed 83 papers where researchers assumed a linear model and needed to check the linearity assumption. Only 15 of those papers (less than 20\%) did so directly. These and other findings led them to conclude:
\begin{center}
\textit{\enquote{[T]he reporting of linear regression assumptions is alarmingly low.}}
\end{center}
The implication of \citet{jones_common_2025} is that researchers are potentially using incorrect models to capture evaluator preferences without even realizing it. Furthermore, others have begun to critique the use of linear or other simple parametric functions on philosophical grounds. In a qualitative study on learning human preferences regarding kidney allocation prioritization, \citet{keswani_can_2025} found that half (10 of 20) participants employed thresholds and/or other non-linear transformations, illustrating the fact that linear models cannot capture the essential components of people's moral preferences. Their finding on model class misalignment is significant because if we then use these incorrect models for decision-making (e.g., for allocating kidneys or conducting peer review), this can lead to real-world harms.

Our paper is motivated by the importance of learning evaluator preferences and by the potential issues arising from assuming an incorrect functional form. Thus, in this paper, we make a minimal non-parametric assumption on preferences, and use it to study the issues of and solutions to model mismatch. 

We detail our approach next, followed by a summary of our contributions.~\\

\noindent\textbf{Our approach.} In general, any parametric model contains implicit assumptions about people's preferences that may or may not make sense. In order to avoid imposing such assumptions on evaluators, we adopt a non-parametric model, assuming only that the preference function is isotonic, that is, non-decreasing in each criterion. Isotonic functions are those that are coordinate-wise non-decreasing. The choice of isotonic functions is particularly well-suited for evaluation problems. In many settings, while it may not be clear the magnitude of a change in criteria scores, it is reasonable to assume the direction. For instance, all else constant, a paper with a greater degree of novelty should not be less likely to be accepted, and a psychiatric patient presenting with more severe symptoms should not be deemed lower risk. While the definition of an isotonic function is quite succinct, it is important to realize how expressive this class is. Isotonic functions are able to capture any form of interactions between variables, thresholds, and nonlinear transformations of the output. 

Given this setup, we investigate three types of questions: First, we ask whether many of the common assumptions are problematic when people use them to learn more complex monotonic functions. We ask if there are realistic preference functions that linear models, generalized additive models, or generalized additive models with interactions fundamentally cannot capture. Additionally, we investigate if this leads to issues in other common downstream applications: determining the relative importance of different evaluation criteria, and ranking items or individuals based on their criteria evaluations. Answering these questions helps us to understand the extent to which model mismatch could be an issue. Second, given our understanding of the model mismatch problem, we then ask if there is anything we can do to help address it. In particular, we investigate learning algorithms that could allow people to learn what any set of evaluators cares about but do so without sacrificing performance when the linear assumption holds. Since the two previous components were theoretical, we then empirically analyzing whether our findings hold on real-world preference data. To conclude, we illustrate how our method can be used in practice to compare LLM and human evaluation preferences and analyze the academic peer review process.~\\

\noindent\textbf{Our contributions:} Here are the key contributions of this paper.~\\

\underline{\textit{Problems with common model assumptions.}} We establish strong negative results for commonly used models when preferences are more complex. Our findings underscore the significant problems that can occur when incorrectly assuming one of these models. 

\begin{itemize}[itemsep=0.5em]
        \item \textit{Inability to learn evaluator preferences}. Our main goal is to learn how the evaluators map criteria to overall scores. We consider the case when a researcher incorrectly assumes a linear model, a generalized additive model, or a generalized additive model with criteria interactions. Given this model mismatch, we investigate how well any estimated preference function can estimate the evaluators' overall scores given the criteria scores. The estimation error is the expected squared difference between the output of the estimated function and the true preference function, given a random set of criteria scores. We prove that there exist monotonic preference functions such that when fit by any linear model with extensions, the resulting estimation error is as bad as randomly guessing outputs up to a constant factor.  
        \item \textit{Issues ranking items based on criteria evaluations}. Evaluation settings frequently call for the ranking of items or individuals. For instance, admissions officers rank potential students, and conferences (loosely) rank papers to determine acceptance. When a researcher mistakenly assumes a generalized additive model, we consider how often the ranking provided by the estimated preference function agrees with the ranking of the true preference function. We prove that there exist monotonic functions for which the ranking error (proportion of comparisons that the estimated ranking is correct) can be on the same order as randomly guessing a ranking between items.  
        \item \textit{(Mis)-understanding the relative importance of criteria}. An additional use case for preference functions is to understand the relative importance evaluators give to different criteria. When using a simple linear model, one can answer this question by comparing the values of the regression coefficients. The criterion with the largest corresponding coefficient is assumed to carry the greatest ``weight'' in decision-making. We prove that when preferences are more complex, and there is an imbalance of samples, researchers can be led to the opposite conclusion than what is actually true (e.g., inferring that criterion 1 is more important than criterion 2 when the reverse is true).~\\  
    \end{itemize}

\underline{\textit{Algorithmic solution}}. Given the aforementioned issues with commonly used models, we develop and theoretically analyze an algorithm that can learn evaluator preferences more reliably.  

\begin{itemize}[itemsep=0.5em]
    \item \textit{Our Algorithm}. Our goal in developing an algorithm is to ensure that the evaluators' preferences are learned, whether or not the researcher's assumption holds. The key techniques we use to accomplish these aims are isotonic regression (regression for isotonic functions), regularization, and cross-validation. Regularization creates a penalty for the degree of model mismatch (i.e., distance from a linear model) in addition to a penalty for error on the training set. Our algorithm does regularized isotonic regression---weighing both the fit of the model to preference data and the degree of model mismatch. Finally, we carefully pick the relative weight of both error components (model mismatch and fit) through cross-validation. 
    \item \textit{Theoretical results}. The most natural baseline is non-negative least squares (which we often refer to as \textit{linear regression} for brevity). Non-negative least squares imposes non-negativity on coefficients to ensure monotonicity and performs weakly better than unconstrained regression in practice for isotonic linear models. We prove theoretically that our algorithm satisfies both goals we care about: (1) it is able to learn any evaluator preference function and (2) it is nearly as good as linear regression. For (1), we prove an upper bound on the estimation error of our algorithm for any monotonic preference function that vanishes in the limit. Regarding (2), we prove that the estimation error of our algorithm is upper-bounded by the error of linear regression. Importantly, while our results are specific to linear regression, the key theoretical techniques we rely on extend to proving robustness for any other model assumption.~\\
\end{itemize}
    
\underline{\textit{Empirical findings}}. Lastly, we use real-world preference data to measure the performance of our algorithm and illustrate the insights it can provide for evaluation systems, especially in peer review.

\begin{itemize}[itemsep=0.5em]
\item \textit{Synthetic simulations}. We compare our algorithm to linear regression on synthetic data from common utility models used in economics---linear, Leontief, and Cobb-Douglas. When preferences are linear, both algorithms have comparable estimation error. And, for non-linear preferences (Leontief and Cobb-Douglas), our algorithm substantially outperforms linear regression. Furthermore, we compare our algorithm with two monotone-constrained machine learning models (gradient-boosted tree ensembles and neural networks) and find that it has similar or lower estimation error across the different sample sizes and preference types.

\item \textit{Real-world human preferences}. To see how our algorithm fares given real preference data, we apply our algorithm to preference data from Tripadvisor hotel ratings. We then measure the prediction error and the reducible error (a proxy for estimation error) of the fitted preference function. Our algorithm has lower prediction and reducible error than linear regression, a result that is statistically significant across all the datasets we test. The difference in reducible error is most pronounced. There is as much as a 69\% decrease when compared with linear regression. That said, this benefit is modest when compared to the level of noise in the dataset. We use this to conclude that the relative benefit to practitioners is likely to be larger when estimating the underlying preference function than predicting criteria scores if there is a significant amount of noise or preference variability. To address this potential issue, we provide methods for testing the degree of noise in the data.
\item \textit{Real-world preferences of LLMs}. We provide a case study demonstrating how our method can be used in practice to compare preferences of human and LLM evaluators. To do so, we use a dataset of human and LLM  (GPT-4o and Llama 3.1 70b) generated scientific reviews from \citet{yu_is_2026}. We test both preference alignment (i.e., how close the estimated mapping from criteria to overall scores of an LLM is to that of human evaluators) and how consistent the preferences of a given LLM are across reviews. On both of these dimensions, GPT significantly outperforms Llama. While our focus is not on this specific result, it illustrates how our methods can be used to measure the alignment of LLM preferences with those of human evaluators, and the internal consistency of LLM preferences.
\item \textit{Case study on peer review}. We conclude by analyzing the way peer reviewers aggregate multiple criteria into overall scores, using ICLR 2026 review data. 
We find that of the three review criteria (`contribution,' `soundness,' and `presentation'), contribution has by far the largest predictive value. That said, none are close to zero suggests that each criterion plays an important role in assessing a paper. We then use our method to measure the commensuration bias of each review, which we estimate as the difference between the provided rating and the rating implied by the reviewer's own criteria scores under the community preference function. Under a simplified model of the peer review process, we estimate that nearly 16\% of paper decisions would be altered in the absence of commensuration bias, indicating that reviewer preference heterogeneity likely has a significant impact on peer review outcomes. The ICLR 2026 reviews ranked in descending order of commensuration bias is available at  \url{https://github.com/madelineceli13/evaluator-preference-algorithm/blob/main/data/case_study/debiased_reviews_2026.csv}. 
\end{itemize}

\section{Related work}\label{sec:related_work}

There are various approaches people currently use when uncertain about their model assumptions. One such method is the Ramsey RESET test, used predominantly in econometrics \citep{ramsey_tests_1969} for testing linearity. The idea is that if the model is truly linear, then higher-order terms of the fitted y-values should have no explanatory power when added to the regression. If one finds that they do, this implies that the model is mis-specified. While useful, one of the drawbacks of this test is that it does not inform the researcher of a more suitable functional form, merely that the null hypothesis of a linear model is insufficient. Another method that people use is to plot potential interaction or higher order terms against regression residuals to see if there is a meaningful relationship for both linear models \citep{cook_exploring_1993} and generalized additive models \citep{cook_partial_1998}. These methods are extended to more complex regression settings in \citet{fox_visualizing_2018}. However, they require the researcher to conduct an ad-hoc iterative approach, using the results of these tests to update their models and then test model fit again. Assuming only monotonicity of the regression function, we propose an alternative approach that avoids model mis-specification altogether. By using a regression method that accommodates more complex models directly, we completely avoid the problem of iterative model revision. 

As we are interested in learning isotonic (preference) functions, our work relates to the literature on isotonic regression. When using a least squares estimator, a series of works have proved theoretical upper bounds on isotonic regression in the case of one variable \citep{chatterjee_risk_2014}, two variables \citep{chatterjee_matrix_2015}, or in general dimensions \citep{fang_multivariate_2020, han_isotonic_2019}. We complement this by proving bias results, demonstrating that a broad class of common model assumptions can lead to worst-case minimax error. Our results in \Cref{sec:problems_with_linearity} for estimating monotonic functions $\numcriteria \ge 2$ dimensions, assuming a linear, generalized additive model, or generalized additive model with interactions, can result in error as bad as randomly guessing outputs (up to constant factors). Additionally, one of the common threads in the literature is the natural adaptivity of the least squares estimator when the monotonic function has a simpler structure. When the number of rectangles or hyper-rectangles on which the function is constant is sufficiently small, \citet{chatterjee_matrix_2015} shows that the estimation error converges at a parametric rate in terms of the number of samples up to logarithmic multiplicative factors for $\numcriteria = 2$, and when $\numcriteria\ge 1$, \citet{fang_multivariate_2020} prove the same result for entirely monotonic functions, a strict subset of monotonic functions. Similarly, when the function is constant with respect to one input, \citet{chatterjee_matrix_2015} shows that the risk of a 2-dimensional isotonic regression converges (up to logarithmic factors) at the same rate as the 1-dimensional function. While these results are promising, \citet{han_isotonic_2019} proves isotonic regression adapts at a strictly nonparametric rate in higher dimensions. Instead of proving the adaptivity of isotonic regression to linear models (the parametric model of interest), we combine isotonic regression with cross-validation. By applying a result on oracle inequalities from \citet{vaart_oracle_2006}, we are able to show that our estimator achieves the same rate as linear regression (with no additional logarithmic dependence on the number of dimensions) when the true model is linear, while still being able to learn any isotonic function. Our work, therefore, provides an example of the benefit of combining parametric and nonparametric estimators through cross-validation when model assumptions are untestable or unlikely to hold.  

As we develop methods for learning people's preferences, our work complements previous research on learning from human feedback. One of the primary forms of feedback is pairwise comparisons. Given a set of pairwise comparisons among one or many evaluators, there are multiple questions of interest: constructing a consensus ranking given noisy comparisons from many individuals \citep{shah_simple_2018, heckel_active_2016}, inferring the latent qualities of the items or options being compared \citep{wang_stretching_2019, bradley_rank_1952}, or estimating the probabilities of subsequent comparisons \cite{shah_feeling_2016, shah_stochastically_2016}. As in our setting, researchers frequently assume parametric models such as the Bradley-Terry-Luce or Thurstone model. \citet{shah_stochastically_2016} establishes bias results for when these assumptions are incorrectly applied. Additionally, the authors introduce a natural non-parametric assumption---Stochastically Transitive Models---and prove that when assuming this larger model class, one can still recover minimax optimal parametric rates. Each of these results---the issues with current model assumptions and the possibility for more reliable algorithms that do not sacrifice performance---mirrors ours. However, this area of work focuses on using ordinal (comparison) feedback to learn overall qualities; in our setting, we observe cardinal feedback on a structured input space. Additionally, instead of learning about how existing objects or items compare, our goal is to learn a \textit{function}. 

Given the fact that our data often comes from individuals with diverse opinions, another set of relevant work is on preference aggregation in social choice theory. A social choice theoretic perspective ensures certain axiomatic properties (e.g., anonymity or unanimity) are satisfied when combining individual preference data to create an output (i.e., a single ranking or preference function). In a typical voting setting, individuals provide rankings over a set of options or candidates, and some algorithm takes these rankings as input and outputs a consensus ranking. This consensus ranking can denote, for instance, the relative preferences over candidates or policy options. There is a large amount of existing work on developing aggregating rules for voting (see \citep{brandt_handbook_2016} for an overview), alongside impossibility results such as May's theorem \citep{may_set_1952} and Arrow's Impossibility theorem \citep{arrow_social_1951} for the simultaneous satisfying of a set of axioms. 

Since our focus is on learning functions, not rankings, nearest to our setting is the work of \cite{noothigattu_loss_2021} which focuses on learning evaluator preferences from an axiomatic perspective. Given the problem of empirical risk minimization over the set of isotonic functions, they consider the set of $L(p,q)$ loss functions (the matrix extension of $L_p$ loss for vectors) and prove that only one choice of parameters, $p=q=1$, satisfies the three natural axioms they consider. The algorithm they present has been used broadly in several ML/AI conferences as well as admissions processes, thus underscoring the applicability and importance of our setting. In contrast to \citet{noothigattu_loss_2021}, we take a statistical approach to learning the preference function; our aim is to find a function that best fits the evaluation data. In addition to offering a different perspective on this problem, there is a practical reason as well---such axioms may not be necessary in settings where practitioners care most about reliably estimating the value or outcome (i.e., clinicians assessing patient risks, companies predicting product ratings). Purely minimizing the loss function over all isotonic functions can lead to a larger degree of overfitting given a small sample size, causing issues for accurate prediction. To combat this, we apply a different loss function, minimizing a regularized error (accounting for the complexity of the learned model) rather than simply the error on the training data. This fix enables us to provide guarantees even on a small sample size.

Lastly, our work relates to applied literature on commensuration bias, noise in decision-making, and participatory design. Sunstein and Tversky, a legal scholar and psychologist, respectively, demonstrate that, given the exact same information, evaluation decisions can vary significantly due to the idiosyncrasies of unique evaluator preferences (i.e., every reviewer has unique beliefs) and the specific evaluation context (e.g., the time of day it is reviewed). Using judicial decision-making and hiring as examples, they argue that the result of ``noisy'' decision-making is widespread inefficiencies and unfairness \citep{kahneman_noise_2021}. Philosopher Carol Lee writes about this same problem in the context of the scientific review process. She argues that commensuration, the process of ``converting distinct qualities into a single metric,'' can be significantly inconsistent from one evaluation to the next \citep{lee_commensuration_2015}. This is due in large part to evaluator preference inconsistency, a phenomenon she calls \textit{commensuration bias}. Whether under the label of noise or commensuration bias, the result is that the assessment and resulting outcome or recommendation for an item or individual becomes contingent on the particular evaluator they are assigned, violating the philosophical imperative of equal treatment. Our approach can help to alleviate such unfairness: Practitioners can use the community preference function to flag assessments where the criteria-to-overall score mapping deviates from the norm. Taking a further look at such assessments can help ensure that the outcome or recommendations ultimately provided are truly reflective of what the community of evaluators cares about. Finally, within computer science, there has been increasing interest in understanding (mathematically) human preferences so as to incorporate their beliefs in algorithmic design \citep{lee_webuildai_2019, keswani_can_2025, freedman_adapting_2020}. At a high level, the aim is to learn individual preference functions, and then aggregate them via an axiomatic approach \citep{lee_webuildai_2019, noothigattu_voting-based_2018}; most work assumes a linear model \citep{keswani_can_2025, lee_webuildai_2019} or another simple parametric form such as weighted p-means \citep{pardeshi_learning_2024}. This is despite the apparent issues with doing so as described in \citet{keswani_can_2025}. When applied to this literature, we provide a method for learning what individuals care about without missing morally relevant components of their beliefs. 

\section{Model overview}\label{sec:model_overview}
In this section, we will provide a formal overview of the evaluation setting, our learning problem, and the common model assumptions we consider. Recall that evaluators take as input assessments of individual criteria or aspects and output an overall score or assessment (see \Cref{fig:tripadvisor_iclr}). In our ICLR example, the criteria correspond to aspects of the paper (soundness, presentation, and contribution); the overall score corresponds to the rating (e.g., 8 out of 10). Each criterion is given scores corresponding to integer values from 1 (poor) to 4 (excellent). We use $\numcriteria \ge 2$ to denote the number of relevant criteria, and $\numvals\ge 2$ to denote the number of values a score can have (e.g., for ICLR, there are 3 criteria and 4 possible scores per criterion). Let $[\numvals]=\{1,2,\dots, \numvals\}$ be the set of integers between $1$ and $\numvals$. Then the vector of criteria scores, $\xval$, is contained in $\calX = [\numvals]^\numcriteria$. Further, because overall scores commonly take on values in a bounded range (e.g., papers are scored from 1-10), we assume that the preference function maps to some bounded interval. For technical convenience, we rescale the outputs to $[0,1]$. Therefore, the evaluator preference function $\scoref$ takes as input a vector of criteria scores $\xval \in \calX$ and outputs a scalar value $\scoref(\xval) \in [0,1]$.

Recall that we assume only that the preference function is isotonic, or non-decreasing in each criterion. Formally, we define the set of isotonic preferences with range $\calY$, to be
\begin{equation}\label{defn:iso}
    \Fiso(\calY) := \{\scoref: \calX \to \calY \mid \scoref(\xval) 
    \ge \scoref(\xval') \text{ whenever } \xval_i \ge \xval'_i \text{ } 
    \forall i \in [\numcriteria]\},
\end{equation}
and we let $\Fiso := \Fiso([0,1])$. As discussed in \Cref{sec:intro}, in many evaluation settings, any realistic preference function is coordinate-wise non-decreasing; hence, it is reasonable to assume that $\Fiso$ contains the set of \textit{all} possible evaluator preferences. 

\subsection{Learning preference functions}
\textit{Our primary goal is to learn evaluator preferences}. To accomplish this, we use evaluation data consisting of criteria and overall scores, which serve as the function inputs and outputs, respectively. Let there be $\numsamp\ge 2$ total assessments, indexed by $j$, and let $\yval^{(j)}$ be the overall score or recommendation for item $j$. The set of all $\numsamp$ evaluations is represented by a sequence of tuples $\{\xval^{(j)},\yval^{(j)}\}_{j\in [\numsamp]}$ satisfying 
\begin{equation}\label{eq:model}
    \yval^{(j)} =\scoref(\xval^{(j)}) + w^{(j)}
\end{equation}
where $w^{(j)}$ is independent noise, capturing the fact that the preference applied to each sample may be different due to context or differences in evaluator opinions. The $\{\xval^{(j)}\}_{j=1}^{\numsamp}$ are independent and identically distributed from some distribution $P$ with support on $\calX$ and inducing a set of probabilities $P(\xval) \in [0,1]$. Our goal is to learn $\scoref$ as well as possible over the set of typical criteria scores (captured by the distribution $P$). We measure this by how well our estimated function, $\hatf$, predicts the output of $\scoref$, when given a new set of criteria scores $\xval \sim P$. Let $\|g\|^2_{L_2(P)}=\E_{\xval \sim P}\|g(\xval)\|_2^2$ be the expected $L_2$ norm of a function with inputs randomly sampled from $P$. Then we can define our measure, estimation error, as
\begin{equation}\label{eq:expected_risk}
    R(\hatf, \scoref) = \mathbb{E}\|\scoref(\xval) - \hatf\|^2_{L_2(P)}
\end{equation}
where the expectation is taken over the randomness of the sample data. Since $\scoref$ takes values in $[0,1]$, any estimator $\hatf$ constrained to the same range satisfies $R(\hatf,\scoref)\leq 1$.

\subsection{Common model assumptions}
We consider a set of common model assumptions, the simplest of which is the linear model. The set of non-decreasing linear models with range $\calY$ is defined as 
\begin{equation}\label{defn:lin}
    \Flin(\calY) := \left\{\scoref: \calX \to \calY  \;\middle|\; \scoref(\xval) = a^\top \xval + b, \; a \in \R_{\ge 0}^\numcriteria, \; b \in \R \right\}.
\end{equation}
For brevity, we set $\Flin([0,1]) = \Flin$. Researchers often extend linear models to capture specific nonlinearities. Sometimes the marginal benefit of a change in criteria score depends on the current level of that score (i.e., contribution matters only if it exceeds a $2$). To capture these effects, one can apply a non-decreasing transformation, such as a threshold function, to each criterion input score. Additionally, people often apply monotonic transformations to the outputs of the linear function, such as taking the logarithm or squaring the sum. Generalized additive models (GAMs) allow for transformations of the output and transformations of inputs. Let $\Fgen$ be the set of GAMs that are isotonic. Then $\scoref \in \Fgen$ if there exists a non-decreasing transformation of outputs $\genlinf: \R \to \R$, set of non-decreasing transformations of inputs $h_i: [\numvals] \to \R$ for each $i \le \numcriteria$, and coefficients and coefficients $a\in \R^\numcriteria, b \in \R$ such that for all $\xval \in \calX$:
\begin{equation}\label{eq:genlin}
    \scoref(\xval) = \genlinf\left(\sum_{i \in [\numcriteria]}a_ih_i(\xval_i) + b\right) \text{ for all }\xval \in \calX. 
\end{equation}
By construction, $\Fgen$ contains $\Flin$. And, $\Fgen$ contains no constraints on the range of the model, thus making this class even more expressive. 

Our main results apply to a larger class, $\Fint$, which allows for products of criteria; people often include these interaction terms when one criterion impacts the effect of another (i.e., a higher soundness score may increase the value of a paper's contributions). The name \texttt{GENINT} comes from the fact that this class includes all generalized additive models with interactions between criteria. Let $\mathcal{P}([\numcriteria])$ denote the power set of $[\numcriteria]$. Then $\scoref \in \Fint$ if there exists a non-decreasing transformation of outputs $\genlinf: \R \to \R$, set of non-decreasing transformations of inputs $h_i: [\numvals] \to \R$ for each $i$, and coefficients $\{\gamma_{\indexset}\}_{\indexset \in \mathcal{P}([\numcriteria])}$ such that for all $\xval \in \calX$:
\begin{equation}\label{eq:genint_class}
    \scoref(\xval) = \genlinf\!\bigg(\sum_{\indexset \in \mathcal{P}([\numcriteria])} \gamma_\indexset \prod_{i \in I} h_i(\xval_i)\bigg).
\end{equation}
We use the convention $\prod_{i \in \emptyset}(\cdot) = 1$ so that $\gamma_\emptyset$ serves as an intercept. Notably, $\Fint$ contains $\Flin$ and $\Fgen$, weighted $p$-means functions used in social choice \citep{pardeshi_learning_2024}, as well as generalized linear models, and (log-)linear models with interactions.

\section{Problems with assuming linear preferences}\label{sec:problems_with_linearity}
While we may not believe preferences to be linear on philosophical grounds (the inability to capture criteria interactions or thresholds), the more important question is whether this assumption meaningfully impacts outcomes that practitioners care about. If an incorrect assumption has little impact on real-world outcomes, then it is not a problem. However, we find the opposite to be true. In the worst case, incorrectly assuming a linear, generalized additive model, or a generalized additive model with interactions, can have significant consequences. 

Three common uses of preference functions are (1) to understand evaluators by learning the underlying model that reflects what they care about, (2) to rank items/individuals based on their predicted overall assessment, and (3) to understand the relative importance of each criterion. We will look at each of these uses--understanding evaluators, ranking items/individuals, and criteria importance--in turn, explaining common applications where they arise as well as negative theoretical results on the use of linear models. 

\subsection{Estimating preferences}\label{sec:estimation_functions}
One of the main goals of learning evaluator preferences is to understand what evaluators care about. This function can tell us what criteria matter more or less, how they combine, and when changes in criterion scores are more or less significant. Further, given preference functions for two groups of evaluators, we can also learn about how the opinions of each group differ. Additionally, practitioners can use this learned preference function to predict evaluations given a set of criterion scores. This is useful in applications where criterion scores are easy to acquire or deterministic, whereas overall assessments can be costly. For instance, PHQ-9 and GAD-7 are used for preliminary risk assessments in place of a psychiatric diagnosis to save resources and time of medical personnel. In this section, we ask: \textit{How well can commonly used models capture complex evaluator preferences?}

Consider the case when the researcher or practitioner assumes the true preference function is in $\Fint$ (defined in Equation \Cref{eq:genint_class}). Recall this includes the set of linear models, generalized additive models, and generalized additive models with criteria interactions. We care about minimizing the error $\|\hatf -\scoref \|_{L_2(P)}^2$ for an estimator $\hatf$ of the true preference function. Given a set of ``worst-case'' true preferences, if the researcher outputs an estimated preference function in $\Fint$, we ask what the minimum error they could possibly achieve is. That is, if we had access to an oracle that could tell us the best estimation of a given preference function in $\Fint$, what is the worst error this oracle could ever incur? 

\begin{proposition}\label{prop:bias_genint_functions}
    Recall that $\|\hatf -\scoref \|_{L_2(P)}^2\le 1$ for any $\hatf$ with outputs in $[0,1]$. There exists some universal constant $\const$ such that for all $\numvals \ge 4$ and $\numcriteria \ge 2$,
    \[\sup_{\scoref \in \Fiso}\inf_{\hatf \in \Fint}\|\hatf -\scoref \|_{L_2(P)}^2 \ge \const\]
    where $P$ is the uniform distribution over $\calX$. 
\end{proposition}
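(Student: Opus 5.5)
The plan is to reduce to a finite, two-dimensional combinatorial obstruction and then lift it to general $\numcriteria$ and $\numvals$ by averaging. There are three steps: a lifting step, a robustness step, and a purely combinatorial separation step on a $4\times 4$ grid.

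\emph{Lifting.} Partition $[\numvals]$ into four consecutive blocks $B_1,\dots,B_4$, each of size at least $\lfloor \numvals/4\rfloor \ge \numvals/8$. We look for a target $\scoref\in\Fiso$ that depends only on $(\xval_1,\xval_2)$ and is constant on each product block $B_k\times B_l$. Under the uniform $P$, every such block has mass at least $c_0/16$ for an absolute $c_0>0$, whatever the value of $\numcriteria$.

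Fix any $\hatf=\genlinf\circ g\in\Fint$ and freeze $\xval_3,\dots,\xval_\numcriteria$. Each $h_i(\xval_i)$ with $i\ge 3$ then becomes a constant that can be absorbed into the coefficients. Hence the slice $(\xval_1,\xval_2)\mapsto\hatf(\xval)$ has the form
\[\genlinf\bigl(\alpha+\beta u(\xval_1)+\gamma v(\xval_2)+\delta u(\xval_1)v(\xval_2)\bigr),\]
with $u,v$ non-decreasing and $\alpha,\beta,\gamma,\delta\in\R$ arbitrary. It therefore suffices to show that every such two-dimensional function has $L_2$ error at least $c$ against $\scoref$ under the uniform law on $[\numvals]^2$. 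Averaging this bound over the frozen coordinates then gives the proposition.

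\emph{Robustness.} Choose $\scoref$ to take a few well-separated values, say in $\{0,\tfrac12,1\}$. Suppose $\|\hatf-\scoref\|^2_{L_2(P)}<c$ with $c$ small relative to the block mass. By Markov's inequality, in every block $B_k\times B_l$ there is a representative point at which $|\hatf-\scoref|<1/8$. At these sixteen representatives, whenever $\scoref$ differs between two blocks, $\hatf$ is strictly ordered the same way. Since $\genlinf$ is non-decreasing, a strict inequality $\hatf(p)<\hatf(q)$ forces $g(p)<g(q)$. So a small error would produce a family of strict inequalities among the values of $g$ at $4\times 4$ points, where $g$ is bilinear in $(u,v)$ with monotone $u,v$ evaluated at the chosen representatives. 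The goal is to design $\scoref$ so that this family is infeasible.

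\emph{Combinatorial separation.} This is the main obstacle. Single level sets do not work: any upset of the grid is of the form $\{u(x)+v(y)>0\}$ for suitable non-decreasing $u,v$, so a $0/1$ target is always representable. The obstruction must therefore come from two nested level sets that are required to be thresholds of the same $g$.

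The structural fact I would use is the following. For fixed $x<x'$, the difference $g(x',y)-g(x,y)=(u(x')-u(x))(\beta+\delta v(y))$ is monotone in $y$, so its sign changes at most once as $y$ increases, and symmetrically in the other variable. The plan is to build a three-valued isotonic staircase pattern on $4\times4$ whose two level boundaries force, for some pair of columns, comparisons of the form ``the $0\to\tfrac12$ boundary sits lower than'' and ``the $\tfrac12\to1$ boundary sits higher than'' in alternating rows. These comparisons, summed along a cycle of cross-row comparisons, contradict the single-sign-change property. Equivalently, they violate a Thomsen/Reidemeister-type cancellation identity that bilinear forms in monotone coordinates must satisfy.

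I would first search for such a pattern by hand, for instance a three-level perturbation of a Leontief-like shape. I would then verify infeasibility by writing out the finitely many strict inequalities and exhibiting a nonnegative combination of them that sums to $0<0$, handling the sign cases of $\delta$ and of $\beta+\delta v$ separately. Once such a pattern exists, the robustness step yields the universal constant $\const$.
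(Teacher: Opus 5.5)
\textbf{The gap: a three-valued target is never an obstruction.} Your lifting step matches the paper's slicing lemma, so that part is fine. The combinatorial separation step, however, cannot succeed as designed. Every isotonic function of $(\xval_1,\xval_2)$ with at most three values already lies in $\Fgen\subseteq\Fint$, so for your target the infimum is exactly $0$.

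To see this, write the two nested down-sets column by column as $\{(x,y): y\le s(x)\}\subseteq\{(x,y): y\le t(x)\}$, with $s\le t$ non-increasing. At integers, set $\phi_1(x)=s(x)+\tfrac14-\tfrac{x}{8(\numvals+1)}$ and $\phi_2(x)=t(x)+\tfrac34-\tfrac{x}{8(\numvals+1)}$. Interpolate linearly and extend with slope $-1$; this gives strictly decreasing bijections of $\R$ with $\phi_1<\phi_2$. Then $\psi=\phi_2\circ\phi_1^{-1}$ is an increasing bijection with $\psi(z)>z$, so Abel's equation $v\circ\psi=v+1$ has a strictly increasing solution $v$. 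Put $u=-v\circ\phi_1$, which is also increasing. On the grid the two down-sets are then exactly $\{u+v<0\}$ and $\{u+v<1\}$, and a three-step non-decreasing $\genlinf$ reproduces $\scoref$ exactly.

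So two nested level sets impose no cancellation condition, even for additive models, and the cycle of comparisons you plan to find by hand does not exist. An obstruction needs long strict chains, hence many well-separated levels. The paper's $4\times4$ matrix $B$ takes values in $\{1,\dots,12\}/12$ and contains four chains of four strict inequalities. Write $g(i,j)=c_ja_i+\gamma_2b_j$ with $c_j=\gamma_1+\gamma_{12}b_j$ and $\Delta_k=a_{k+1}-a_k$. Comparing outer and inner gaps in each chain gives $c_2\Delta_1>c_1\Delta_2$, $c_2\Delta_2>c_3\Delta_1$, $c_1\Delta_3>c_2\Delta_2$ and $c_3\Delta_2>c_2\Delta_3$. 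All of these quantities are forced positive by the chains, so the first pair yields $c_2^2>c_1c_3$ and the second pair yields $c_2^2<c_1c_3$.

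\textbf{A secondary problem in the robustness step.} Choosing one good point per block independently gives sixteen points that do not form a product grid. Representatives in the same block-row then do not share a value of $u$, nor those in a block-column a value of $v$. Any bilinear cancellation argument, including your single-sign-change property for $g(x',\cdot)-g(x,\cdot)$, needs that shared structure.

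The paper instead partitions the upsampled target into $\numvals^2/16$ disjoint $4\times 4$ submatrices, taking one index from each block in each coordinate; each is an exact copy of $B$. A submatrix of a matrix in $\Cgenint^2$ is again in $\Cgenint^2$, so each copy must violate some strict comparison between entries at least $1/12$ apart. That costs at least $2(1/24)^2$ per copy, and summing gives the constant. Averaging over a random product grid with one index per block would also repair your Markov step, but only once the target has enough distinct levels.
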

We will now discuss what this result means for researchers and practitioners interested in learning preference functions. For any (reasonable) choice of estimator, $\hatf$, the risk is upper-bounded by $1$. This means that the worst-case error (e.g., that incurred by a random function over $[0,1]$) is a constant factor. Our result holds not just for a simple linear model, but also for those allowing for unique transformations of each criterion input, transformations of the output, and as many as $2^{\numcriteria}$ terms corresponding to interactions between criteria. The implication is that one can have a worst-case estimation error even if they employ many of the extensions of linear models used in practice. Hence, even if a researcher or practitioner assumes a very general class of models, they may still end up learning preferences quite poorly. The full proof of \Cref{prop:bias_genint_functions} is in \Cref{app:estimation_functions}; we include a proof sketch below. 

\textit{Proof sketch}: We begin by showing that there are ordinal relations over inputs that can be satisfied by functions in $\Fiso$ but not $\Fint$. To illustrate how this can arise, consider the case of a linear model. If a model is linear, $\scoref(1,2) > \scoref(2,1)$ implies that coefficient for $\xval_1$ is greater than the coefficient for $\xval_2$. Therefore, $\scoref(2,3) > \scoref(3,2)$ and in general for any $i>j$, $\scoref(j,i) > \scoref(i,j)$. This example illustrates that the relative value of the coefficients can impose a constraint on the relative value of the outputs. We prove that this same type of issue can arise even for model classes as expressive as $\Fint$. If there are such ordinal results that any $\hatf \in \Fint$ cannot satisfy, and the true preference function, $\scoref$ does, then $\hatf$ must induce a different ranking over the pairs of inputs than $\scoref$. If the outputs of $\scoref$ on those input values are non-equal, this leads to some amount of error. When there are enough instances of such sets of inputs, this leads to a very large error. We prove that the number of such subsets of inputs is $\Omega(\numvals^\numcriteria)$. This gives rise to our constant error result.$\qed$
\subsection{Ranking items}
In many evaluation settings, it is of interest to create a ranking or prioritization among a set of items, individuals, or options. In resource allocation problems, we may care about identifying individuals or communities who would benefit the most from investment. In admissions and hiring processes, there is typically a fixed number of applicants who can receive offers (i.e., only the top $k$ are selected). Concretely, \citet{lee_webuildai_2019} applies learned individual preferences to rank prioritization of food bank recipients. 

In this section, we consider the issues that could arise when using an estimated preference function to rank items when the researcher incorrectly assumes a model class. We consider a broad class of models. In particular, we consider a linear model where there is a set of non-decreasing transformations applied to each of the criteria inputs, and a transformation applied to the output of the weighted combination of the (transformed) inputs, $\Fgen$ (see Equation \ref{eq:genlin} for a definition). 

To measure the ability of the estimated preference function to rank items, we consider the rankings over all possible sets of criteria scores $\xval \in \calX$ induced by the estimated function $\hatf$, denoted by $\perm_{\hatf}$. We compare $\perm_{\hatf}$ to the ranking over sets of criteria scores induced by the true preference function $\scoref$, denoted by $\perm_{\scoref}$. The distance between the two rankings is measured by a normalized Kendall tau distance. Normalized Kendall tau distance sums the ranking error over all unique pairs of inputs and then divides by the total number of input pairs. We consider a variant of normalized Kendall tau distance that accounts for ties only when they occur in $\perm_{\hatf}$ and the pairs are ranked in $\perm_{\scoref}$. Otherwise, any constant function $\hatf$ would have a distance of zero. The error for inputs $\xval \neq \xval' \in \calX$ is 
\begin{align*}
    \ktcomp(\xval, \xval';\perm_{\hatf},\perm_{\scoref})
    & =\begin{cases}1 & \text{relative order is different under $\hatf$ and $\scoref$}\\
\frac{1}{2} & \text{tied in $\hatf$, ranked in $\scoref$}\\
0 & \text{otherwise}.
\end{cases}
\end{align*}
And, the total distance is 
\[\ktdist(\perm_{\hatf},\perm_\scoref)=\frac{1}{\binom{\numvals^{\numcriteria}}{2}}\sum_{\text{ unique }(\xval, \xval') \in \calX^2}\ktcomp(\xval, \xval';\perm_{\widehat{f}},\perm_\scoref).\]
The worst-case error of $\hatf$ is $1$ when all pairs are ranked incorrectly. We prove in \Cref{prop:bias_ranking_genlin} that there exist isotonic preference functions such that for any $\hatf \in \Fgen$, this worst-case error is incurred (up to constant factors). 
  
\begin{proposition}\label{prop:bias_ranking_genlin}
    There exists some universal constant $\const > 0$ such that for all $\numvals \ge 3$ and $\numcriteria \ge 2$, there exists a $\scoref \in \Fiso$ such that 
    \[\inf_{\hatf \in \Fgen}\ktdist (\perm_{\hatf},\perm_{\scoref}) \ge \const.\]
\end{proposition}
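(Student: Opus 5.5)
We will take $\scoref$ to depend only on the first two criteria. It will be a step function on a $3\times 3$ partition of $[\numvals]^2$ whose ordering violates the double-cancellation property that every additive (hence every generalized additive) ranking must satisfy. We then show by averaging that a constant fraction of all pairs are misranked or tied.

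\textbf{Construction.} Split $[\numvals]$ into three nonempty consecutive blocks $B_0<B_1<B_2$ of sizes $\Theta(\numvals)$; this is possible since $\numvals\ge 3$. For $\xval_1\in B_i$ and $\xval_2\in B_j$, set $\scoref(\xval)=c_{ij}/7$, where the table $(c_{ij})_{i,j\in\{0,1,2\}}$ has rows
\[
c_{0\cdot}=(0,1,4),\qquad c_{1\cdot}=(2,2.5,5),\qquad c_{2\cdot}=(3,6,7).
\]
Every row and every column of this table is increasing, so $\scoref\in\Fiso$. The table encodes the strict relations
\[
c_{10}>c_{01},\qquad c_{21}>c_{12},\qquad c_{20}<c_{02}.
\]

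\textbf{Key lemma (double cancellation with tails).} Any $\hatf\in\Fgen$ has the form $\hatf=\genlinf\circ g$ with $g(\xval)=u(\xval_1)+v(\xval_2)+T(\xval_{3:\numcriteria})$ and $\genlinf$ non-decreasing. Since $\genlinf$ is non-decreasing, the induced ordering satisfies: if $\hatf(\xval)>\hatf(\xval')$ then $g(\xval)>g(\xval')$, and if $\hatf(\xval)\ge \hatf(\xval')$ then either $g(\xval)\ge g(\xval')$ or the two points are tied under $\hatf$.

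Fix $a_k\in B_k$, $b_k\in B_k$ for $k=0,1,2$, and tails $\tau,\sigma,\rho$. Consider the three pairs
\[
\bigl((a_1,b_0,\tau),(a_0,b_1,\sigma)\bigr),\qquad \bigl((a_2,b_1,\sigma),(a_1,b_2,\rho)\bigr),\qquad \bigl((a_0,b_2,\rho),(a_2,b_0,\tau)\bigr).
\]
Under $\scoref$, each pair is strictly ordered with the first point higher. Suppose $\hatf$ weakly agreed with $\scoref$ on the first two pairs without ties. Then $g$ satisfies the corresponding weak inequalities, and summing them gives
\[
u(a_2)+v(b_0)+T(\tau)\ \ge\ u(a_0)+v(b_2)+T(\rho).
\]
Hence $\hatf(a_2,b_0,\tau)\ge \hatf(a_0,b_2,\rho)$, so the third pair is misranked or tied. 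In every case, at least one of the three pairs has $\ktcomp\ge \tfrac12$.

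\textbf{Counting.} Let the tuple $(a_0,a_1,a_2,b_0,b_1,b_2,\tau,\sigma,\rho)$ range over all choices. There are $\Theta(\numvals^{6}\numvals^{3(\numcriteria-2)})$ such tuples, and each yields at least $\tfrac12$ unit of error. Now count how often a fixed pair of type, say, $\bigl((a_1,b_0,\tau),(a_0,b_1,\sigma)\bigr)$ occurs. The pair fixes $a_0,a_1,b_0,b_1,\tau,\sigma$ and leaves $a_2,b_2,\rho$ free, so it appears in $\Theta(\numvals^{2}\numvals^{\numcriteria-2})$ tuples; the same holds for the other two pair types by symmetry. Dividing, the number of distinct bad pairs is
\[
\Omega\!\left(\frac{\numvals^{6}\numvals^{3(\numcriteria-2)}}{\numvals^{2}\numvals^{\numcriteria-2}}\right)=\Omega\!\left(\numvals^{4}\numvals^{2(\numcriteria-2)}\right)=\Omega\!\left(\numvals^{2\numcriteria}\right).
\]
This is a constant fraction of the $\binom{\numvals^\numcriteria}{2}$ pairs, with constants independent of $\numvals$ and $\numcriteria$ because the block sizes are at least $\lfloor \numvals/3\rfloor\ge \numvals/5$.

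The step I expect to need the most care is this multiplicity and averaging bookkeeping, including verifying that the constant is uniform in $\numcriteria$. Allowing arbitrary tails in each point is precisely what keeps the bound from degrading by a factor of $\numvals^{\numcriteria-2}$. The tie case also needs to be handled cleanly, which is why the lemma is stated with weak inequalities.
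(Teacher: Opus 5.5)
Your proof is correct, and it takes a genuinely different route from the paper's. The paper certifies non-additivity with two four-term chains on a $3\times 3$ pattern: $\hat M_{12}<\hat M_{21}<\hat M_{31}<\hat M_{22}$ forces $a_2-a_1>a_3-a_2$, and $\hat M_{22}<\hat M_{13}<\hat M_{23}<\hat M_{32}$ forces the reverse. You instead use double cancellation on three pairs, with the tails arranged so that $T(\tau),T(\sigma),T(\rho)$ telescope.

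The bigger difference is how the $\Omega(m^{2d})$ count is obtained. The paper adds a strictly isotonic perturbation tensor with distinct entries, so that the $L=m^d/9$ copies of the $3\times 3$ pattern are strictly ordered against one another. It then argues via interleaved chains that every copy, and every pair of copies, contributes at least one unit of error. For $d>2$ it uses the slice lemma (each two-dimensional slice of a GAM is a GAM) to show each copy is a GAM matrix.

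You keep $f$ a step function and let the three tails vary independently, so a single triple already spans different slices. You then replace the block-pair analysis by a double count. Every tuple carries a bad pair, and a bad pair of a given type lies in at most $|B_k|^2m^{d-2}$ tuples, so the $m^{d-2}$ factors cancel and the constant is explicit and uniform in $m$ and $d$.

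Your version is shorter and needs no perturbation, no divisibility of $m$ by $3$, and no slice lemma. It also handles ties more cleanly. You only use the one-sided implication $\hat f(x)>\hat f(x')\Rightarrow g(x)>g(x')$. The paper's tie lemma instead passes from weak inequalities between $F$-values to weak inequalities between their arguments, which is only justified when $F$ is strictly increasing.

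The paper's construction in return gives a more structured conclusion: an injective ground truth, and an error guarantee for every pair of blocks. For the lower bound itself this buys nothing over your argument. The many pairs tied under your step function are harmless, since they contribute nonnegatively to $K$.
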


Our result implies that when there is a model mismatch, a constant fraction of all pairs of criteria scores could be ranked incorrectly. In hiring and peer review, an incorrect ranking means that a better paper or applicant would be ranked \textit{below} a worse one. Combined with our worst-case result on learning evaluator preferences \Cref{prop:bias_genint_functions}, our findings underscore theoretically substantial issues that could result from many common model assumptions. 

The full proof is in \Cref{app:ranking}. We will present a brief proof sketch below. 

\textit{Proof sketch}: We begin by showing that there are ordinal relations over inputs that can be satisfied by functions in $\Fiso$ but not $\Fgen$. This is the same approach we use in the proof of \Cref{prop:bias_genint_functions}. Our main result relies on an example of a ranking over 2 criteria and 3 possible criterion scores. One approach would be to divide $\calX$ into disjoint index subsets where each subset has the ranking that no $\hatf \in \Fgen$ can satisfy. Therefore, $\hatf$ would incur non-zero error over each subset. Summing over all subsets, we would get an error of order $\numvals^\numcriteria$ (the size of $\calX$). However, after normalization ($\frac{1}{\binom{\numvals^{\numcriteria}}{2}}$ is order $\numvals^{-2\numcriteria}$), the error would be only $O(\numvals^{-\numcriteria})$. To address this issue, we define a perturbation function with very small but unique entries and add it to our existing function. This perturbation function, loosely speaking, ensures that any $\hatf \in \Fgen$ incurs error when comparing entries between any two index subsets.\qed

\subsection{Measuring criteria importance}  
Lastly, researchers often use regression coefficients on the criteria output by linear models to understand the relative importance of each criterion. This is useful information that can convey what aspects of an item or individual matter more, i.e., the contribution or presentation of a paper, or the level of service or location for a hotel. Thus, we ask: \textit{how can assuming a linear model lead to issues when measuring relative criteria importance?}

To illustrate potential issues, consider a (simplified) setting of \citet{lane_architectural_2023}. The authors are interested in understanding how evaluators ``weigh'' two different evaluation criteria ---feasibility and novelty---and how these weights are dependent upon the type of expertise they have. For simplicity, we assume that each criterion is binary and that the researchers assume a linear model of the form $\scoref(\xval_1, \xval_2) = a_1\xval_1 + a_2\xval_2$. Before discussing potential issues arising from model mismatch, we will introduce the key theoretical result we rely on in our analysis. 

\begin{proposition}\label{prop:criteria_importance}
    Let $\numcriteria=\numvals =2$. Assume that we have access to data $\{\xval^{(j)}, \yval^{(j)}\}_{j \in [\numsamp]}$ where the $\yval^{(j)}$ are noiseless observations of the true preference function $\scoref \in \Fiso$. That is, for all $j \in [\numsamp]$, $\yval^{(j)}=\scoref(\xval^{(j)})$. Let $a_1\xval_1 + a_2 \xval_2$ be the OLS estimator fit on the data. There exist $\scoref \in \Fiso$ and sets of sample inputs $\{\xval^{(j)}\}_{j \in [\numsamp]}$ such that 
    \begin{enumerate}[label = (\alph*)]
        \item $a_1 < a_2$ when $\scoref(\xval_1, \xval_2) > \scoref(\xval_2, \xval_1)$ for all $\xval_1 > \xval_2$. 
        \item $a_1 < a_2$ when $\scoref(\xval_1, \xval_2) = \scoref(\xval_2, \xval_1)$ for all $(\xval_1, \xval_2) \in \calX$.
    \end{enumerate}
\end{proposition}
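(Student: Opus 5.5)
Since the statement is existential, I will prove it by an explicit construction on the four-point grid $\calX=\{1,2\}^2$. The phenomenon to exploit is that an imbalanced sample can make OLS attribute the effect of an interaction to whichever criterion happens to vary less in the data. I read the estimator literally as the unconstrained OLS fit $a_1\xval_1+a_2\xval_2$ with no intercept, computed from the sample. The plan is to place all samples on only two input points whose design matrix has full rank. With noiseless outputs, OLS then interpolates those two points exactly, so $(a_1,a_2)$ is the unique solution of a $2\times 2$ linear system and can be computed by hand.

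\textbf{Part (b).} I take the symmetric ``AND''-type (Leontief-like) preference $\scoref(1,1)=\scoref(1,2)=\scoref(2,1)=0$ and $\scoref(2,2)=1$. It is clearly in $\Fiso$ and satisfies $\scoref(\xval_1,\xval_2)=\scoref(\xval_2,\xval_1)$ everywhere. I take the sample to consist only of copies of the inputs $(2,1)$ and $(2,2)$, at least one of each.
\begin{itemize}
\item The rows $(2,1)$ and $(2,2)$ have determinant $2\neq 0$, so the least-squares fit is exact.
\item Exactness gives the system $2a_1+a_2=0$ and $2a_1+2a_2=1$.
\item Its solution is $a_2=1$ and $a_1=-1/2$, so $a_1<a_2$.
\end{itemize}
So OLS reports criterion 2 as more important even though the true preference treats the criteria symmetrically. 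The reason is that criterion 1 is constant in the sample, so all the jump to $1$ is credited to $\xval_2$.

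\textbf{Part (a).} I perturb the same construction so the truth strictly favors criterion 1: $\scoref(1,1)=\scoref(1,2)=0$, $\scoref(2,1)=\eps$ and $\scoref(2,2)=1$, with $0<\eps<1/2$. Monotonicity holds because $0\le \eps\le 1$. The only pair with $\xval_1>\xval_2$ is $(2,1)$ versus $(1,2)$, and $\scoref(2,1)=\eps>0=\scoref(1,2)$, so the hypothesis of (a) holds. Using the same two-point sample, exact interpolation gives $2a_1+a_2=\eps$ and $2a_1+2a_2=1$. Hence $a_2=1-\eps$ and $a_1=\eps-\tfrac12$, and $a_1<a_2$ holds exactly when $\eps<\tfrac34$. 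In particular it holds for every $\eps<1/2$.

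The only real obstacle is whether the statement intends samples to cover all of $\calX$, or a nonnegative/intercept variant of the fit. If full support is wanted, I would add a small number $k$ of samples at $(1,1)$ and $(1,2)$ alongside $n\gg k$ copies of each of $(2,1)$ and $(2,2)$. The OLS coefficients are continuous functions of the empirical design weights, so they converge to the values above as $k/n\to 0$. The strict inequality $a_1<a_2$ therefore persists for sufficiently imbalanced samples; in part (b) the limiting gap is $3/2$, which leaves ample room.
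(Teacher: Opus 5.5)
Your proof is correct, and it partly follows the paper's route and partly departs from it.

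The paper first proves \Cref{lemma:ols_simple}, a closed form for the no-intercept OLS coefficients on $\{0,1\}$-coded inputs with arbitrary counts $\numsamp_{\xval_1,\xval_2}$, and then reads off sign conditions. For part (a) it leaves $(0,1)$ unobserved and sets $\alpha_{1,0}=1/4$, $\alpha_{1,1}=1$. Without an intercept the origin plays no role, so this is exactly your mechanism: exact interpolation of the two points where criterion 1 sits at its top level, giving $a_1<a_2$ iff $2\alpha_{1,0}<\alpha_{1,1}$.

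Part (b) genuinely differs. The paper observes all four inputs and takes the symmetric function with $\alpha_{1,0}=\alpha_{0,1}=\alpha<1/2$. It shows $a_1<a_2$ iff $\numsamp_{1,1}(1-2\alpha)(\numsamp_{0,1}-\numsamp_{1,0})<0$. So a pure count imbalance $\numsamp_{1,0}>\numsamp_{0,1}$ flips the conclusion, with full support and both coefficients positive. Your (b) instead uses a sample in which criterion 1 never varies. That avoids the general lemma, since only a square $2\times 2$ system is needed. It is, however, more degenerate: in your coding it produces the negative weight $a_1=-1/2$, and it reaches full support only through your perturbation argument.

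In exchange, your computation matches the stated domain $[2]^2=\{1,2\}^2$ exactly. Because the model has no intercept, the input coding matters, and the paper's sign condition for (b) does not carry over verbatim to $\{1,2\}$ labels. Your examples work in both codings. In $\{0,1\}$ labels the same two-point samples give $a_1=\eps<1-\eps=a_2$ and $a_1=0<1=a_2$, which is precisely why $\eps<1/2$ is the right restriction.
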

At a high level, part (a) implies that one can incorrectly assume criterion $1$ is weighted more than criterion $2$ when the reverse is true. Part (b) implies that one can find that one criterion is weighted above the other even when the underlying data-generating process is fully symmetric. 

We prove \Cref{prop:criteria_importance} in \Cref{app:criteria_importance} and present a brief intuition for this result below. 

\textit{Proof sketch}: The key issue arises from an imbalance in samples of different possible sets of criteria evaluations. For instance, consider the case when the true function is symmetric and equal to $0$ except when the inputs are both $1$. Intuitively, the more inputs of the form $(\xval_1, \xval_2) = (1,0)$, the more we hypothesize that the true value of $a_1$ is small. Conversely, the more inputs of the form $(\xval_1, \xval_2) = (0,1)$, the more it appears that $a_2$ is small. In general, we will assume that $a_1 > a_2$ whenever the number of inputs equal to $(1,0)$ is less than that equal to $(0,1)$. We prove that such issues can occur for symmetric and non-symmetric functions by proving that $a_1-a_2$ has a certain sign when $a_1$ and $a_2$ are the minimizers of the OLS problem.\qed

In \citet{lane_architectural_2023}, as is common in many applications, the authors use the values of the regression coefficients, $a_1, a_2$, to infer the relative worth or weight of both criteria. They consider two groups of evaluators: multivalent evaluators, those who have expertise in both relevant technical domains, and univalent evaluators, those with expertise in only one of them. The authors find that univalent evaluators overweigh novelty more so than multivalent ones. By part (a) of \Cref{prop:criteria_importance}, it is possible that the researchers found that novelty (criterion 2) matters more to univalent evaluators, when in fact it matters less. By part (b), they may equally have concluded that feasibility (criterion 1) matters more to multivalent evaluators when they are fully indifferent. Consequently, the true story may actually be that multivalent evaluators care \textit{more} about novelty than univalent evaluators, not less.  

\section{Our algorithm and theoretical guarantees}\label{sec:our_solution}

In this section, we will describe our algorithmic solution and its theoretical guarantees. Though we focus specifically on the linear assumption, the key techniques we use---cross-validation and regularization---as well as much of our theoretical analysis are applicable to developing robust algorithms for any parametric model choice. 

Our goal in developing an algorithm is two-fold: (1) we want to be able to learn what evaluators care about, even if the researcher's assumption of linearity does not hold; (2) we do not want to make the researcher worse off if their assumption does hold. The most natural baseline is non-negative least squares (hereafter \textit{linear regression}), which imposes non-negativity on coefficients to ensure monotonicity and performs similarly to unconstrained regression in practice. Our algorithm should ideally match its performance when linearity holds. By \Cref{prop:bias_genint_functions}, even generalized additive models with interactions can poorly capture more complex evaluator preferences, leading to a constant level of error. Thus, we cannot rely on linear regression alone. Yet if the model truly is linear (as the researcher assumes), it will be easier to learn through linear regression (when we assume linearity) than isotonic regression (when we assume it is merely isotonic). As neither isotonic nor linear regression alone suffices, this motivates the development of a new approach, which we detail in the following section. 

\subsection{Overview of our algorithm}

At a high level, our solution is to solve a regularized least squares problem over isotonic functions. We weigh both how far the observed preferences are from the estimated function (i.e., the fit on preference data) and how far that function is from any linear one (i.e., the amount of model mismatch). Our algorithm is presented as \Cref{alg:CV}. We will now interpret the algorithm and the associated sub-routines: regularized isotonic regression, post-processing, and cross-validation.  

\begin{algorithm}[tbp]
\caption{Our Algorithm}\label{alg:CV}
\begin{algorithmic}[1]
\Require Dataset of size $\numsamp$, set of regularization parameters $\Lambda$ containing $0$ and $\infty$
\State Randomly partition $S=[\numsamp]$ into a training set $S_0$ of size $3\numsamp/4$ and validation set $S_1 = S \setminus S_0$
\For{each $\lambda \in \Lambda$}
    \State Compute $(\hatlam, \hatlin) = \RLS(S_0, \lambda)$ by solving Equation ~\eqref{eq:regularized_isotonic} \Comment{solve regularized least squares}
    \State $\flam \gets$ \textsc{PostProcess}$(\hatlam, \hatlin, S_0, \lambda)$
    \Comment{truncate and interpolate, Protocol~\ref{prot:postprocess}}
    
    \State $\risklam = \sum_{j \in S_1}(\flam(\xval^{(j)})-\yval^{(j)}
    )^2$ \Comment{compute validation risk}
\EndFor
\State $\lambda^\star \gets \argmin_{\lambda \in \Lambda}\, \risklam$
\State Repeat Steps 3--5 with dataset $S$ and parameter $\lambda^\star$ to obtain $\fcv$.
\Comment{refit on full dataset}
\State \Return $\fcv$
\end{algorithmic}
\end{algorithm}

\paragraph{Regularization.} We test a set of regularization weights, denoted by $\lambda$, that allows us to interpolate between linear regression ($\lambda = \infty$) and isotonic regression ($\lambda = 0$). Recall that $\Fiso(\R)$ and $\Flin(\R)$ are the set of isotonic and linear functions with outputs in $\R$ (see Definitions \ref{defn:iso} and \ref{defn:lin}, respectively). Given $\dataset \subseteq [\numsamp]$ with unique inputs $\calX_{\dataset}$, the core optimization problem is:

\begin{equation}\label{eq:regularized_isotonic}
    \RLS(\dataset,\lambda) \!= \min_{\hatf \in \Fiso(\R)}\min_{g_{\mathtt{LIN}} \in \Flin(\R)} 
    \frac{1}{|\dataset|}\sum_{j \in \dataset}\!\left(\hatf(\xval^{(j)}) - \yval^{(j)}\!\right)^2 \!+\! \frac{\lambda}{|\calX_{\dataset}|}\!\sum_{\xval \in \calX_{\dataset}}\!\!\!\left(\hatf(\xval) - g_{\mathtt{LIN}}(\xval)\right)^2\!\!.
\end{equation}

We write $(\hatf, \hatlin)=\RLS(\dataset,\lambda)$ to denote the outer and inner minimizers of problem~\eqref{eq:regularized_isotonic} respectively. 

\paragraph{Post-processing: truncation and interpolation.} Ensuring a well-defined preference function in $\Fiso$ as output requires a few additional steps. Because of noise and the regularization component, the outputted function $\hatf$ of Equation \Cref{eq:regularized_isotonic} may map to values outside of $[0,1]$. Hence, we truncate each regression function $\hatlam$ to have outputs in $[0,1]$. To estimate values or criteria scores not seen in training, we use interpolation. For $\lambda =\infty$, we can extrapolate our (truncated) linear model. Otherwise, we set the output to the average of the smallest and largest values it can take while preserving the monotonicity and range constraints. These steps are illustrated fully in \Cref{prot:postprocess}. 
\begin{protocol}[tbp]
\caption{Post-Processing}\label{prot:postprocess}
\begin{algorithmic}[1]
\Require Fitted functions $\hatf$ and $\hatlin$, training set $\dataset$, parameter $\lambda$
\State $\hatf(\xval) \gets \max\left(0, \min(\hatf(\xval), 1)\right)$ \quad for all $\xval \in \calX_{\dataset}$
\Comment{truncate to $[0,1]$}
\For{all $\xval \notin \calX_{\dataset}$}
    \State $A(\xval) \gets \{\hatf(\zval) : \zval \in \dataset,\, \zval \succsim \xval\} \cup \{\max_{\zval \in \dataset} \hatf(\zval)\}$
    \State $B(\xval) \gets \{\hatf(\zval) : \zval \in \dataset,\, \zval \precsim \xval\} \cup \{\min_{\zval \in \dataset} \hatf(\zval)\}$
    \If{$\lambda \neq \infty$}
        \State $\hatf(\xval) \gets \tfrac{1}{2}\bigl(\min A(\xval) + \max B(\xval)\bigr)$
        \Comment{interpolation}
    \Else
        \State $\hatf(\xval) \gets \max\bigl(0,\, \min(\hatlin(\xval),\, 1)\bigr)$
        \Comment{linear extrapolation}
    \EndIf
\EndFor
\State \Return $\hatf$
\end{algorithmic}
\end{protocol}

\paragraph{Cross-validation.} The final regularization weight is carefully chosen from a set of weights, $\Lambda$. Cross-validation addresses the fact that, a priori, we do not know a suitable choice of $\lambda$. For nonlinear data, it is likely to be smaller. For linear data or data with fewer samples, it is likely to be larger. Cross-validation allows us to carefully pick $\lambda$ based on how well a set of regression functions $\{\hatlam\}_{\lambda \in \Lambda}$ fit the true function on an independent set of data. For simplicity, we use hold-out cross-validation and divide the dataset into training and validation subsets (denoted by $S_0$ and $S_1$) with a 75-25 split. However, our results hold when the split assigns any constant fraction of the data to both subsets, and one could also employ K-fold cross-validation.

\subsection{Theoretical guarantees}\label{sec:alg_gaurentees}
In this section, we provide theoretical results to support both goals (1) and (2) discussed in \Cref{sec:our_solution}. Our first goal is to learn what any evaluator cares about. Our second goal is to have our algorithm perform nearly as well as linear regression. Since the re-use of data in steps 8 and 9 is challenging to analyze, we will consider the estimator $\fcv=\widehat{\scoref}_{\lambda^\star}$ before retraining. In practice, however, retraining tends only to improve performance. 

\begin{theorem}\label{thm:CV}
    Let $\flin$ be the non-negative least squares estimator solving Equation \Cref{eq:regularized_isotonic} with $\lambda = \infty$ for the training set $S_0$, and let $\fcv=\widehat{\scoref}_{\lambda^\star}$. Let the additive noise terms, $w^{(j)}$, be independent samples from $\mathcal{N}(0,1)$. Then the following properties hold: 
    \begin{enumerate}[label = (\alph*)]
        \item There exists a constant $\const >0$ such that for any distribution $P$ over $\calX$,
        \[\sup_{\scoref \in \Flin}R(\fcv, \scoref)\le \const \log(\Lambda) \sup_{\scoref \in \Flin}R(\flin, \scoref).\]
        \item Assume that for some $\minval > 0$, $\min_{\xval \in \calX}P(\xval)\ge \frac{\minval}{|\calX|}.$ Let $\gamma_\numcriteria=(\numcriteria^2+\numcriteria+1)/2$ for $\numcriteria\ge 3$ and $\gamma_2=9/2$. Then there exists a constant $\const_{\numcriteria, \minval}>0$ depending only on $\numcriteria, \minval$ such that 
        \[\sup_{\scoref \in \Fiso}R(\fcv, \scoref) \le \const_{\numcriteria, \minval}\numsamp^{-1/\numcriteria}\log^{\gamma_\numcriteria}\numsamp\]
    \end{enumerate}
\end{theorem}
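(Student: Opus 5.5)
The plan is to reduce both parts to an oracle inequality for hold-out model selection, in the spirit of \citet{vaart_oracle_2006}, and then bound the error of one particular member of $\{\flam\}_{\lambda\in\Lambda}$. Condition on the training set $S_0$. The post-processed candidates $\flam$ then become fixed functions with values in $[0,1]$. The validation set $S_1$ has $\numsamp/4$ fresh samples with Gaussian noise, so the hold-out selection $\lambda^\star=\argmin_\lambda \risklam$ is an empirical risk minimizer over a finite class of $|\Lambda|$ bounded functions. The first step is therefore to show that there is a universal $\const$ such that
\[
\E\|\fcv-\scoref\|^2_{L_2(P)} \le \const\Bigl(\min_{\lambda\in\Lambda}\E\|\flam-\scoref\|_{L_2(P)}^2 + \frac{\log|\Lambda|}{\numsamp}\Bigr).
\]
I would prove this with the standard argument. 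Decompose $\risklam$ into the empirical $L_2$ distance plus a cross term $\sum_j w^{(j)}(\flam-\scoref)(\xval^{(j)})$. Control the cross term uniformly over the finite class by Gaussian concentration and a union bound, which is where the $\log|\Lambda|$ factor appears. Pass from empirical to population norms using Bernstein's inequality, which applies because the functions are bounded in $[0,1]$ and their variance is at most their second moment. This gives the multiplicative form "$(1+\epsilon)$ times the best candidate plus $\log|\Lambda|/\numsamp$". Unbounded Gaussian noise needs a little care, but truncating the noise at $\sqrt{\log \numsamp}$ should suffice. I expect this oracle inequality to be the main technical obstacle: the truncation must commute with the selection, and we need the fast $1/\numsamp$ remainder rather than $1/\sqrt{\numsamp}$.

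For part (a), the key fact is that $\infty\in\Lambda$, and that $\widehat{\scoref}_\infty$ after post-processing coincides with the truncated $\flin$ trained on $S_0$. Hence the minimum over $\lambda$ is at most $R(\flin,\scoref)$. It remains to absorb the additive term $\log|\Lambda|/\numsamp$. For this I would use a lower bound $\sup_{\scoref\in\Flin}R(\flin,\scoref)\gtrsim 1/\numsamp$, which holds because even estimating the intercept under $\mathcal{N}(0,1)$ noise costs order $1/\numsamp$ (a two-point argument on constant functions in $\Flin$). Then $\log|\Lambda|/\numsamp\le \const\log|\Lambda|\sup R(\flin,\cdot)$, which gives the stated bound with the $\log(\Lambda)$ factor.

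For part (b), use instead $0\in\Lambda$. The candidate $\widehat{\scoref}_0$ is least-squares isotonic regression on the observed design points, followed by truncation and interpolation. Truncation to $[0,1]$ can only reduce the error, since $\scoref\in[0,1]$. Under the assumption $P(\xval)\ge \minval/|\calX|$, the squared error in $L_2(P)$ is comparable to the error at the sampled points: every grid point is observed with high probability once $\numsamp\gtrsim |\calX|\log|\calX|/\minval$, and for smaller $\numsamp$ interpolation between monotone bounds loses at most a constant depending on $\numcriteria,\minval$ times the in-sample error. On top of this, invoke the known risk bounds for multivariate isotonic least squares with random design (\citet{han_isotonic_2019}, with $\numcriteria=2$ from \citet{chatterjee_matrix_2015}). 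These give the rate $\numsamp^{-1/\numcriteria}\log^{\gamma_\numcriteria}\numsamp$, with exactly the stated $\gamma_\numcriteria$. Since $\log|\Lambda|/\numsamp$ is of lower order for fixed $\Lambda$, combining with the oracle inequality yields (b).
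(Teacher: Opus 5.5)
Your oracle inequality and part (a) follow the paper. The paper cites Corollary~4.2 of \citet{vaart_oracle_2006} with $\delta=1/2$ to get $R(\fcv,\scoref)\le 2\inf_{\lambda}R(\flam,\scoref)+\const_1\log(1+|\Lambda|)/\numsamp$. It then uses $\infty\in\Lambda$ and truncation to compare with $\flin$, and absorbs the remainder with the two-point bound $\sup_{\scoref\in\Flin}R(\flin,\scoref)\ge 1/(8\numsamp)$ on constant functions. If you re-derive the oracle inequality yourself, skip the noise truncation. Conditional on the validation design the cross term is exactly Gaussian, so a union bound and AM--GM give the $\log|\Lambda|/\numsamp$ remainder directly. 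Truncating at $\sqrt{\log\numsamp}$ instead risks an extra logarithmic factor that the $1/(8\numsamp)$ lower bound cannot absorb in (a).

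The gap is in part (b), in the step that trades $L_2(P)$ error for in-sample error. Because $\const_{\numcriteria,\minval}$ may not depend on $\numvals$, for every $\numsamp$ you must cover lattices with $\numvals^\numcriteria\gg\numsamp$, where most cells are never observed. This is the main case, not a leftover regime. There your claim that interpolation costs at most a $(\numcriteria,\minval)$-dependent constant times the in-sample error is false. Suppose every observed cell below an unobserved $\xval$ has fitted value $0$ and every observed cell above it has fitted value $1$. Then the midpoint interpolation outputs $1/2$ at $\xval$ whatever $\scoref(\xval)\in[0,1]$ is, so the $L_2(P)$ error can be positive while the in-sample error is zero. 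No pathwise comparison of that form exists, and an in-expectation version would already be the random-design bound you are trying to prove.

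The citation does not close this either. The random-design theorem of \citet{han_isotonic_2019} assumes a design distribution on $[0,1]^\numcriteria$ with Lebesgue density bounded above and below, which a lattice-supported $P$ does not have. It also controls $L_2$ risk, not in-sample error, so it does not plug into your reduction. The lattice-design results, including \citet{chatterjee_matrix_2015} for $\numcriteria=2$, assume one observation per cell rather than a random multiset design.

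The missing idea is a transfer of Han et al.'s random-design proof to the lattice, which is what the paper provides. It keeps their split of the risk on $A=\{\|\widehat{f}_0-\scoref\|_\infty\le a\}$ with $a\asymp\sqrt{\log\numsamp}$. On $A$, the only $P$-dependent ingredient is a localized Rademacher complexity of isotonic $g$ with $\|g\|_{L_2(P)}\le r$ and $\|g\|_\infty\le 1$. The paper dominates the lattice version of this complexity by the continuous one through a coupling. Draw $\xval\sim P$, then draw $\cxval$ uniformly on the cube of side $1/\numvals$ indexed by $\xval$. Lift each lattice function $g$ to the piecewise-constant isotonic function $g(\xval(\cxval))$, which has the same $L_2$ and sup norms, while $\cxval$ has density $\numvals^\numcriteria P(\xval)\ge\minval$. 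On $A^c$ the paper uses only $P(A^c)\lesssim\numsamp^{-1}$, which needs nothing from the interpolation beyond staying within the range of the fitted values.
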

Part (a) of the theorem tells us that given a linear preference function, the risk of our algorithm is upper-bounded by the risk of $\flin$ up to a constant factor depending logarithmically on $|\Lambda|$. Hence, the asymptotic dependence on $\numsamp$, $\numvals$, and $\numcriteria$ is just as good for our algorithm as non-negative least squares. Part (b) of the theorem states that if all criteria scores have some positive probability of being observed, the risk of our algorithm decays with $\numsamp^{-1/\numcriteria}$ (plus constant and logarithmic factors). The assumption on $\minval$ is minimal (it can be arbitrarily small). As the number of samples increases, this bound becomes tighter with error vanishing completely as $\numsamp \to \infty$. Together, these results illustrate that if one assumes a linear model, they cannot be meaningfully worse off from using our algorithm. And, on the flip side, they may avoid disastrous consequences (as illustrated in \Cref{prop:bias_genint_functions}) if the true preference function is more complex. The full proof of \Cref{thm:CV} is in \Cref{app:our_solution}, and we will provide a brief proof sketch below. It is worth noting that while this proof only requires $\Lambda \supseteq \{0,\infty\}$, more values of $\lambda$ can be helpful in practice. Most data falls somewhere between having enough data for pure isotonic regression and having more than enough for a simple linear model, justifying an intermediate regularization parameter. Indeed, across our synthetic simulations, which vary dataset size and functional form, about half select an intermediate $\lambda$ as discussed in \Cref{app:reg_param}.

\textit{Proof sketch}: Proving this result has two general components. Firstly, we use an existing oracle inequality from \citet{vaart_oracle_2006} that upper bounds the risk of the cross-validation estimator in terms of the risk of any of the $\hatf_{\lambda}$ and an additive $\log |\Lambda|/\numsamp$ term (up to constant factors). To prove part (a), it suffices to show that $\sup_{\scoref \in \Flin}R(\flin, \scoref) = \Omega(1/\numsamp)$. This is a well-known lower bound on linear estimators. Since we include a truncated linear regression estimator for cross-validation, this bound implies that using cross-validation leads to the same asymptotic dependence on $\numcriteria, \numsamp$ and $\numvals$. To prove part (b), it suffices to prove a $O(\numsamp^{-1/\numcriteria}\log ^{\gamma_\numcriteria} \numsamp)$ upper-bound on the risk of isotonic regression. We do so by extending existing results on isotonic regression sample complexity from \citet{han_isotonic_2019} to our setting, which has a discrete instead of continuous space for random inputs. \qed 

\section{Empirical results}\label{sec:empirical_results}
In this section, we present synthetic simulations and empirical results on real data from human and LLM evaluations. Here, we provide a brief overview of the corresponding results. In \Cref{sec:synthetic_simulations}, we analyze how our algorithm performs relative to linear regression for learning linear and nonlinear preferences. These results illustrate the potential bias from assuming a linear model and show that our algorithm does nearly as well as linear regression for both linear and nonlinear preference models, aligning with our theoretical results in \Cref{sec:alg_gaurentees}. To test how our algorithm fares on real data, we compare our algorithm with linear regression on a dataset of Tripadvisor reviews in \Cref{sec:trip_advisor}. Our findings demonstrate the relative improvement from our algorithm in reducing prediction error as well as the distance from the best-performing oracle, which proxies for estimation error. Furthermore, in \Cref{app:ml_comparisons} we present a comparison with two monotone-constrained machine learning models and find our algorithm performs similarly or better on both synthetic and Tripadvisor data. In \Cref{appendix:additional_human_evaluator}, we repeat the comparative analysis with linear regression on two other human review datasets: ICLR peer reviews (2023-2025), and engineering solutions in aerospace and robotics from \citep{lane_architectural_2023}; additionally, we investigate how the (differing) expertise of evaluators influences the consistency of their preferences. In \Cref{sec:iclr_case_study}, we provide a practical case study in using our algorithm to analyze LLM evaluations. Using data of ICLR reviews from both humans and LLMs, we evaluate how consistent LLM preferences are across reviews, and how close those preferences are to those of human evaluations. Finally, we have a separate section detailing how our method can provide insights about and increase the fairness of the peer review process. For all results, we run our algorithm using the set of regularization parameters $\Lambda = \{2^k: k \in \Z, -9 \le k \le 8\} \cup\{0,\infty\}$ and we discuss the chosen parameter values in detail in \Cref{app:reg_param}. 

\subsection{Synthetic simulations}\label{sec:synthetic_simulations}
In this section, we conduct simulation studies to evaluate the performance of our algorithm relative to linear regression as the sample size increases for learning both linear and non-linear preferences. In particular, we compare our algorithm against non-negative least squares (linear regression that enforces the monotonicity constraint) for three classes of preference or utility functions common in the economics literature. Each of these preference functions is parameterized by a vector $a \in \R^\numcriteria_{\ge 0}$ and normalized such that they provide outputs in $[0,1]$. In addition, we include other comparisons with neural networks and gradient-boosted tree ensembles in \Cref{app:ml_comparisons}.~\\

\begin{itemize}[itemsep=0.5em]
    \item \textit{Linear utility.} Under linear preferences, criteria are perfect substitutes for one another. That means we can exchange an increase in criterion 1 for some increase in criterion 2, which doesn't depend on the level of the criterion scores:   
    \[\scoref(\xval) = \frac{1}{\numvals \|a\|_1}\sum_{i \in [\numcriteria]}a_i\xval_i.\] 
    \item \textit{Leontief utility.} Leontief utility assumes that criteria 1 and 2 perfectly complement one another. The output of the function is the lowest of the weighted criteria scores. Therefore, an increase in criterion 1 matters only if criterion 2 is sufficiently high: 
    \[\scoref(\xval) = \frac{1}{\numvals \min_{i \in [\numcriteria]}a_i}\min_{i \in [\numcriteria]} a_i\xval_i.\]
    \item \textit{Cobb-Douglas utility.} Cobb-Douglas utility assumes that the criteria scores (raised to some power) multiply one another. Hence, the relative benefit of a criterion score is dependent upon the other criteria scores. In Cobb-Douglas preferences, the complementarity of the criteria lies somewhere between linear and Leontief preferences: 
    \[\scoref(\xval) = \frac{1}{\numvals^{\|a\|_1}}\prod_{i \in [\numcriteria]}\xval_i^{a_i}.\]
\end{itemize}
To measure the ability of both algorithms to learn these three classes of preference functions, we run $50$ trials for a set of $10$ sample sizes between $50$ and $1000$ with $\numcriteria=2$ criteria and $\numvals = 5$ possible score values per criterion. For each trial, the preference parameter $a$ is drawn uniformly from $[1,2]^{\numcriteria}$ and we observe samples with independent $N(0,\sigma^2)$ noise with $\sigma = 0.2$. We set $P$ to be the uniform distribution over $\calX$. After we fit the models on the chosen data, we compute the risk or estimation error for both algorithms. Error bars represent a $95\%$ confidence interval for the expected estimation error and are computed using the sample standard error. 

\begin{figure}[tbp]
    \centering
    \includegraphics[width=\linewidth]{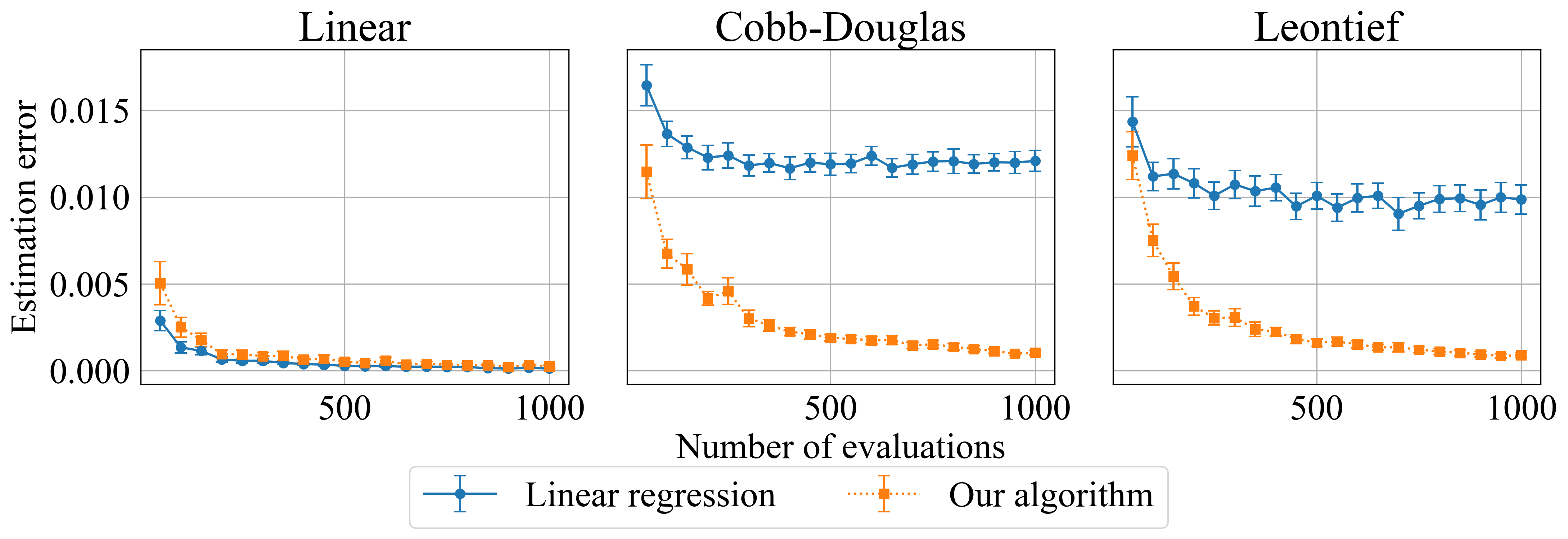}
    \caption{Simulation results for synthetic preferences. Error bars are computed using the standard error of the mean estimation error.}
    \label{fig:synthetic}
\end{figure}
Our theoretical results, namely \Cref{thm:CV} part (a), predict that our algorithm should converge at a similar rate compared with linear regression. As shown in \Cref{fig:synthetic}, they have very similar risks, even for the smallest sample size, $\numsamp = 50$. Additionally, note that for the non-linear functions (Cobb-Douglas and Leontief), linear regression is unable to decrease error at some point around $\numsamp = 400$ samples. This illustrates the bias when linear models are used to learn functions that their model class does not contain, aligning with our bias result, \Cref{prop:bias_genint_functions}. In contrast to linear regression, our algorithm is still able to learn the non-linear utility functions, albeit at a slower rate. The two algorithms are comparable at small sample sizes, but as the sample size increases, the value-added of using our algorithm increases. For Leontief preferences, at $\numsamp=50$ the average error of linear regression is about $1.2$ times that of our algorithm, at $\numsamp = 1000$ the average error is over $13$ times larger. 

\subsection{Human reviews on Tripadvisor}\label{sec:trip_advisor}
The goal of this section is to compare the performance of and insights from our algorithm relative to those of linear regression. Our data is of user ratings of hotels on Tripadvisor obtained from the HotelRec dataset \citep{antognini_hotelrec_2020}. For additional analysis on data of ICLR peer reviews (2023-2025) and engineering solutions in aerospace and robotics from \citep{lane_architectural_2023}, please see \Cref{appendix:additional_human_evaluator}. 

\subsubsection{Data information}
On Tripadvisor, people typically rate their overall experience and provide sub-ratings on criteria such as value, location, and sleep quality. Each of these criteria and the overall ratings is given between 1 and 5 stars. To ensure we have captured the relevant information in someone's review, we restrict our attention to the set of reviews containing only overall scores and the six most common criteria: value, rooms, location, cleanliness, service, and sleep quality (see \Cref{fig:Tripadvisor} for an example). For the same reason, we consider only reviews after 2014, after which the number of user ratings of other criteria (e.g., business service and check in/ front desk) is negligible.

We first consider all users in our dataset. Since preferences may vary based on the kind of trip, we additionally fit a preference function based on ``trip type.'' As the trip type is not part of the HotelRec dataset, we use a rule-based classification of review comments as a proxy. We only test categories for which the classification was reliable, meaning that out of a random sample, at least 8 out of 10 were correctly classified: these were business, family, and couples. For learning preferences of different trip types, we included data from 2014-2019. To avoid long runtimes, for all users, we used evaluations from 2019. 

\begin{table}[tbp]\label{table:data_summary}
    \centering
    \begin{tabular}{lcccc}
        \toprule
        Dataset & \# criteria ($\numcriteria$) & \# values ($\numvals$) & score range & \# samples ($\numsamp$) \\
        \midrule
        \multicolumn{5}{l}{\textit{Tripadvisor ratings}} \\
        \quad All (2019)             & \multirow{4}{*}{6} & \multirow{4}{*}{5} & \multirow{4}{*}{1--5} & 332{,}349 \\
        \quad Couples (2014--2019)   &                    &                    &                       & 302{,}894 \\
        \quad Family (2014--2019)    &                    &                    &                       & 371{,}185 \\
        \quad Business (2014--2019)  &                    &                    &                       & 197{,}031 \\
        \bottomrule
    \end{tabular}
    \caption{Summary of dataset attributes.}
\end{table}

\subsubsection{Methods}
For each dataset, we performed a random train-test split with 80\% of the data for training and 20\% of the data for testing. We used \Cref{alg:CV} on the training dataset as well as non-negative least squares. Now we will explain the questions we will ask and how we plan to answer them, given the data. 

\textit{\textbf{Question 1:} How good is our algorithm relative to linear regression at predicting overall scores given criteria scores?} We estimate the prediction error as the mean difference between the output of our estimated preference function $\hatf$ and the true function $\scoref$ on the test data. Let $\testset \subset [\numsamp]$ be the test data. We report
\begin{equation}\label{eq:prediction_error}
  \text{Prediction error}=\frac{1}{|\testset|}\sum_{j \in \testset}\left(\hatf(\xval^{(j)}) - \yval^{(j)}\right)^2.  
\end{equation}

\textit{\textbf{Question 2:} How good is our algorithm relative to linear regression at estimating the evaluators' preference function?} Recall that the expected prediction error is the sum of estimation error (the relevant quantity of interest) and the level of noise. If the level of noise is high, it can mislead us into thinking that the two algorithms being compared are relatively similar when significant differences in the estimation error exist. Thus, we need a way to disentangle the error coming from noise and the error in our estimated preference function. In synthetic simulations, we know $P$, $\scoref$, $\hatf$, and could use these to directly compute estimation error. In real data, we have access to neither $\scoref$ nor $P$. To circumvent this, we compute a proxy for the level of noise and estimation error. The expectation of the error of $\scoref$ is the level of noise $\|w\|_{L_2(P)}$. The empirical mean squared error of $\scoref$ on the test set would provide an estimate of noise. Without access to $\scoref$, we consider the lowest error \textit{any} preference function could achieve. Because no algorithm could achieve an error below this, we call it the irreducible error. Given a dataset $\dataset \subseteq [\numsamp]$:
\begin{equation}\label{eq:irreducible_error}
    \text{Irreducible error} = \frac{1}{|\dataset|}\min_{\widetilde{\scoref} \in \Fiso}\sum_{j \in \dataset}\left(\widetilde{\scoref}(\xval^{(j)}) - \yval^{(j)}\right)^2.
\end{equation}
Using irreducible error computed on the test set (i.e., $\dataset = \testset$) as our proxy for noise, we can compute an estimate of the estimation error. As this error can be fully eliminated given the proper choice of a preference function, we refer to it as \textit{reducible error}.  
\begin{equation}\label{eq:reducible_error}
  \text{Reducible error}=\underbrace{\frac{1}{|\testset|}\sum_{i \in \testset}\left(\hatf(\xval_i) - \yval_i\right)^2}_{\text{Prediction error}} - \frac{1}{|\testset|}\underbrace{\min_{\widetilde{\scoref} \in \Fiso}\sum_{j \in \testset}\left(\widetilde{\scoref}(\xval^{(j)}) - \yval^{(j)}\right)^2}_{\text{Irreducible error}}. 
\end{equation}
Since the true function has a weakly higher error than $\widetilde{\scoref}$, our estimate of reducible error has a negative bias. Thus, if anything, it understates the relative benefit of our algorithm. 

\textit{\textbf{Question 3:} What insights can our learned functions tell us about people's preferences?} To further illustrate the benefit of learning preferences as well as possible, we provide an example of how we can use learned functions to understand what people care about. One question of interest is the relative impact of an increase in the score of one criterion (all else constant) compared with another. For instance, does raising a service or rooms rating matter more, and how does this depend on other aspects of the stay? There is no natural analog of weights for multidimensional isotonic functions. Instead, we capture the relative effects of a marginal change in criterion score by graphing the learned function for different values of that criterion while holding the remaining criteria fixed. Error bars are constructed using the standard error of prediction error (as an upper-bound for that of reducible error). Please see \Cref{app:criteria_plots} for a description. 

\subsubsection{Results}
We begin by answering questions 1 and 2 regarding the prediction and reducible error measurements. \textit{In general, we find our algorithm to be slightly better than linear regression in terms of prediction error, but significantly better in terms of reducible error (at approximating the community-level function).} \Cref{fig:empirical_results} shows the prediction error for isotonic regression compared with linear regression. Error bars representing 95\% confidence intervals for prediction error are included based on the standard deviation of test error for both measures. The confidence intervals are computed by using the squared prediction error on the test set. Our algorithm has lower prediction error across all datasets, and, as indicated by the error bars, these differences are statistically significant. The benefit of our algorithm becomes more apparent when looking at reducible error in \Cref{fig:empirical_results}. \textit{For the group of all users, our algorithm results in over a 50\% reduction in reducible error compared with linear regression, and as much as a 69\% reduction.} In the context of the Tripadvisor dataset, this means our algorithm is far better at learning a collective function of what travelers care about (the estimation problem) and only marginally better at predicting overall scores. 

\begin{figure}[tbp]
    \centering
    \includegraphics[width=\linewidth]{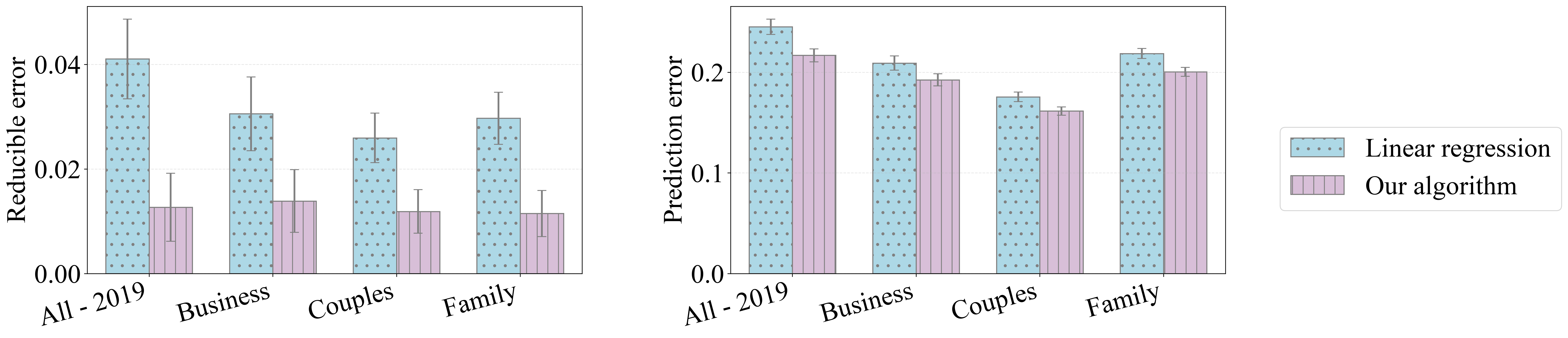}
    \caption{Results of our algorithm relative to linear regression on Tripadvisor rating data. Error bars represent 95\% confidence intervals and are computed via the standard deviation of prediction error.}\label{fig:empirical_results}
\end{figure}

\begin{table}[tbp]
\caption{Empirical results on Tripadvisor dataset. Standard errors are computed with the standard error of the mean for the prediction error. $\Delta$ (\%) denotes the relative error reduction of our algorithm over linear regression.}\label{tab:trip_advisor_results}

\centering
\begin{tabular}{lccrccrr}
\toprule
 & \multicolumn{3}{c}{Prediction error} & \multicolumn{3}{c}{Reducible error} \\
\cmidrule(lr){2-4} \cmidrule(lr){5-7}
 & Lin. Reg. & Our algo & $\Delta$ (\%) & Lin. Reg. & Our algo & $\Delta$ (\%) \\
\midrule
All - 2019 & 0.245 (0.004) & 0.217 (0.003) & 11.6\% & 0.041 (0.004) & 0.013 (0.003) & \textbf{69.1\%} \\
Business travel & 0.209 (0.004) & 0.192 (0.003) & 8.0\% & 0.031 (0.004) & 0.014 (0.003) & \textbf{54.6\%} \\
Couples travel & 0.176 (0.002) & 0.161 (0.002) & 8.0\% & 0.026 (0.002) & 0.012 (0.002) & \textbf{54.2\%} \\
Family travel & 0.219 (0.003) & 0.200 (0.002) & 8.3\% & 0.03 (0.003) & 0.011 (0.002) & \textbf{61.3\%} \\
\bottomrule
\end{tabular}
\end{table}

The implications of these findings for practitioners and researchers are important. As indicated by our synthetic simulations (see \Cref{fig:synthetic}) and the bound on the error for linear models in \Cref{thm:CV} (a), our algorithm never does much worse than linear regression. This is supported by our empirical findings as well. However, the cost of changing methods can be high and not worth it for only a modest improvement. On this note, we suspect that if the goal is to predict overall scores or evaluations, linear regression may suffice, especially given very noisy data. To test how noisy the data is, a researcher can compute the irreducible error. This error provides a lower bound on the performance of our algorithm. Depending on how the irreducible error and the error of linear regression compare, the researcher can then decide if our algorithm is worth implementing. On the flip side, if the goal is to learn evaluator preferences as well as possible and apply this mapping for some downstream application (i.e., ranking papers), we suspect there could be substantial benefit in using this more general approach, as indicated by the significant reduction in reducible error on the Tripadvisor dataset. 

To answer question 3, we plot the values of the learned preference function for the 2019 dataset, fixing all but one criterion at 3 stars. Specifically, we look at the effects of the first three criteria: service, location, and value. The result is shown in \Cref{fig:varying_criteria}. Interestingly, we find that the curves generally have a concave shape. This is consistent with the intuition that one truly bad aspect of a stay can have a significant impact on the overall experience, but even really good service or location can only make an otherwise average experience marginally better. Additionally, the relative benefit of an increase in criteria scores (service, location, or cleanliness) depends on the level of that score. When changing from 1-2 stars, the service criterion has the largest impact; when moving from 4-5 stars, they all essentially have no effect. \textbf{A linear model, which assumes constant marginal changes, cannot capture either of these two insights.} This example illustrates that linear regression will always be constrained in its ability to tell us about evaluator preferences. Hence, if the goal is to learn what evaluators care about, it may be worth running our algorithm even if the error reduction is modest. 

\begin{figure}
    \centering
    \includegraphics[width=0.5\linewidth]{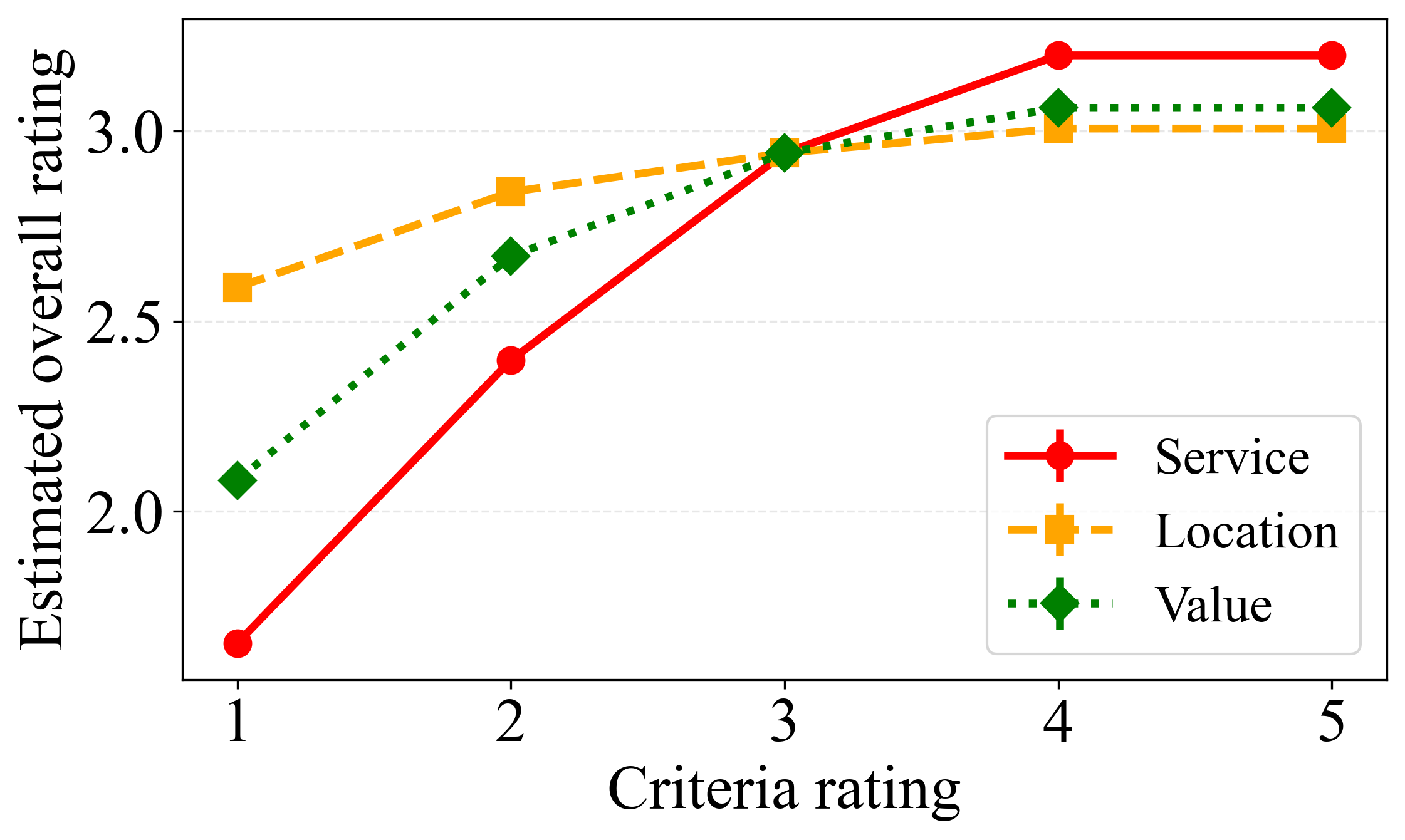}
    \caption{Impact of increasing different criteria scores on the overall rating of the learned preference function for the dataset of Tripadvisor reviews from 2019. Error bars are computed based on the standard error of prediction error (see \Cref{app:criteria_plots}). Other criteria scores are fixed at 3 out of 5 stars.}\label{fig:varying_criteria}
\end{figure}

\subsection{LLM and human peer review dataset}\label{sec:ICLR}
Increasingly, large language models are being considered in addition to or in place of human evaluators in areas such as medicine, hiring, and peer review. Given this fact, it is imperative that we understand how LLMs make evaluation decisions. 
To this end, we provide a case study of how one could use our methods to qualitatively analyze the internal consistency of LLM evaluation preferences and how they differ from human preferences. To our knowledge, we are the first to study both of these topics through the lens of learned preference functions. While there is existing work on using human preference functions to create more aligned LLM models \citep{liu_hd-eval_2024}, and modeling preference discrepancies between humans and LLMs \citep{polo_bridging_2025}, we are not aware of any current work that specifically studies the differences in LLM and human preference functions, nor how consistent the criteria-to-overall score mapping is between evaluations.~\\

\subsubsection{Data information}
We use data from \citet{yu_is_2026} containing human-written and AI-written peer reviews for papers from ICLR 2024. The AI reviews are generated using two widely used LLMs: GPT-4o and Llama 3.1 70b. We use three datasets: one for the data of human reviews, one of GPT reviews, and one of Llama reviews. Each dataset contains $~27$k reviews corresponding to the same set of papers (see \Cref{tbl:data_ICLR_summary} for the dataset descriptions). \Cref{fig:tripadvisor_iclr} (b) illustrates an example review. Reviewers provide criteria assessments for presentation, contribution, and soundness. These each take on integer values between $1$ (poor) and $4$ (excellent). Additionally, the reviewers provide an overall rating or recommendation, which can take on values between $1$ (strong reject) and $10$ (strong accept).  

\begin{table}[tbp]
    \centering
    \begin{tabular}{lcccc}
        \toprule
        Dataset & \# criteria ($\numcriteria$) & \# values ($\numvals$) & score range & \# samples ($\numsamp$) \\
        \midrule
        \multicolumn{5}{l}{\textit{ICLR peer reviews}} \\
        \quad Human evaluators             & \multirow{3}{*}{3} & \multirow{3}{*}{4} & \multirow{3}{*}{1--10} & 27,857 \\
        \quad GPT-4o   &                    &                    &                       & 27,857 \\
        \quad Llama 3.1 70b    &                    &                    &                       & 27,834 \\
        \bottomrule
    \end{tabular}
    \caption{Summary of ICLR dataset attributes.}\label{tbl:data_ICLR_summary}
\end{table}

\subsubsection{Methods}

\textit{\textbf{Question 1}: How consistent are the preferences of LLMs compared with human evaluators?} One of the key differences in using human versus LLM evaluators is that human evaluators tend to have their own individual preferences, while the same LLM makes all evaluations. Thus, one would expect LLM preferences to be more consistent. We investigate to what extent this is true, and how the consistency varies based on the choice of model (GPT or Llama). To do so, we measure the level of irreducible error given by equation \Cref{eq:irreducible_error} on the full dataset. 

In order to estimate uncertainty in the reported statistic, we use bootstrapping. This helps us avoid computing the error minimizing function and testing for significance on the same data, which would be ``double dipping.'' For each resample, we sample with replacement from the whole dataset, and then recompute the irreducible error. In total, we compute $1000$ resamples, and then report the $95\%$ confidence interval as the $2.5$th and $97.5$th percentiles of the bootstrap distribution.~\\ 

\textit{\textbf{Question 2}: How much do the preferences of humans and LLMs differ? What are some ways in which they are different?}
As LLMs are being considered (or used) as evaluators for applications in medical care, peer review, and hiring (among others), to complement or replace human judgment. Therefore, it is important to understand how much human and LLM judgments differ. We do this by first estimating the preference functions for GPT-4o, Llama 3.1 70b, and human evaluators on the full dataset. Then, we compute the expected error between the outputs of the two preference functions. We set the expectation, denoted by $P$, to be the empirical distribution over criteria scores for the test set of human evaluations to reflect our definition of risk in \Cref{eq:expected_risk}. As this difference reflects the disagreement of LLM and human preferences, we refer to it as preference misalignment. Let the human and LLM preference function $\hatf_{\mathtt{Human}}$ and $\hatf_{\mathtt{LLM}}$, respectively, we compute
\begin{equation}\label{eq:dist_criteria}
    \text{Preference misalignment} = \E\|\hatf_{\mathtt{Human}}-\hatf_{\mathtt{LLM}}\|_{L_2(P)}^2.
\end{equation} 
In questions of moral decision-making (e.g., kidney or food bank allocation), minimizing this deviation may be critical to ensure such decisions truly reflect human values. 
To understand ways in which they are different, we create plots that vary one of the three criterion scores, while fixing the other criterion scores at 2 or 3 (the most common scores). As with our Tripadvisor dataset, we use the standard error of prediction error to compute error bars for the plot. 

\subsubsection{Results}
We present the measurements of both qualities in \Cref{fig:iclr_results}. 
Our first quantity---irreducible error---captures the level of noise, and tells us how consistent the preferences of human evaluators or a given LLM are across reviews. The second quantity---preference misalignment---concerns the expected discrepancy between human and LLM evaluation preferences. Overall, we find GPT-4o to be significantly better than Llama 3.1 70b on both metrics. The irreducible error or noise level of GPT is nearly a quarter that of human evaluators, while the irreducible error of Llama and Human reviews is comparable. Additionally, the preferences of GPT are significantly closer to human ones as measured by our distance function. As illustrated by \Cref{fig:iclr_results}, the measures of preference alignment for GPT and Llama differ by a factor of more than five. Though this analysis is not comprehensive enough to draw any conclusions, our case study provides interesting insights, suggesting that preference alignment and consistency may vary widely depending on the specific LLM. 

\begin{figure}[tbp]
    \centering
    \includegraphics[width=0.8\linewidth]{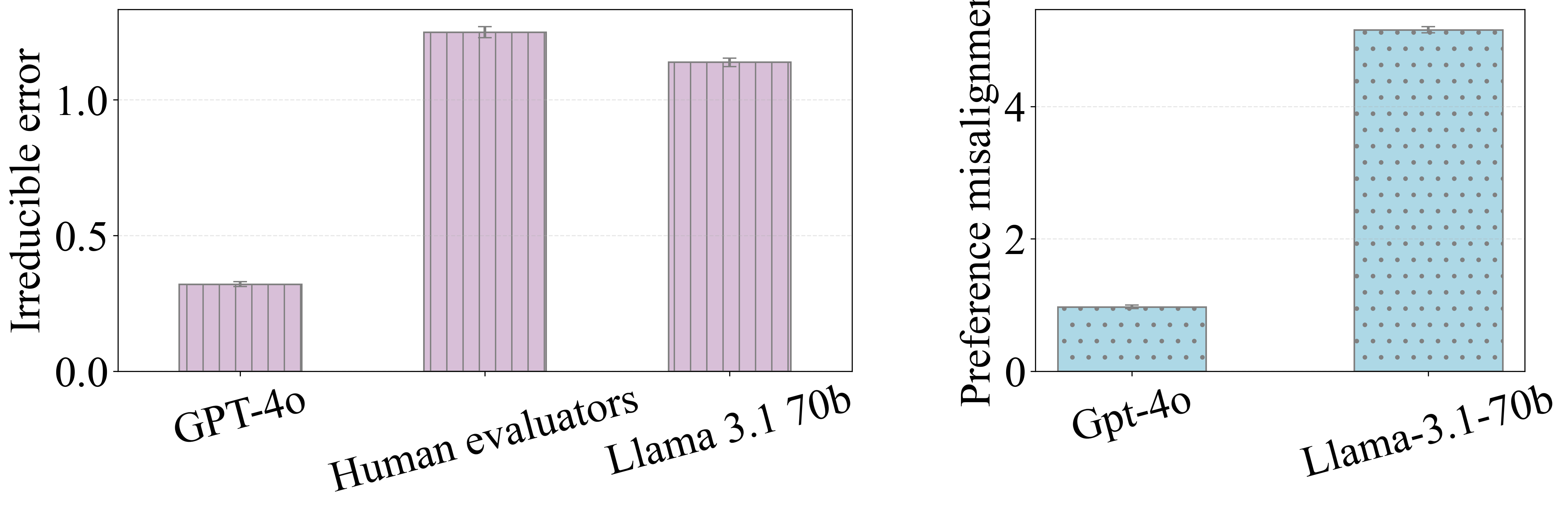}
    \caption{Consistency and preference alignment for LLM (and human) reviewers on the ICLR dataset. Error bars for irreducible error reflect 95\% bootstrap confidence intervals (percentile method); error bars for preference alignment are computed via the standard error of the mean.}\label{fig:iclr_results}
\end{figure}

\begin{table}[tbp]
\caption{Level of preference consistency and alignment by evaluator type. Standard errors are in parentheses for preference misalignment. 95\% bootstrap CI are in brackets for irreducible error.}
\label{tbl:results_noise}
\centering
\begin{tabular}{lccc}
\toprule
 & GPT-4o & Llama 3.1 70b & Human evaluators \\
\midrule
Irreducible error & 0.321 [0.311,0.330] & 1.139 [1.122,1.153] & 1.251 [1.228,1.267] \\
Preference misalignment & 0.971 (0.014) & 5.157 (0.024) & --- \\
\bottomrule
\end{tabular}
\end{table}

Finally, to understand ways in which human and LLM preferences differ, we plot the estimated preference function fixing the two criterion scores (soundness and contribution) and varying the third criterion (presentation). As shown in \Cref{fig:varying_criteria_LLM}, we find that the preference function for GPT is closer to the human one, consistent with our findings for preference misalignment. Additionally, we find that the preferences of GPT (and to a lesser extent, Llama) are concave when the other criterion scores are 2, and convex when they are 3. In contrast, the human preference function is close to linear in both plots. Concavity signal diminishing returns to an increase in the criterion (i.e., an increase in a score can only help the paper so much) while convexity signals increasing returns (i.e., a very good score leads to a larger benefit). While it is possible to know for certain, given the truncated criterion score range for GPT, it is possible that the importance of presentation is qualitatively different for GPT than for human reviewers. Though the change from concavity to convexity is especially pronounced when varying the contribution criteria, a similar pattern holds when varying the other two criterion (see \Cref{app:criteria_plots}). 

\begin{figure}
    \centering
     \includegraphics[width=.7\linewidth]{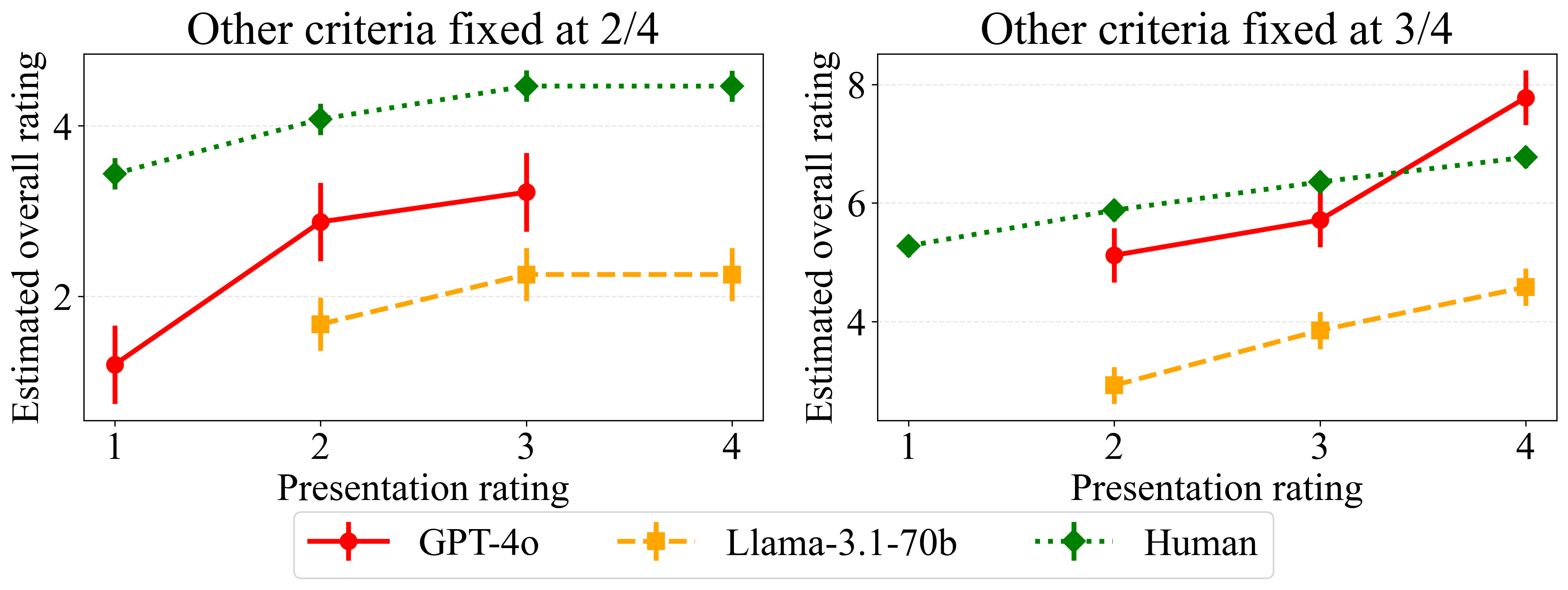}
    \caption{Impact of increasing presentation on the overall paper rating. We omit estimates for all criterion scores not seen in training. Error bars are computed based on the standard error of prediction error (see \Cref{app:criteria_plots}). }\label{fig:varying_criteria_LLM}
\end{figure}

\section{Peer review case study}\label{sec:iclr_case_study}
In social decision-making contexts, it is often many individuals who provide evaluations rather than a single person or entity (e.g., an algorithm or LLM). In medicine, there are multiple clinicians, each assessing different patients; in the judicial system, there are multiple judges, each tasked with determining sentences for different defendants; in peer review, many different evaluators assess conference papers. In these settings, we call the learned mapping the ``community preference function'' to underscore that it captures a group of individuals with heterogeneous beliefs. The purpose of this section is two-fold: (1) to illustrate how we can use the community preference function in practice to understand what the group of evaluators care about, and (2) to alleviate unfairness due to preference variability across evaluators and contexts. We will do so by considering the specific application of academic peer review, using data from the ICLR 2026 review process; however, the questions we ask and methods we use apply to the other aforementioned settings (e.g., judicial and medical decision-making) as well. 

\subsection{Dataset and calibration}\label{sec:ICLR_data_info}
We analyze the way ICLR reviewers aggregate different criteria into overall scores. In total, there were 75,859 reviews across 19,494 submissions for the ICLR 2026 review process. We also did this same analysis for ICLR 2025 with similar results. Each review contains a score for three criteria––presentation, soundness, and contribution––in addition to an overall rating. The criteria are rated on a scale from 1 (poor) to 4 (excellent), and the overall ratings take on values in the $\{0,2,4,6,8,10\}$ scale, with higher scores reflecting better papers.~\\

\noindent \textit{A note on calibration}. Before discussing our methods in more detail, we will provide some context for the ratings distribution and what constitutes a large change in the overall rating. In general, ratings crowd around the acceptance threshold. As such, even small changes in ratings can be quite consequential. To make this concrete, we model acceptance as taking the submissions with the highest mean overall rating, with the cutoff set to reproduce ICLR 2026's acceptance rate of 29\% among the 18,789 submissions that ultimately received an accept or reject decision (this excludes desk-rejected submissions). Under this model, the decision for about 30\% of submissions would flip if a single randomly chosen reviewer changed their overall score rating by one rating level (2 points on the 0-10 scale), and 38\% sit within one reviewer's single-level change of the cut-off.

\subsection{Understanding community preferences}
We will begin with the task of using the learned community preference function to provide qualitative information on what matters to reviewers. For settings with more than two criteria, we cannot visualize the preference function. Furthermore, as our function class is non-parametric, one cannot look to specific parameter values for insights. To address this, we present a few common qualitative questions and methods for answering them.~\\

\subsubsection{Which criteria matter most to reviewers?}

To quantify how much a given criterion matters, we compute the Shapley value of each criterion alongside another method that provides similar results. The Shapley value tells us how informative the criterion score is for predicting the overall score. In detail, to compute the value $v$ of a subset $S \subseteq [\numcriteria]$, we fit a function of the subset of criteria on the training data. The value of the subset, denoted by $v(S)$, is the negative prediction error of that function on the test set. Given a set of $\numcriteria$ criterion, the Shapley value for criterion, $s_i$, $i$ is
\[s_i = \sum_{S \subset [d]\setminus \{i\}}\frac{|S|!(d-|S|-1)!}{d!}\left[v(S \cup \{i\})- v(S)\right].\]
A high Shapley value indicates that a given criterion is highly informative for predicting a paper's overall score. Conversely, a lower value indicates that the given criterion is less informative, for instance, if that criterion is highly correlated with many other criteria. The sum of the Shapley values is the total predictive power the criteria have for overall paper ratings. Namely, it is the difference in error between always predicting the mean rating and using a function of the criteria scores. Thus, we can interpret the fraction $s_i/\sum_{i \in [\numcriteria]}s_i$ as the percent of predictive power that comes from the $i^{th}$ criterion.~\\ 

\noindent \textit{Results}. As shown in \Cref{fig:criteria_importance}, we find that contribution is the largest driver of the overall rating, followed by soundness. Presentation consistently comes last. \textbf{Indeed, nearly half (47\%) of the predictive power of the criteria scores comes from contribution alone.} 

\begin{figure}[tbp]
    \centering
    \includegraphics[width=0.45\linewidth]{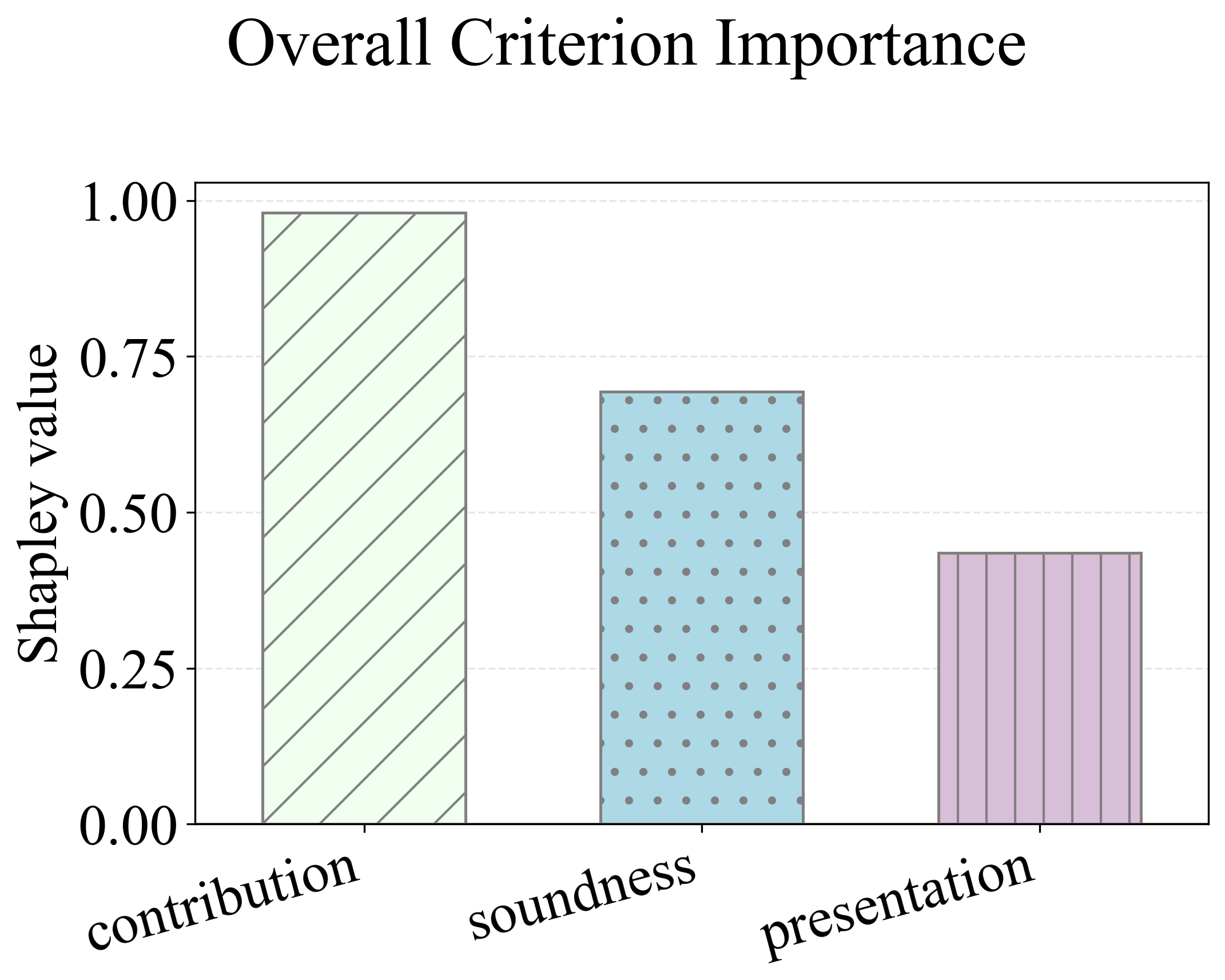}
    \caption{Criterion importance on the full ICLR 2026 data as measured by the Shapley value of out-of-sample predictive contribution. We find similar results for ICLR 2025.}
    \label{fig:criteria_importance}
\end{figure}

\subsubsection{Does the value of improving one criterion depend on the other criteria?}\label{sec:criteria_interaction}
Instead of computing an overall marginal impact weight, we ask how the marginal effect of a criterion varies based on the values of the other criteria. In the graph below, we measure the average effect of a $+1$ increase in the criterion from $1$ to $2$, $2$ to $3$, and $3$ to $4$ while keeping the values of the remaining criteria fixed.~\\

\noindent \textit{Results}. Our results are shown in \Cref{fig:criterion_marginal_effect_heatmap}. We find that reviewers grade on a curve that gets steeper as submissions get better. In the learned community preference function, when the other criteria are rated 1 or 2, a ``+1'' improvement on one criterion increases the overall rating by only about 0.4 points. By contrast, when the other criteria are rated 3 or 4, a ``+1'' improvement increases the overall rating by triple the amount, about 1.2 points. Hence, our analysis indicates that effects of criteria interact with one another: the benefit of a higher criterion score is magnified when other aspects of the paper are higher-quality. 

\begin{figure}[tbp]
    \centering
    \includegraphics[width=0.5\linewidth]{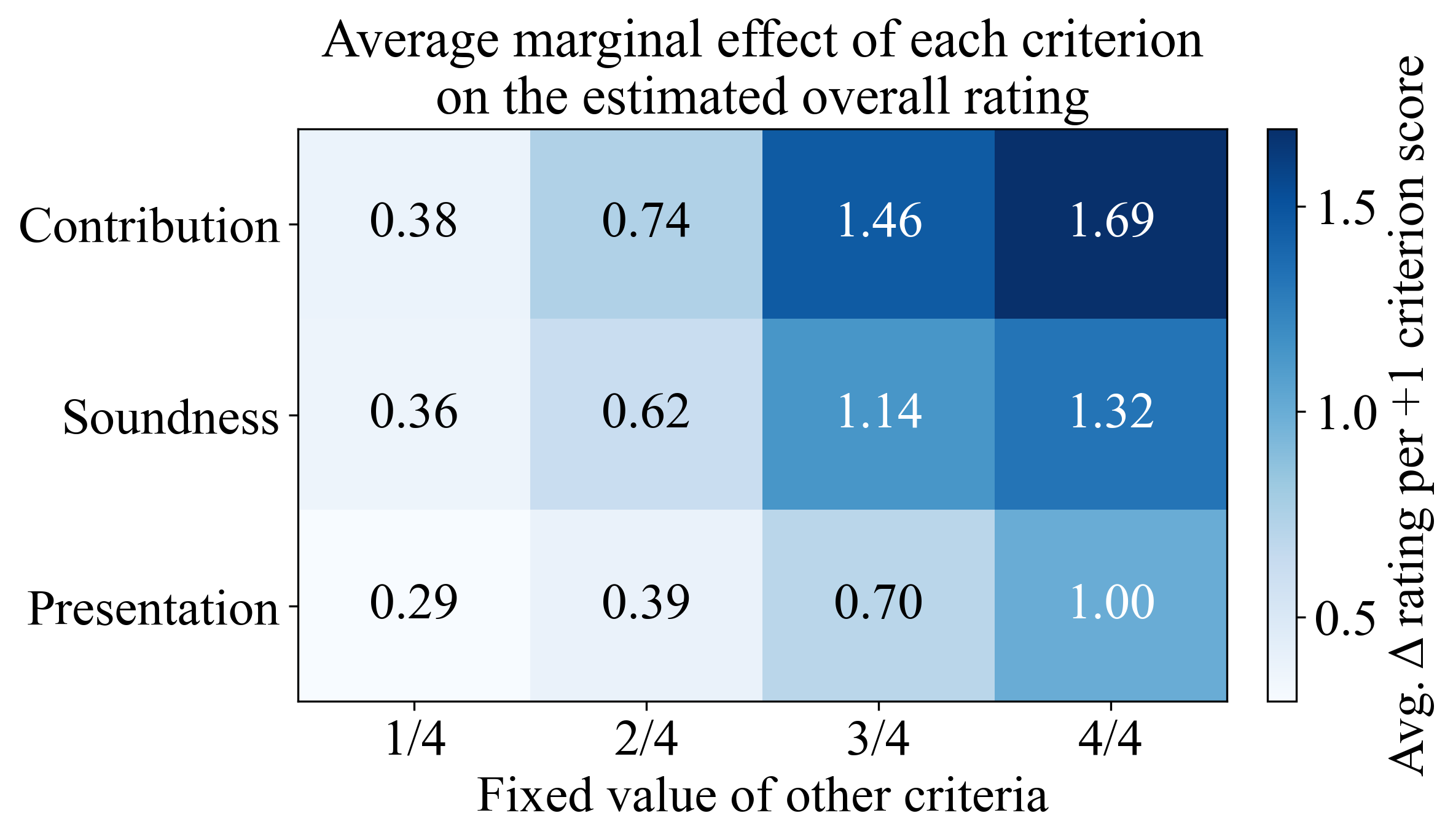}
    \caption{Average change in the overall rating per $+1$ on each criterion, as a function of where the other criteria sit. The same improvement is worth about $3$ times more on an already-strong submission.}
    \label{fig:criterion_marginal_effect_heatmap}
\end{figure}

\subsubsection{Is the aggregation just a weighted sum of individual criteria?}
A weighted sum would correspond to a linear (affine) mapping from criteria scores to the overall rating. Recall that our algorithm does not fit the isotonic function on its own: its objective adds a penalty that pulls the function toward the best-fitting linear model, controlled by a weight (a regularization parameter). A weight of zero lets the function be fully flexible, subject only to monotonicity; an infinite weight forces it to be exactly linear; intermediate weights interpolate between the two. If the model is truly linear, we expect an infinite weight to do (weakly) better than the others, and the weight of zero to do the worst, as any nonlinearities in the estimator do not exist in the actual reviewer preferences.~\\ 

\noindent \textit{Results}. Cross-validation selects a weight of zero (the most flexible, least-linear end). Moreover, the fully linear model (infinite weight) has the worst held-out error of every weight tried. These pieces of evidence suggest that reviewers’ aggregation is reliably not a weighted sum. \Cref{sec:criteria_interaction} shows a specific way in which reviewer preferences depart from linearity; the effect of an increase in criterion scores depends (quite substantially) on the other criterion scores.

\subsection{Commensuration bias}
To start, we will go into the problem of preference variability or commensuration bias in peer review in more detail before introducing how we help to alleviate this issue. To understand why this form of bias can be problematic, consider the fact that each paper's evaluation (and thus its accept/reject decision) depends upon the preferences the reviewer applies. If these preferences vary based on the reviewer and context, it can result in different overall evaluations for the paper, and, ultimately, acceptance decisions. \textit{As such, the acceptance of a paper is dependent upon a function of the particular reviewer that they are assigned.} This issue is magnified by the fact that many papers are clustered around the reject/accept border (see \Cref{sec:ICLR_data_info}); as such, very small differences in preferences across reviews can lead to a large number of swaps in paper decisions. Given these issues, in the past, program chairs have contacted area chairs handling reviews with large commensuration biases so that they can keep a closer watch on those reviews. 

Concretely, we define commensuration bias to be the absolute difference between a review’s overall rating and the rating implied by that review’s own criteria scores under the community preference function. The implied rating is a continuous value (it is not restricted to the $\{0,2,4,6,8,10\}$ grid), so this difference can take any value in the range $[0,10]$. Hence for review $i \in [\numsamp]$, criterion scores and overall evaluation $(\xval_i, \yval_i)$, and fitted function $\hatf$ on the full review dataset:
\[\text{Commensuration bias}_i = |\hatf(\xval_i) - \yval_i|. \]
We provide a per-review file with this value for every review, sorted in decreasing order available \href{https://github.com/madelineceli13/evaluator-preference-algorithm/blob/main/data/case_study/debiased_reviews_2026.csv}{here}. 

\subsubsection{How much commensuration bias is there?}
To understand the extent of bias, we look at the distribution of commensuration bias across the set of reviews, presented in \Cref{fig:commensuration_bias}. This helps to give us a sense of the average amount of bias in a typical review, and how much it varies across reviews. Recall that on ICLR's 0-10 scale, one rating level is $2$ points. The median bias is about $0.8$ points on the $0-10$ scale, and the mean is similar ($0.9$), corresponding to about half of a rating level. However, $1\%$ of reviews have a bias of more than 1.5 levels (3 points), and $0.1\%$ differ by more than two levels (4 points). 

\begin{figure}[tbp]
    \centering
    \includegraphics[width=0.5\linewidth]{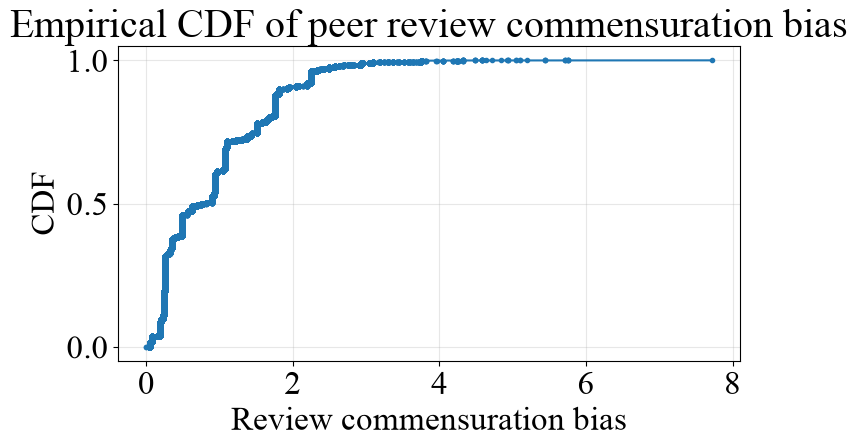}
    \caption{Empirical CDF of commensuration bias across all $75,859$ ICLR 2026 reviews}
    \label{fig:commensuration_bias}
\end{figure}

\subsubsection{To what extent does commensuration bias impact paper decisions?}
Finally, we analyze the potential real-world effects of commensuration bias on which papers are accepted or rejected. To model this, we follow the approach in \citet{noothigattu_loss_2021}: we consider a simplified review process where the papers with the top mean overall rating are accepted. The decision threshold is chosen to reproduce the observed acceptance rate in the ICLR review process (29\%). We compare this simplified review process with an unbiased one; instead of using the provided overall rating, we use the rating implied by that review’s own criteria scores under the community preference function.

\textit{Under this analysis, $16\%$ of paper accept/reject decisions would be altered in the absence of commensuration bias.} Admittedly, real decisions are made by area chairs rather than by a strict mean threshold, so this is a calibrated approximation of the decision boundary, not the exact committee process. Moreover, there can be valid reasons a reviewer deviates from the community preference function, such as a consideration not captured in the decision-making criterion. Nevertheless, the scale of the number of papers for which their outcome is changed underscores the fact that preference variability across reviews is likely to have significant stakes in peer review processes. 

\section{Conclusion}\label{sec:conclusion}
The problem of learning evaluator preferences is important for applications from peer review to business to medicine. Many researchers assume (extensions of) linear preferences, and we prove that when this assumption does not hold, there can be significant consequences for many outcomes practitioners care about: learning preferences, ranking individuals or items, and comparing criteria importance. We focus specifically on one of the most frequent assumptions---linearity---and ask if there is a learning algorithm that can do better. To address this, we develop an algorithm that can learn any type of isotonic preference function and performs provably just as well as linear regression (up to constant factors). To conclude, we remark on a few directions of future research:~\\ 
\begin{itemize}[itemsep=0.5em]
    \item \textit{Improving predictive power by including non-quantitative features}. One of our findings was that there is a substantial gap between prediction and reducible error on our real-world datasets, implying that a large amount of variability cannot be explained by any function of the criteria scores. Therefore, to meaningfully decrease prediction error, we may need to include more information. One source of additional information is non-quantitative inputs (i.e., free-form review summaries). In Tripadvisor ratings, users provide a summary of their travel experience alongside a title describing their say. In peer review, reviewers often provide a paragraph summary of the paper. How can we systematically leverage this and other similar free-form language data to improve predictive accuracy?  
    \item \textit{Interpretability of the preference function}. Even if we learn the best possible (collective) preference function, additional work is needed to make it as useful as possible for practitioners. In many settings, it is unrealistic to provide non-technical practitioners with the preference function and expect them to trust its outputs, especially if it is being used in moral decision-making settings. Thus, it may be advantageous to develop accessible methods that convey properties of the function. Some of the work in \Cref{sec:iclr_case_study} addresses this, for instance, computing Shapley values to assess (relative) criterion significance. Yet many questions remain unanswered. In economics, it is often of interest to understand whether aspects or goods are complements or substitutes, meaning that having more of one increases the value of the other. Is there a way to measure the complementarity of criteria? Furthermore, when the preference function is nonlinear, how can we convey the dominant types of nonlinearity? 
    \item \textit{Using our methods to understand the evaluation preferences of LLMs}. LLMs are being increasingly used in practice to provide preliminary assessments for applications such as medical care, hiring, and peer review. Our empirical work suggests key aspects, such as preference alignment and consistency, may vary widely depending on the specific LLM. However, additional work is needed to understand how robust our results are to the choice of statistic and the determinants of misalignment (i.e., why and how preferences may differ).~\\
\end{itemize}

In sum, we hope that our approach can help researchers and practitioners to learn preferences more reliably and faithfully, improving outcomes in practical settings, and inspiring related directions for future research.

\section*{Acknowledgments} 
This work was funded through grants NSF  1942124 and ONR N000142512346. We thank R. Ravi for very helpful discussions. We thank Jacqueline Lane for thoughtful discussions and for letting us run the algorithms on a dataset of aerospace and robotics design evaluations.

\bibliography{references}

\appendix

~\\~\\\noindent \textbf{\Large Appendices}
\section{Supplementary material for Section \ref{sec:problems_with_linearity}}\label{appendix:problems_with_linearity}
\subsection{Proof of Proposition \ref{prop:bias_genint_functions}}\label{app:estimation_functions}

For our proof, it is convenient to express the preference function $\scoref: [\numvals]^{\numcriteria} \to [0,1]$ by a tensor $\Tst \in [0,1]^{\numvals^{\numcriteria}}$ defined such that for all $(i_1,\dots, i_\numcriteria) \in \calX$, $\Tst_{i_1,\dots, i_\numcriteria}=\scoref(i_1,\dots, i_\numcriteria)$ and then to prove the analogous result for the tensor versions of the estimator and true preference function. Let $\Ciso^\numcriteria$ be the set of isotonic tensors representing all functions in $\Fiso$ and $\Cgenint^\numcriteria$ be the set of tensors representing all functions in $\Fint$. In particular,  
\[\Ciso^\numcriteria := \{\tensor \in [0,1]^{\numvals^\numcriteria}|\tensor_{i_1,\dots, i_\numcriteria} \ge \tensor_{i_1',\dots, i_\numcriteria'} \text{ whenever }i_k \ge i_k' \forall k \in [\numcriteria]\}.\]
As for the definition of $\Fint$, we define $\Cgenint^\numcriteria$ to be the set of tensors that can be induced by a specific equation. For tensors the analog of the equation for functions in $\Fint$, Equation \Cref{eq:genint_class}, replaces the set of non-decreasing transformations over inputs with a set of non-decreasing vectors. 

Let the set of transformations on the outputs be captured by the set of vectors $\{a^{(k)}\}_{k = 1}^{\numcriteria}$ with increasing coordinates. As for the definition of $\Fint$ let the coefficients of the criteria interactions be denoted by $\{\gamma_S\}_{S \in \mathcal{P}([\numcriteria]}$ and $\genlinf$ be a non-decreasing transformation. We say that $\tensor$ is in a member of tensor analog of $\Fint$, denoted as $\Cint^\numcriteria$, if for some $\{a^{(k)}\}_{k = 1}^{\numcriteria}$, $\{\gamma_S\}_{S \in \mathcal{P}([\numcriteria]}$, and $\genlinf$:

\begin{equation}\label{eq:tensor_lin_int}
    \tensor_{i_1, i_2, \dots, i_\numcriteria}=\genlinf \left(\sum_{S \in \mathcal{P}([d])}\gamma_S\prod_{k \in S}a^{(k)}_{i_k}\right) \text{ for all pairs }(i_1,i_2, \dots, i_\numcriteria) \in [\numvals]^{\numcriteria}.
\end{equation}

By construction, for any two functions $\scoref, \hatf$ mapping $[\numvals]^{\numcriteria} \to [0,1]$ with tensor representations $\Tst, \tensor$ where $P$ is the uniform distribution, $\frac{1}{\numvals^\numcriteria}\|\tensor-\Tst\|_F^2 =\|\scoref - \hatf\|_{L_2(P)}^2$ where $P$ is the uniform distribution. Using this fact and our definitions of the relevant classes of tensors are defined, we can note that \Cref{prop:bias_genint_functions} is equivalent to the \Cref{prop:bias_genint_tensors}. 
\begin{proposition}\label{prop:bias_genint_tensors}
    There exists some universal constant $\const$ such that for all $\numvals \ge 4$, $\numcriteria \ge 2$, there exists a tensor $\Tst \in \Ciso^\numcriteria$ such that
    \[\frac{1}{\numvals^\numcriteria}\inf_{\tensor \in \Cgenint^\numcriteria}\|\tensor-\Tst\|_F^2 \ge \const.\]
\end{proposition}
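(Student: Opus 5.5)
The plan is a blow-up argument. First I find a fixed small ordinal pattern that isotonic tensors can realize but no member of $\Cgenint$ can. Then I inflate that pattern to blocks of side $\Theta(\numvals)$, so that every way of choosing one representative index per block reproduces the forbidden pattern. A double-counting step then turns ``some comparison is violated in every copy'' into ``a constant fraction of all entries are wrong by a constant''.

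\emph{Step 1: reduction to two coordinates.} Take $\Tst$ to depend only on $(i_1,i_2)$. Fix the remaining coordinates $(i_3,\dots,i_\numcriteria)$. Any $\tensor\in\Cgenint^\numcriteria$, restricted to that slice, has the form
\[
\genlinf\bigl(\gamma_0'+\gamma_1' a_{i_1}+\gamma_2' b_{i_2}+\gamma_{12}' a_{i_1}b_{i_2}\bigr),
\]
where $a=a^{(1)}$ and $b=a^{(2)}$ are increasing vectors. This holds because grouping the terms of \eqref{eq:tensor_lin_int} by whether they contain $k=1$ and/or $k=2$ gives coefficients that depend only on the fixed coordinates. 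So it suffices to prove the $d=2$ statement for this bilinear class, with $\genlinf$ non-decreasing, and then average over slices.

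\emph{Step 2: the forbidden pattern (main obstacle).} I would look for a $4\times 4$ matrix pattern, which is where $\numvals\ge 4$ enters. The pattern is a set of strict inequalities $\Tst_{p}>\Tst_{q}$ among cross (incomparable) cells, with values in $\{0,\tfrac14,\tfrac12,\tfrac34,1\}$, consistent with isotonicity. It must be infeasible for every $\genlinf(h(a_i,b_j))$ with $h$ bilinear.

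Infeasibility would be shown by cases on the inner function.
\begin{itemize}
\item If $\gamma_{12}'=0$, then $h$ is additive. A strict pattern $\genlinf(h(p))>\genlinf(h(q))$ forces $h(p)>h(q)$, so a violation of double cancellation gives a contradiction. A typical instance is $T_{1,3}>T_{2,2}$ together with $T_{2,4}>T_{4,3}$ and $T_{4,2}>T_{3,4}$, arranged so that summing the implied inequalities yields $0>0$.
\item If $\gamma_{12}'\neq 0$, then $h=\gamma_{12}'(a_i+c)(b_j+e)+\text{const}$. I would split on the signs of $a_i+c$ and $b_j+e$ over the block. On a sign-constant sub-block, $\log|h-\text{const}|$ is additive, so $h$ is again an increasing or decreasing transform of an additive form, and double cancellation applies. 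When signs change inside the block, monotonicity of $h$ flips across the saddle point, and a second group of pattern inequalities is designed to rule this out.
\end{itemize}
I expect designing one pattern that defeats all sign configurations simultaneously to be the hardest step. I would first try to find one by a small LP or brute-force search over candidate $4\times4$ orderings, and then verify it by hand, case by case.

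\emph{Step 3: blow-up and counting.} Partition $[\numvals]$ into four consecutive intervals $J_1,\dots,J_4$, each of size at least $\lfloor \numvals/4\rfloor$. Set $\Tst_{i_1,\dots,i_\numcriteria}=P_{r,s}$ whenever $i_1\in J_r$ and $i_2\in J_s$, where $P$ is the pattern from Step 2; this $\Tst$ is isotonic.

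Fix a slice and any $\tensor\in\Cgenint$. For every choice of indices $i^{(1)}<\dots<i^{(4)}$ with $i^{(r)}\in J_r$, and similarly $j^{(s)}\in J_s$, the induced $4\times4$ submatrix of $\tensor$ lies in the bilinear class of Step 1. By Step 2 it violates some pattern pair $p,q$. Since $\Tst_p-\Tst_q\ge \tfrac14$, at least one of the two entries satisfies $|\tensor-\Tst|\ge \tfrac18$.

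Each entry belongs to at most a $\prod$-fraction $\le (\numvals/4)^{-2}$ of all such index choices, up to constants. Double counting therefore shows that at least $c\,\numvals^2$ entries per slice have error at least $1/64$. Summing over the $\numvals^{\numcriteria-2}$ slices gives
\[
\|\tensor-\Tst\|_F^2\ge \const\,\numvals^{\numcriteria},
\]
uniformly in $\tensor$, which proves \Cref{prop:bias_genint_tensors}.
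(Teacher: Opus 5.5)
\textbf{There is a genuine gap in Step 2.} Steps 1 and 3 are sound and essentially match the paper. The slice reduction is \Cref{lemma:genint_tensors}, and your double counting over all choices of block representatives is a valid substitute for the paper's partition into $\numvals^2/16$ disjoint copies of a fixed $4\times 4$ pattern.

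Step 2 is the entire content of the proposition, and you never carry it out. You exhibit no pattern and prove no infeasibility against $\genlinf(\gamma_0'+\gamma_1'a_i+\gamma_2'b_j+\gamma_{12}'a_ib_j)$. What you do offer has three problems:
\begin{itemize}
\item \emph{Your concrete instance is not isotonic.} Isotonicity gives $\Tst_{2,4}>\Tst_{4,3}\ge \Tst_{4,2}>\Tst_{3,4}\ge \Tst_{2,4}$, which is a cycle, so no matrix in $\Ciso^2$ realizes it. Also, summing its inequalities gives $\alpha_1>\alpha_3$, not $0>0$.
\item \emph{The sign-splitting plan is unlikely to close on a $4\times 4$ grid.} Suppose $a_i+c$ and $b_j+e$ both change sign between indices $2$ and $3$. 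Then every sign-constant block is $2\times 2$. Double cancellation needs three levels in each coordinate, so it gives nothing there, and all the work falls on the undesigned ``second group'' of inequalities.
\item \emph{Five levels are too few for the pattern that works.} Restricting values to $\{0,\tfrac14,\dots,1\}$ excludes the pattern below, which contains a strict chain of nine cells.
\end{itemize}

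\textbf{The missing idea: let the pattern force the signs instead of casing on them.} Write the inner function as $h(i,j)=g_j a_i+c_j$, with $g_j=\gamma_1'+\gamma_{12}'b_j$ and $c_j=\gamma_0'+\gamma_2'b_j$. Because $\genlinf$ is non-decreasing, a strict sandwich $\tensor_{ij}<\tensor_{kl}<\tensor_{k'l}<\tensor_{i'j}$ forces the same strict order on $h$. Comparing the outer gap with the inner gap then gives $g_j(a_{i'}-a_i)>g_l(a_{k'}-a_k)>0$.

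The paper (\Cref{lemma:ordering_interactions}) uses four such sandwiches:
\[
\tensor_{12}<\tensor_{21}<\tensor_{31}<\tensor_{22},\quad
\tensor_{22}<\tensor_{13}<\tensor_{23}<\tensor_{32},\quad
\tensor_{31}<\tensor_{22}<\tensor_{32}<\tensor_{41},\quad
\tensor_{23}<\tensor_{32}<\tensor_{42}<\tensor_{33}.
\]
Set $x=a_2-a_1$, $y=a_3-a_2$, $z=a_4-a_3$. The four sandwiches yield $g_2x>g_1y$, $g_2y>g_3x$, $g_1z>g_2y$ and $g_3y>g_2z$, and these chains themselves force $g_1,g_2,g_3,x,y,z>0$. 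Multiplying the first two inequalities gives $g_2^2>g_1g_3$; multiplying the last two gives $g_1g_3>g_2^2$, a contradiction. No case analysis is needed.

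The orderings are realized by the isotonic matrix
\[
\tfrac{1}{12}\begin{pmatrix}1&2&6&12\\3&5&7&12\\4&8&11&12\\9&10&12&12\end{pmatrix},
\]
whose constrained entries differ by at least $\tfrac1{12}$. With this lemma in hand, your Step 3 goes through unchanged, with $\tfrac14$ replaced by $\tfrac1{12}$.
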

\begin{proof}
We will prove this by constructing an explicit $\Tst$ such that the proposition holds. To do this, we will first solve the case when $\numcriteria = 2$ and then extend the result to cover the case when $\numcriteria \ge 3$. 

When $\numcriteria =2$, $\scoref \in \Fiso$ and $\hatf \in \Fint$ can be expressed as matrices $\Mst \in \Ciso^2$ and $\mat \in \Cgenint^2$, respectively. To construct such a counterexample, we will show that there is a set of inequalities over entries that $\Mst$ can satisfy but $\mat$ cannot. This implies that over that subset of indices, $\mat$ must incur some amount of error. We then construct a matrix such that the set of those inequalities has size of order $\numvals^\numcriteria$ to prove our bias result. The specific ordering and proof of the impossibility for any $\mat \in \Cgenint^2$ is the subject of \Cref{lemma:ordering_interactions} which we will prove at the end of this section.
\begin{lemma}\label{lemma:ordering_interactions}
    Assume that $\mat \in \Cgenint(\genlinf)$ for some non-decreasing $\genlinf$ and $\numvals=4$. Then the following inequalities cannot all hold:
    \begin{align}
        & \mat_{12} < \mat_{21} < \mat_{31} < \mat_{22}\\
        & \mat_{22} < \mat_{13} < \mat_{23} < \mat_{32}\\
        & \mat_{31} < \mat_{22} < \mat_{32} < \mat_{41}\\
        & \mat_{23} < \mat_{32} < \mat_{42} < \mat_{33}
    \end{align}
\end{lemma}
Taking \Cref{lemma:ordering_interactions} as given, we will construct a general $\numvals \times \numvals$ matrix $\Mst$ and showing that it satisfies 
\[\frac{1}{\numvals^2}\inf_{\mat \in \Cgenint^2}\|\mat-\Mst\|_F^2 \ge \const\]
for some universal constant $\const>0$. We will do this by using a $4 \times 4$ matrix, $\Ex$, defined as follows
\[\Ex = \frac{1}{12}\begin{pmatrix} 
1 & 2 & 6 & 12 \\
3 & 5 & 7 & 12 \\
4 & 8 & 11 & 12 \\
9 & 10 & 12 & 12
\end{pmatrix}\]
Note that $\Ex \in \Cbiso$. Additionally, it satisfies the inequalities in \Cref{lemma:ordering_interactions}. These inequalities for our specific choice of $\Ex$ evaluate to  
\begin{align*}
        & \Ex_{12} < \Ex_{21} < \Ex_{31} < \Ex_{22} & \implies 2/12 < 3/12 < 4/12 < 5/12 \\
        & \Ex_{22} < \Ex_{13} < \Ex_{23} < \Ex_{32} & \implies  5/12 < 6/12 < 7/12 < 8/12 \\
        & \Ex_{31} < \Ex_{22} < \Ex_{32} < \Ex_{41} & \implies 4/12 < 5/12 < 8/12 < 9/12 \\
        & \Ex_{23} < \Ex_{32} < \Ex_{42} < \Ex_{33} & \implies 7/12 < 8/12 < 10/12 < 11/12
\end{align*}
Define $\Mst$ to have entries $\Mst_{ij} = \Ex_{\lceil (4i)/\numvals \rceil, \lceil (4j)/\numvals \rceil}$ for all $(i,j) \in [\numvals]^2$ and assume that $\numvals/4$ is an integer (we will later show that this is without loss of generality). Therefore, we can write $\Mst$ as
\[\Mst = \begin{pmatrix}
A_{11} & A_{12} & A_{13} & A_{14}\\
A_{21} & A_{22} & A_{23} & A_{24}\\
A_{31} & A_{32} & A_{33} & A_{34}\\
A_{41} & A_{42} & A_{43} & A_{44}\\
\end{pmatrix}\]
where $A_{kl}$ for $(k,l) \in [4]^2$ are $n/4 \times n/4$ constant matrices where each entry is equal to $\Ex_{kl}$. We can partition $\Mst$ into $\numvals^2/16$ sub-matrices that are all equal to $\Ex$. For instance, let each sub-matrix be defined by having entry $i,j$ of each of the blocks $A_{11}, \dots, A_{44}$ (this means that the sub-matrix has rows $\{i,n/4 +i, 2n/4 +i, 3n/4+i\}$ and columns $\{j,n/4 +j, 2n/4 +j, 3n/4+j\}$). 

Let $\mat \in \Cgenint^2$ be any estimate for $\Mst$. By construction, any sub-matrix of a $\mat$ must also be in $\Cgenint^2$. Let $\Ex'$ be the estimate of $\Ex$ for any of the sub-matrices defined above. By \Cref{lemma:ordering_interactions}, it must be the case that one of the four inequalities does not hold on $\Ex'$. 

This implies that at least one of the ordinal relations between the entries of $\Ex$ must be flipped or hold with equality. Since each relevant entry (those in the first 3 columns) is at least $1/12$ apart, we have that $\|\Ex-\Ex'\|_F^2 \ge 2 \cdot \frac{1}{24^2}=\frac{1}{288}$. This is because squared error is minimized when the change is the same for both entries contained in the ordinal relation. Since the total change is at least $1/12$, they individually must change at least by $1/24$. 

Summing over the $\numvals^2/16$ sub-matrices equal to $\Ex$, we have that $\|\Mst - \mat\|_F^2 \ge \numvals^2/(16\cdot 288)$. As this inequality holds for all $\genlinf,a,b$ we have that for the $\Mst$ defined above 
\[\frac{1}{\numvals^2}\inf_{\mat \in \Cgenint^2}\|\Mst - \mat\|_F^2\ge \numvals^2/(16\cdot 288)\]
Note that the assumption that $\numvals/4$ is an integer is without loss of generality. If $\numvals/4$ was not an integer, we could make the same argument for the first $4\lfloor \numvals/4\rfloor$ rows and columns (corresponding to more than a quarter of all entries) and fill the remaining rows and columns with all $1$. This would give us error at least $\frac{1}{4}\numvals^2/(16\cdot 288)$. Setting $\const = \frac{1}{4}/(16\cdot 288)$, then the above inequality holds when $\numcriteria=2$ and $\numvals \ge 4$. 

In order to extend our result to cases when $\numcriteria>2$, we will use \Cref{lemma:genint_tensors} which tells us that when fixing $\numcriteria-2$ coordinates for a $\tensor \in \Cgenint^\numcriteria$, the resulting matrix must be in $\Cgenint^2$. This allows us to extend the bias result for $\numcriteria=2$ to the case when $\numcriteria>2$. 

\begin{lemma}\label{lemma:genint_tensors}
    Assume that $\tensor \in \Cgenint^\numcriteria$ and $\numcriteria \ge 3$. Then the matrix $\mat$ formed by fixing any $\numcriteria -2$ indices is in $\Cgenint^2$. 
\end{lemma}

Let $\Mst$ be a $\numvals \times \numvals$ matrix such \Cref{prop:bias_genint_tensors} holds (we have just proved the existence of one). Set $\Tst_{i_1,i_2,\dots, i_\numcriteria}= \Mst_{i_1,i_2}$ for any $(i_1,\dots, i_\numcriteria) \in [\numvals]^\numcriteria$ so that it is constant with respect to all but the first two indices. Consider the approximation of $\Tst$ by any $\tensor \in \Cgenint(\genlinf)$ for some non-decreasing $\genlinf$. For simplicity of notation, we will use $\tensor_{.,i_{3:\numcriteria}}$ to denote the matrix form by fixing the last $\numcriteria-2$ indices of the tensor. By the definition of the Frobenius norm and our construction of $\Tst$:

\begin{align*}
    \|\tensor - \Tst\|_F^2 & = \sum_{i_1=1}^{\numvals}\sum_{i_2=1}^{\numvals}\cdots \sum_{i_\numcriteria=1}^{\numvals}(\Tst_{i_1, i_2,\dots, i_\numcriteria} -\tensor_{i_1, i_2,\dots, i_\numcriteria})^2\\
    & = \sum_{i_3=1}^{\numvals}\sum_{i_4=1}^{\numvals}\cdots \sum_{i_\numcriteria=1}^{\numvals}(\Mst -\tensor_{.,i_{3:\numcriteria}})^2
\end{align*}

By our choice of $\Mst$ we have that for some universal constant $\const>0$, $\inf_{\mat \in \Cgenint^2}\|\Mst - \mat\|_F^2 \ge \const \numvals^2$. Thus, multiplying this by all $\numvals^{\numcriteria-2}$ different pairs $(i_3, \dots, i_\numcriteria) \in [\numvals]^{\numcriteria-2}$, we have our claimed result 
\[ \frac{1}{\numvals^\numcriteria}\|\tensor - \Tst\|_F^2 \ge \const.\]
\end{proof}

Now all that is left is to prove the remaining lemmas, \Cref{lemma:ordering_interactions} and \Cref{lemma:genint_tensors}. 
\begin{proof}[Proof of \Cref{lemma:ordering_interactions}]
Our goal is to show that the series of inequalities is not compatible with any set of parameters and non-decreasing function $\genlinf$. In this proof we will leverage the fact that the ordering over the entries has to be determined by the relations of the inputs to $\genlinf$. From the first two inequalities we learn something about the relative values of $(\gamma_1+\gamma_{12}b_1)$, $(\gamma_1+\gamma_{12}b_2)$, $(\gamma_1+\gamma_{12}b_2)$ such both inequalities can hold at the same time. The second two inequalities are constructed to be of the same form, but provide different requirements on the relative values of $\gamma_1+\gamma_{12}b_j$ for $j \in \{1,2,3\}$. We can show that these requirements are not compatible with a single set of parameters, allowing us to reach a contradiction.

Assume for the sake of contradiction that $\mat \in \Cgenint$ for some $\genlinf,a,b,c,\gamma_1, \gamma_2, \gamma_{12}$. Because our argument pertains to the ordinal relationship between entries, we can assume WLOG that $c=0$. 

The first inequality implies that for some $\genlinf,a,b, \gamma_1, \gamma_2, \gamma_{12}$

\begin{align*}
    \genlinf(\gamma_1 a_1+ \gamma_2 b_2+\gamma_{12} a_1b_2) & < \genlinf(\gamma_1 a_2+ \gamma_2 b_1+\gamma_{12} a_2b_1)\\
    & < \genlinf(\gamma_{1}a_3+ \gamma_{2}b_1+\gamma_{12} a_3b_1)\\
    & < \genlinf(\gamma_1 a_2+ \gamma_2 b_2+\gamma_{12} a_2b_2).
\end{align*}
As $\genlinf$ is non-decreasing, the ordinal relationship of the inputs is the same as the outputs. Consequently,
\begin{align*}
    \gamma_1 a_1+ \gamma_2 b_2+\gamma_{12} a_1b_2  & \gamma_1 a_2+ \gamma_2 b_1+\gamma_{12} a_2b_1\\
    & < \gamma_{1}a_3+ \gamma_{2}b_1+\gamma_{12} a_3b_1\\
    & < \gamma_1 a_2+ \gamma_2 b_2+\gamma_{12} a_2b_2.
\end{align*}
Since the difference between the outer two terms is larger than that between the inner two terms we have that
\begin{align*}
    & \left(\gamma_1 a_2+ \gamma_2 b_2+\gamma_{12} a_2b_2\right) - \left(\gamma_1 a_1+ \gamma_2 b_2+\gamma_{12} a_1b_2\right)\\
    & > \left(\gamma_{1}a_3+ \gamma_{2}b_1+\gamma_{12} a_3b_1\right) - \left(\gamma_1 a_2+ \gamma_2 b_1+\gamma_{12} a_2b_1\right).
\end{align*}
Rearranging, we have that 
\[(\gamma_1+\gamma_{12} b_2)(a_2-a_1) > (\gamma_1 +\gamma_{12}b_1)(a_3-a_2).\]
Following the same argument the remaining three inequalities imply:
\begin{align*}
    & (\gamma_{1}+\gamma_{12} b_3)(a_2-a_1)<(\gamma_{1}+\gamma_{12} b_2)(a_3-a_2)\\
    & (\gamma_{1}+\gamma_{12} b_1)(a_4-a_3)>(\gamma_{1}+\gamma_{12} b_2)(a_3-a_2)\\
    & (\gamma_{1}+\gamma_{12} b_2)(a_4-a_3)<(\gamma_{1}+\gamma_{12} b_3)(a_3-a_2)\\
\end{align*}
There are two cases. 

\textbf{Case 1:} Both $(\gamma_1 + \gamma_{12} b_3)$ and $(\gamma_1 + \gamma_{12} b_2)$ are non-zero. This allows us to combine the first two inequalities to get 
\[\frac{\gamma_1 + \gamma_{12} b_1}{\gamma_1 + \gamma_{12} b_2}(a_3-a_2)<a_2-a_1 < \frac{\gamma_1 + \gamma_{12} b_2}{\gamma_1 + \gamma_{12} b_3}(a_3-a_2)\]
Because $a_3 -a_2$ is non-negative by assumption, in order for this to hold, we then require that $\frac{\gamma_1+\gamma_{12} b_2}{\gamma_1+\gamma_{12} b_3}> \frac{\gamma_1+\gamma_{12} b_1}{\gamma_1+\gamma_{12} b_2}$. Likewise, we can derive from the last two inequalities that 
\[\frac{\gamma_1+\gamma_{12} b_2}{\gamma_1+\gamma_{12} b_3}(a_4-a_3)<a_3-a_2 < \frac{\gamma_1+\gamma_{12} b_1}{\gamma_1+\gamma_{12} b_2}(a_4-a_3)\]
which require that $\frac{\gamma_1+\gamma_{12} b_2}{\gamma_1+\gamma_{12} b_3}< \frac{\gamma_1+\gamma_{12} b_1}{\gamma_1+\gamma_{12} b_2}$, a contradiction. 

\textbf{Case 2}: If one of $(\gamma_1 + \gamma_{12} b_3)$ and $(\gamma_1 + \gamma_{12} b_2)$ are zero, we will show this implies many of the entries are tied. If $(\gamma_1 + \gamma_{12} b_j)=0$ for $j \in \{2,3\}$, we will show that the $j^{th}$ column of $\mat$ must be constant. For any $i \in [\numvals]$: 
\begin{align*}
    \mat_{ij} & = \genlinf(\gamma_1 a_i + \gamma_2 b_j + \gamma_{12}a_ib_j) \\
    & =  \genlinf((\gamma_1 +\gamma_{12}b_j)a_i + \gamma_2 b_j) \\
    & = \genlinf(\gamma_2 b_j)
\end{align*}
Since the value of $\mat_{ij}$ does not depend on $i$, this would imply that the jth column is constant. For $j=3$, we would have $\mat_{23} = \mat_{33}$, and for $j=2$ we would have $\mat_{12}=\mat_{22}$. Either case gives us a contradiction since those inequalities are strict in $\mat$. 

Since both cases lead to a contradiction, the four inequalities cannot hold simultaneously for some $\mat \in \Cgenint^2$. 

\begin{proof}[Proof of \texorpdfstring{\Cref{lemma:genint_tensors}}{Lemma \ref{sec:genint_tensors}}]
    By assumption $\tensor \in \Cgenint(\genlinf)$ for some function $\genlinf$. Therefore, $\tensor$ is induced by Equation \Cref{eq:tensor_lin_int} for some scalars $\{\gamma_S\}_{S \in \mathcal{P}([\numcriteria])}$, non-decreasing vectors $a^{(1)}, \dots, a^{(\numcriteria)} \in \R^\numvals$, and non-decreasing function $\genlinf$. 
    
    WLOG we will fix $i_3, \dots, i_{\numcriteria}$ (the same arguments hold for fixing any $\numcriteria-2$ values of the criteria scores). Consider the matrix $\mat = \tensor_{.,i_{3:\numcriteria}} \in \R^{\numvals \times \numvals}$. Observe that we can re-write the input for $\genlinf$ for an arbitrary set of indices $(i_1, i_2) \in [\numvals]^2$ as: 
    
    \begin{align*}
        \sum_{S \in \mathcal{P}([d])}\gamma_S\prod_{k \in S}a^{(k)}_{i_k}  & = \sum_{S \in \mathcal{P}([d]): |S\cap \{1,2\}|=0}\gamma_S\prod_{k \in S}a^{(k)}_{i_k} + \sum_{S \in \mathcal{P}([d]): 1 \in S, 2 \notin S}\gamma_S\prod_{k \in S}a^{(k)}_{i_k} \\
        & + \sum_{S \in \mathcal{P}([d]): 1 \notin S, 2 \in S}\gamma_S\prod_{k \in S}a^{(k)}_{i_k} + \sum_{S \in \mathcal{P}([d]): 1 \in S, 2 \in S}\gamma_S\prod_{k \in S}a^{(k)}_{i_k}\\
        & = \left(\sum_{S \in \mathcal{P}([d]): |S\cap \{1,2\}|=0}\gamma_S\prod_{k \in S}a^{(k)}_{i_k}\right) + a^{(1)}_{i_1}\left(\sum_{S \in \mathcal{P}([d]): 1 \in S, 2 \notin S}\gamma_S\prod_{k \in S\setminus \{1\}}a^{(k)}_{i_k}\right)\\
        & + a^{(2)}_{i_2}\left(\sum_{S \in \mathcal{P}([d]): 1 \notin S, 2 \in S}\gamma_S\prod_{k \in S\setminus \{2\}}a^{(k)}_{i_k}\right)\\
        & + a^{(1)}_{i_1}a^{(2)}_{i_2}\left(\sum_{S \in \mathcal{P}([d]): 1 \in S, 2 \in S}\gamma_S\prod_{k \in S\setminus \{1,2\}}a^{(k)}_{i_k}\right)
    \end{align*}
Note that all components in parenthesis are constant with respect to $i_1$ and $i_2$. For each element $S$ of the power set $\{1,2,(1,2), 0\}$ we see that there is a constant that multiplies $\prod_{k \in S}a_{i_k}^{(k)}$. Set $a = a^{(1)}$, $b = b^{(2)}$ and the constants $c, \gamma_1, \gamma_2, \gamma_{12}$ to be the values in parenthesis 
\begin{align*}
    & c = \sum_{S \in \mathcal{P}([d]): |S\cap \{1,2\}|=0}\gamma_S\prod_{k \in S}a^{(k)}_{i_k} \\
    & \gamma_{1} = \sum_{S \in \mathcal{P}([d]): 1 \in S, 2 \notin S}\gamma_S\prod_{k \in S\setminus \{1\}}a^{(k)}_{i_k} \\
    & \gamma_{2} = \sum_{S \in \mathcal{P}([d]): 1 \notin S, 2 \in S}\gamma_S\prod_{k \in S\setminus \{2\}}a^{(k)}_{i_k}\\
    & \gamma_{12} = \sum_{S \in \mathcal{P}([d]): 1 \in S, 2 \in S}\gamma_S\prod_{k \in S\setminus \{1,2\}}a^{(k)}_{i_k}.
\end{align*}
Then is it easy to see that for all $(i_1, i_2) \in [\numvals]^2$, \[\mat_{i_1, i_2} = \tensor_{i_1,i_2,\dots, i_d} = \genlinf(\gamma_1 a_{i_1} + \gamma_2b_{i_2} + \gamma_{12} a_{i_1}b_{i_2}  + c).\]
Hence $\Mst \in \Cgenint(\genlinf)$ as desired.  
\end{proof}
\end{proof}

\subsection{Proof of Proposition \ref{prop:bias_ranking_genlin}}\label{app:ranking}

Our goal is to show that there exists a preference function $\scoref \in \Fiso$ such that when approximated by any function $\hatf \in \Fgen$, there is a worst-case Kendall-tau error. As in the proof of \Cref{prop:bias_genint_functions} we will use the same tensor representation of the preference function $\scoref$ and prove the analogous result for the tensor representations of any estimator $\hatf \in \Fgen$ and a worst-case function $\scoref \in \Fiso$. Recall that the class $\Ciso^\numcriteria$ represents all functions in $\Fiso$. Further, we can express all functions in $\Fgen$ by the set of tensors. For notational simplicity we will represent indices both as the list of their components (for instance, $(i_1, \dots, i_\numcriteria)$) and simply the vector $i$. We say that a tensor corresponds to an $\scoref \in \Fgen$, denoted by $\tensor \in \Cgen^\numcriteria$, if there exists a set of vectors $\{a^{(k)}\}_{k=1}^{\numcriteria} \subset \R^\numvals$ (encoding the coordinate-wise transformations), a non-decreasing $\genlinf$, and a $c \in \R$ such that for all $i \in \calX$:
\begin{equation}\label{eq:gam_tensor}
    \tensor_{i} = \genlinf\left(\sum_{k \in [\numvals]}a_{i_k}^{(k)}+c\right)
\end{equation}

We will confirm that any $\scoref \in \Fgen$ has a corresponding $\tensor \in \Cgen^\numcriteria$. For any $\scoref \in \Fgen$ expressed by Equation \Cref{eq:genlin}, this is accomplished by setting $\genlinf=\genlinf$, $b=c$, and for all $k \in [\numcriteria$ and $x_k \in \numvals$, $a^{(k)}_{x_k}=a_kh_k(x_k)$. Therefore, the set of permutations $\perm_\scoref$ for $\scoref \in \Fgen$ are contained in those of $\perm_\Tst$ for $\Tst \in \Cgen^\numcriteria$. Let the corresponding tensors of $\scoref$ and $\hatf$ be $\Tst$ and $\tensor$, respectively. Then 
\begin{align*}
    \ktdist(\perm_{\hatf}, \perm_\scoref) & = \ktdist(\perm_\tensor, \perm_\Tst)\\
    & = \frac{1}{\binom{|\calX|}{2}}\sum_{\text{unique }i \neq i' \in \calX}\ktcomp(i,i',\perm_\tensor, \perm_\Tst).
\end{align*}
Therefore, since $\Cgen$ contains the tensor representations of all $\Fgen$, the statement of \Cref{prop:bias_ranking_genlin} is implied by the following proposition 
\begin{proposition}\label{prop:ranking_bias_genlin_tensors}
    There exists some universal constant $\const > 0$ such that for all $\numvals \ge 3$ and $\numcriteria \ge 2$, there exists a $\Tst \in \Ciso^\numcriteria$ such that 
    \[\frac{1}{\binom{|\calX|}{2}}\inf_{\mat \in \Cgen^\numcriteria}\ktdist (\perm_{\Tst}, \perm_{\tensor}) \ge \const.\]
\end{proposition}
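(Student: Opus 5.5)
The plan is to find a small strict cyclic pattern of comparisons that no additive model can reproduce, and then blow it up so that a constant fraction of all pairs of $\calX$ fall into such cycles. (The extra factor $1/\binom{|\calX|}{2}$ in the display looks redundant, since $\ktdist$ is already normalized; I will prove $\ktdist(\perm_\tensor,\perm_\Tst)\ge\const$ for every $\tensor\in\Cgen^\numcriteria$.)

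\textbf{Step 1: the base obstruction.} Take the $3\times 3$ isotonic matrix
\[\Ex=\frac{1}{8}\begin{pmatrix}0&1&5\\2&4&6\\3&7&8\end{pmatrix}.\]
Its rows and columns are increasing, and it satisfies the three strict relations $\Ex_{12}<\Ex_{21}$, $\Ex_{23}<\Ex_{32}$ and $\Ex_{31}<\Ex_{13}$.

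Now suppose $\tensor_i=\genlinf(\sum_k a^{(k)}_{i_k}+c)$ ranks three pairs of points in the same way as $\Ex$, strictly. Because $\genlinf$ is non-decreasing, a strict inequality between outputs forces the same strict inequality between the arguments. Summing the three argument inequalities, every $a^{(k)}$ term appears once on each side, which gives $0<0$.

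The cancellation needs care when the points carry extra coordinates. I will use pairs of the form
\[(u_1,v_2,t_1)\ \text{vs}\ (u_2,v_1,t_2),\qquad (u_2,v_3,t_2)\ \text{vs}\ (u_3,v_2,t_3),\qquad (u_3,v_1,t_3)\ \text{vs}\ (u_1,v_3,t_1).\]
Here $t_1,t_2,t_3\in[\numvals]^{\numcriteria-2}$ are arbitrary tails. The multiset of tails on the smaller sides is $\{t_1,t_2,t_3\}$, and on the larger sides it is $\{t_2,t_3,t_1\}$, so the tail terms cancel as well.

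It follows that in any $\tensor\in\Cgen^\numcriteria$, at least one of the three pairs is either reversed or tied. Either way it contributes $\ktcomp\ge 1/2$, provided $\Tst$ ranks all three pairs strictly.

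\textbf{Step 2: construction of $\Tst$.} Partition $[\numvals]$ into three consecutive blocks $J_1,J_2,J_3$, each of size at least $\lfloor \numvals/3\rfloor\ge1$. Define
\[\Tst_{i_1,\dots,i_\numcriteria}=\Ex_{p,q}\quad\text{whenever } i_1\in J_p,\ i_2\in J_q.\]
This tensor is isotonic because $\Ex$ is and the block map is monotone. It ignores coordinates $3,\dots,\numcriteria$, so every pair in Step 1 with $u_p\in J_p$ and $v_q\in J_q$ is ranked strictly in $\Tst$, for any choice of tails.

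\textbf{Step 3: double counting.} I count configurations $\omega=(u_1,u_2,u_3,v_1,v_2,v_3,t_1,t_2,t_3)$ with $u_p,v_p\in J_p$. There are at least $c\,\numvals^{6}\numvals^{3(\numcriteria-2)}$ of them, and each contains at least one pair with $\ktcomp\ge 1/2$.

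A fixed pair of points of $\calX$ determines four of the $u,v$ entries and two of the tails, so it arises from at most $\numvals^{2}\numvals^{\numcriteria-2}$ configurations. The roles it can play in a configuration are bounded by an absolute constant.

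Hence
\[\sum_{\text{pairs}}\ktcomp\ \ge\ c'\,\frac{\numvals^{3\numcriteria}}{\numvals^{\numcriteria}}\ =\ c'\,\numvals^{2\numcriteria}.\]
Dividing by $\binom{\numvals^\numcriteria}{2}\le \numvals^{2\numcriteria}/2$ gives a universal constant $\const$.

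The main obstacle is the case $\numcriteria\ge3$. If each pair of points were forced to share a common tail, only an $\numvals^{-(\numcriteria-2)}$ fraction of pairs would be covered, and the bound would not be uniform in $\numcriteria$. The cyclic tail assignment in Step 1 is what removes this problem, and the main thing to verify is that the additive cancellation really goes through for the coordinates $k\ge3$.
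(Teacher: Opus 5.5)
Your proof is correct, and it takes a genuinely different route from the paper's.

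The paper's obstruction consists of two four-term chains in a $3\times 3$ matrix. These chains force both $a_2-a_1>a_3-a_2$ and $a_2-a_1<a_3-a_2$. Since this only rules out \emph{strict} agreement, the paper needs a separate lemma showing that weak agreement forces specific ties. To globalize, it adds an isotonic perturbation tensor with distinct entries, so that any two of the $L=m^d/9$ copies of the pattern are totally interleaved. This gives at least one unit of error inside every copy and between every pair of copies, hence a total of at least $L+\binom{L}{2}$ directly.

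You replace the chains with the three-cycle $B_{12}<B_{21}$, $B_{23}<B_{32}$, $B_{31}<B_{13}$, whose argument inequalities sum to $0<0$. This handles reversals and ties in one stroke, because some pair in the cycle must have $\overline{K}\ge 1/2$. The cancellation is a multiset identity, so the six points may come from different within-block coordinates and, with your cyclic tail assignment, from different tails. As a result you need no perturbation tensor, no restriction to fixed-tail slices lying in the two-dimensional class, and no ties lemma. The double count over $\Theta(m^{3d})$ configurations, with each pair of points lying in at most $6m^d$ of them, then gives the $m^{2d}$ bound.

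The paper's route buys a strictly ranked $T^*$ and a better explicit constant: about $1/81$, against your $5^{-6}/6$. Yours is shorter and handles $m$ not divisible by $3$ without a WLOG step. It also makes the cancellation of the coordinates $k\ge 3$ completely transparent.
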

\begin{proof}[Proof of \Cref{prop:ranking_bias_genlin_tensors}]
    We will prove this by constructing an explicit $\Tst$ that satisfies the conditions of \Cref{prop:ranking_bias_genlin_tensors}. Let's assume that $\numvals/3$ is an integer and use the approximation of  $\binom{|\calX|}{2}$ by $\numvals^{2\numcriteria}$ (this is WLOG up to constant factors). As in the proof of \Cref{prop:bias_genint_tensors}, we will start with the case when $\numcriteria=2$ and then extend our construction to cases when $\numcriteria>2$. At a high-level, our goal is to show that there are a series of inequalities that cannot be satisfied by any $\tensor \in \Cgen^\numcriteria$, and to show that the number of such inequalities between pairs of entries leads to a worst-case Kendall tau distance. Since there are a number of lemma's involved in this proof, we will leave the proof of each lemma to the end of the section. 

    When $\numcriteria=2$, we can represent $\scoref$ and $\hatf$ by matrices $\scoref \in \Ciso^2$ and $\hatf \in \Cgen^2$, respectively. First we show that such a set of inequalities over entries exists in a $3\times 3$ matrix. 
    \begin{lemma}\label{lemma:ordering_linear}
    Assume that $\mat \in \Cgen(\genlinf)$ for some $\genlinf$ and that $\numvals=3$. Then the following inequalities cannot both hold:
    \[\mat_{12} < \mat_{21}< \mat_{31} < \mat_{22}\]
    and 
    \[\mat_{22} < \mat_{13}< \mat_{23} < \mat_{32}.\]
    \end{lemma}

    Let $\Ex = \frac{1}{10}\begin{pmatrix} 1 & 2 &6\\
    3 & 5 & 7 \\
    4 & 8 & 9
    \end{pmatrix}$ be a $3 \times 3$ bivariate isotonic matrix that satisfies the ordinal relations in  \Cref{lemma:ordering_linear}. We will prove our result for a matrix $\Mst$ which can be written as 
    \[\Mst = \begin{pmatrix}A_{11} & A_{12} & A_{13}\\
    A_{21} & A_{22} & A_{23}\\
    A_{31} & A_{32} & A_{33}
    \end{pmatrix} + \begin{pmatrix}E & E & E\\
    E & E & E\\
   E & E & E
    \end{pmatrix}\]
    where $A_{kl}$ for $(k,l) \in [3]^2$ are $n/3 \times \numvals/3$ constant matrices where each entry is equal to $\Ex_{kl}$. We can partition $\Mst$ into $\numvals^2/9$ sub-matrices that are all equal to $\Ex$ and $E \in \R^{\numvals/3 \times \numvals/3}$ and entries $E_{ij} = \frac{1}{10\numvals}i + \frac{1}{10\numvals^2}j$. Here the additional terms $E$ create ordering within each of the blocks and ensure that each entry is unique. Explicitly, $\Mst_{ij} = \Ex_{\lceil (3i)/\numvals\rceil, \lceil (3j)/\numvals\rceil} + \epsilon_{ij}$ where
    \[\eps_{ij} = \frac{1}{10\numvals^2}\left(i-\frac{n}{3}(\lceil ni/3 \rceil -1)\right) + \frac{1}{10\numvals}\left(j-\frac{n}{3}(\lceil \numvals j/3 \rceil -1)\right).\]
    We will now verify that $\Mst \in \Cbiso$ by confirming that it is isotonic and has entries between $0$ and $1$. First we will show that it is isotonic. Each submatrix $A_{kl}$ is isotonic by construction. Let $(i,j)$ and $(i',j')$ be entries from different sub-matrices such that $i\ge i'$ and $j \ge j'$ where are least one inequality holds strictly. Then 
    \[\Mst_{ij}-\Mst_{i'j'} \ge \frac{1}{10}-\frac{\numvals}{3}\left(\frac{1}{10\numvals^2}+\frac{1}{10\numvals}\right)>\frac{7}{120}.\]
    By construction, $\Mst_{ij}\ge \Mst_{11} \ge 0.1 >0$. Additionally, 
    \[\Mst_{ij} \le\Mst_{nn} = 0.9 + \frac{1}{10\numvals^2}(\numvals/3) + \frac{1}{10\numvals}(\numvals/3) < 0.9 + \frac{7}{120}<1.\]
Hence $\Mst \in \Cbiso$.  We will now lower-bound the Kendall tau distance from $\Mst$ to any matrix $\mat \in \Cgen(\genlinf)$ for some non-decreasing $\genlinf$. 

We can partition $\Mst$ into $L=\numvals^2/9$ sub-matrices with ordering the same as $\Ex$. Let each sub-matrix be defined rows $\{i,\numvals/3 +i, 2\numvals/3 +i\}$ and columns $\{j,\numvals/3 +j, 2\numvals/3 +j\}$. Let $S_1,\dots, S_L$ denote the set of indices that correspond to each sub-matrix. We can rearrange the terms in the Kendall tau distance to get:
\begin{align*}
    \ktdist(\perm_{\Mst}, \perm_{\mat}) & = \sum_{\text{unique }(i,j),(i',j') \in [\numvals]^2}\overline{\ktdist}_{(i,j), (i',j')}(\perm_{\mat}, \perm_\Mst)\\
    & = \sum_{\subindex =1}^{L}\sum_{(i,j)\neq (i',j') \in S_\subindex}\overline{\ktdist}_{(i,j), (i',j')}(\perm_{\mat}, \perm_\Mst)\\
    & + \sum_{\subindex =2}^{L}\sum_{\subindex =1}^{\subindex-1}\sum_{(i,j) \in S_\subindex, (i',j') \in S_{\subindex'}}\overline{\ktdist}_{(i,j), (i',j')}(\perm_{\mat}, \perm_\Mst)
\end{align*}

Define $\ktdist_\subindex$ to be the distance within the sub-matrix $\subindex \in [\numindex]$, and $\ktdist_{\subindex, \subindex'}$ to be the distance between the permutations corresponding to sub-matrices $\subindex$ and $\subindex '  \in [\numindex]$. 
\begin{align*}
    & \ktdist_{\subindex, \subindex'}= \sum_{(i,j) \in S_\subindex, (i',j') \in S_{\subindex'}}\overline{\ktdist}_{(i,j), (i',j')}(\perm_{\mat}, \perm_\Mst)\\
    & \ktdist_{\subindex} = \sum_{(i,j) \neq (i',j') \in S_{\subindex}}\overline{\ktdist}_{(i,j), (i',j')}(\perm_{\mat}, \perm_\Mst)
\end{align*}
The remainder of this proof will be devoted to showing that $\ktdist_{\subindex, \subindex'}\ge 1$ and $\ktdist_\subindex \ge 1$. If this holds when we have 
\[\ktdist(\perm_{\Mst}, \perm_{\mat}) \ge \numindex + \binom{\numindex}{2} = \numvals^2/9 + \frac{\numvals^2/9(\numvals^2/9-1)}{2} \ge \numvals^4/162.\]

\textbf{Case 1}: No ties and incorrect ordering. If there exists some pair of entries $(i,j) \in S_\subindex$ and $(i',j') \in S_{\subindex'}$ where the relative order in $\mat$ is different from in $\Mst$, then we have $\ktdist_{\subindex, \subindex'} \ge 1$. Likewise, if there exists some pair of entries $(i,j) \neq  (i',j') \in S_{\subindex}$ where the relative order in $\mat$ is different from in $\Mst$, then we have $\ktdist_{\subindex} \ge 1$. 

\textbf{Case 2}: If all entries have the same ordering up to ties, we now need to show that there are at least two ties between non-equal entries. First we will consider $\ktdist_{\subindex, \subindex'}$. Each submatrix can be expressed by $\Ex + E_{ij}$ for some $(i,j) \in [\numvals/3]^2$. By construction, the values of $E_{ij}$ are unique. Therefore, $\Mst_{S_\subindex} > \Mst_{S_{\subindex'}}$ or $\Mst_{S_\subindex} < \Mst_{S_{\subindex'}}$ depending on the relative value of $E_{ij}$ for which the submatrices are defined. WLOG assume $\Mst_{S_\subindex} > \Mst_{S_{\subindex'}}$. In order for there to be zero error, it must be true that for the approximation $\mat$: 
    \[\mat_{S_{\subindex'}, 12} \le \mat_{S_{\subindex}, 12} \le \mat_{S_{\subindex'}, 21} \le \mat_{S_{\subindex}, 21}\le  \mat_{S_{\subindex'}, 31} \le \mat_{S_{\subindex}, 31} \le \mat_{S_{\subindex'}, 22} \le \mat_{S_{\subindex}, 22}\]
    and 
    \[\mat_{S_{\subindex'}, 22} \le \mat_{S_{\subindex}, 22} \le \mat_{S_{\subindex'}, 13} \le \mat_{S_{\subindex}, 13} \le \mat_{S_{\subindex'}, 32} \le \mat_{S_{\subindex}, 32}.\]
    
It is easy to show that any sub-matrix of a matrix $\mat \in \Cgen(\genlinf)$ is also in $\Cgen(\genlinf)$. This implies that $\mat_{S_{\subindex'}}, \mat_{S_{\subindex}} \in \Cgen(\genlinf)$ satisfy the ordinal relations in  \Cref{lemma:ordering_linear} with weak inequalities. 
\begin{lemma}\label{lemma:ties}
        If $\mat \in \Cgen(\genlinf)$ for some non-decreasing $\genlinf$ satisfies the ordinal relations in  \Cref{lemma:ordering_linear} with weak inequalities, then $\mat_{12} = \mat_{21}$,  $\mat_{31} = \mat_{22}=\mat_{13}$ and $\mat_{23} = \mat_{32}$.  
\end{lemma}
Taking this lemma as given, we then have that $\mat_{S_{\subindex'},31} = \mat_{S_{\subindex'},22}=\mat_{S_{\subindex'},13}$ and $\mat_{S_{\subindex},31} = \mat_{S_{\subindex},22}=\mat_{S_{\subindex},13}$. This implies that 
\[\mat_{S_{\subindex'}, 31} = \mat_{S_{\subindex}, 31} = \mat_{S_{\subindex'}, 22} =\mat_{S_{\subindex}, 22}=\mat_{S_{\subindex}, 13} = \mat_{S_{\subindex'}, 32}\]
which corresponds to a six non-equal entries being tied including $\mat_{S_{\subindex'}, 31} = \mat_{S_{\subindex}, 31}$ and $\mat_{S_{\subindex'}, 22} =\mat_{S_{\subindex}, 22}$. Hence $\ktdist_{\subindex, \subindex'}\ge 1$. 

Likewise, if $\mat_{S_\subindex}$ has the same ordering as $\Mst_{S_\subindex}\in \Cgen(\genlinf)$ up to ties, it follows that it must satisfy the relations in  \Cref{lemma:ordering_linear} with weak inequalities. This implies that $\mat_{S_{\subindex},31} = \mat_{S_{\subindex},22}$ and $\mat_{S_{\subindex},12} = \mat_{S_{\subindex},21}$ corresponding to two ties between non-equal entries in $\Mst$. Thus $\ktdist_\subindex \ge 1$. Since we have show that $\ktdist_{\subindex, \subindex'}\ge 1$ and $\ktdist_\subindex \ge 1$, this concludes our proof that
\[\ktdist(\perm_{\Mst}, \perm_{\mat}) \ge L+ \binom{L}{2} = \numvals^2/9 + \frac{\numvals^2/9(\numvals^2/9-1)}{2} \ge \numvals^4/162,\]
showing that when $\numcriteria=2$, \Cref{prop:ranking_bias_genlin_tensors} holds. 

Now we will use the $\Mst$ constructed in the proof for the case when $\numcriteria=2$ to create a more general counterexample for the case when $\numcriteria>2$.

In the proof of \Cref{prop:bias_genint_tensors}, we pick $\Tst$ to be constant with respect to the last $\numcriteria-2$ indices. Hence, $\Tst_{.,i_{3:\numcriteria}}=\Mst$ where $\Mst$ is chosen to be a block matrix of the $3 \times 3$ matrix $\Ex$ whose relative ordinal relations cannot be satisfied by a generalized additive model. Let $A_{ij} \in [0,1]^{\numvals/3 \times \numvals/3}$ be the constant matrix taking on the value of $\Ex_{ij}$ for $(i,j) \in [3]^2$. Further, let $\otimes$ denote the tensor product. We constructed
    \[\Tst = \begin{pmatrix}A_{11} & A_{12} & A_{13} \\
    A_{21} & A_{22} & A_{23} \\
    A_{31} & A_{32} & A_{33}
    \end{pmatrix} \otimes 1^{\otimes^{d-2}}.\]
    This worked, because fixing $\numcriteria - 2$ of the last entries, we had $\Omega( \numvals^2)$ Frobenius norm error and, summing over all such pairs, we had $\Omega( \numvals^\numcriteria)$ error. However, we cannot use this same approach for ranking error. If we summed the Kendall tau distance separately over each pair $(i_3, \dots, i_\numcriteria) \in [\numvals]^{\numcriteria-2}$, since the error for each pair is $\mathcal{O}( \numvals^4)$, we would have total error of $\mathcal{O}(  \numvals^{\numcriteria + 2})$ which for $\numcriteria > 2$ is asymptotically less than $\numvals^{2\numcriteria}$ (the order of the Kendall tau distance). 

Therefore, we instead need to compare across different values of $(i_3, \dots, i_\numcriteria) \in [\numvals]^{\numcriteria-2}$. To do this, and to ensure that we know the unique ground truth ranking of $\Tst$ we construct a special $\numvals/3 \times \numvals/ 3 \times \numvals^{\numcriteria-2}$ tensor of perturbations, $E$, to add to each of the nine $\numvals /3 \times \numvals / 3$ blocks. Thus, now we can write 

\[\Tst = \begin{pmatrix}A_{11} & A_{12} & A_{13} \\
    A_{21} & A_{22} & A_{23} \\
    A_{31} & A_{32} & A_{33}
    \end{pmatrix} \otimes 1^{\otimes^{d-2}} + \begin{pmatrix}E & E & E \\
    E & E & E \\
    E & E & E
\end{pmatrix}.\]
    To construct $E$, we will use the following lemma: 
\begin{lemma}\label{lemma:E_tensor}
    There exists a tensor $\widetilde{E} \in [0,1]^{\numvals^\numcriteria}$ that is isotonic and has unique entries. 
\end{lemma}
Taking \Cref{lemma:E_tensor} as given for now, let $E$ be a normalized $\widetilde{E}$ such that it has values between $[0,0.1)$ and contains only the first $\numvals / 3$ values of $i_1$ and $i_2$. In particular, for all $\widetilde{i} \in [\numvals/3]^2 \times [\numvals]^{\numcriteria-2}$:
\[E_{\widetilde{i}} = \frac{1}{11}\widetilde{E}_{\widetilde{i}}.\]
Now, we will re-write this constructed $\Tst$ in an equivalent way which will help us in the proof. To do this, we will define a function $g$ to map entries to the correct block matrix, and a function $h$ to map entries to the component within the block matrix. In particular, we set $g(i) = \lceil 3i/n\rceil$ to partition the $(i_1,i_2)$ pairs into the respective block (i.e. value of $\Ex_{g(i_1), g(i_2)}$) and $h(i) = i - n/3(\lceil 3i/n\rceil-1)$ to map the an index $i$ to the associated entry within the block (i.e., perturbation value $E_{h(i_1), h(i_2), i_3, \dots, i_d}$).  For all $i \in [\numvals]^{\numcriteria}$:
\[\Tst_{i_1, i_2, \dots, i_\numcriteria} = \Ex_{g(i_1), g(i_2)} + E_{h(i_1), h(i_2), i_3, \dots, i_d}.\]
We will confirm that $\Tst$ is in $\Ciso^\numcriteria$.  First, it must have entries in $[0,1]$. $\Ex$ and $E$ are non-negative so $\Tst \ge 0$. Since $\Ex$ has entries between $0.1$ and $0.9$ and $E$ has entries in $[0,0.1)$:
\[\max_{i \in [\numvals]^\numcriteria}\Tst_{i} \le \max_{i \in [\numvals]^\numcriteria}\Ex_{g(i_1), g(i_2)} + \max_{i \in [\numvals]^\numcriteria}E_{h(i_1), h(i_2), i_3, \dots, i_d} < 0.9 + 0.1= 1.\]
Second, $\Tst$ must be isotonic. Consider a non-equal set of indices $i=i_1, i_2, \dots, i_\numcriteria$ and $i'=i_1', i_2', \dots, i_\numcriteria'$ such that $i' \ge i$. If $g(i_1)=g(i_1'), g(i_2)=g(i_2')$, then 
\[\Tst_{i'}-\Tst_{i} =E_{h(i_1'), h(i_2'), i_3', \dots, i_d'} - E_{h(i_1), h(i_2), i_3, \dots, i_d}.\]
Since $E$ is isotonic, and $h$ is increasing in $i_1, i_2$ given $g(i_1)$ and $g(i_2)$, this is strictly positive. The second case is that $g(i_1')\neq g(i_1)$ and/or $g(i_2')\neq g(i_2)$. Since $g$ is weakly increasing, it must be the case that $g(i_1')> g(i_1)$ or $g(i_2')> g(i_2)$. Thus $i$ is mapped to different entry of $\Ex$ which, by construction is at least $0.1$ larger. Using the fact that entries in $E$ are less than $0.1$ apart:
\[\Tst_{i'}-\Tst_{i} > (\Ex_{g(i_1'), g(i_2')}-\Ex_{g(i_1), g(i_2)}) -0.1  =0.\]
Thus, $\Tst \in \Ciso^\numcriteria$ as desired. 

Now, we want to lower-bound the Kendall tau distance between $\Tst$ and its approximation $\tensor \in \Cgen^{\numcriteria}(\genlinf)$ for some non-decreasing $\genlinf$. Consider the partition of $\Tst$ based on the value of $E$ an index maps to (ie the values of $h(i_1), h(i_2), i_3, \dots, i_d$). There are $L = \frac{\numvals^\numcriteria}{9}$ total partitions. We will index these by $\subindex \in [\numindex]$ and use $S_\subindex$ to refer to the set of indices that partition $\subindex$ contains and $\tensor_{S_\subindex} \in \R^{3 \times 3}$ to refer to the sub-matrix formed by just those elements. Note that
\begin{align*}
    \ktdist(\perm_{\Tst}, \perm_{\tensor}) & = \sum_{\text{unique }i,i' \in [\numvals]^{\numcriteria}}\ktcomp(i,i',\perm_{\Tst},\perm_{\tensor})\\
    & = \sum_{\subindex \in [\numindex]}\sum_{i\neq i' \in S_\subindex}\ktcomp(i,i',\perm_{\Tst},\perm_{\tensor})  + \sum_{\subindex \neq \subindex' \in [\numindex]}\sum_{i\neq S_\subindex, i' \in S_{\subindex'}}\ktcomp(i,i',\perm_{\Tst},\perm_{\tensor})
\end{align*}

Let $\ktdist_{\subindex} = \sum_{i\neq i' \in S_\subindex}\ktcomp(i,i',\perm_{\Tst},\perm_{\tensor})$ and $\ktdist_{\subindex, \subindex'} = \sum_{i\neq S_\subindex, i' \in S_{\subindex'}}\ktcomp(i,i',\perm_{\Tst},\perm_{\tensor})$. Note that if for all $\subindex, \subindex' \neq \subindex \in [\numindex]$, $\ktdist_{\subindex} \ge 1$ and $\ktdist_{\subindex, \subindex'} \ge 1$ then we would have 
\[\ktdist(\perm_{\Tst}, \perm_{\tensor}) \ge \numindex + \binom{\numindex}{2} \ge \numindex^2 / 2 = \numvals^{2\numcriteria}/162.\]
Thus, it suffices to show that $\ktdist_{\subindex} \ge 1$ and $\ktdist_{\subindex, \subindex'} \ge 1$. There are two cases. 

\textbf{Case 1}: No ties and an incorrect ordering. First consider what would happen for $\ktdist_\subindex$. This says that there are two indices $i \neq i' \in S_\subindex$ such that the relative order in $\Tst$ is different from in $\tensor$ this implies that $\ktcomp(i,i',\perm_{\Tst}, \perm_{\tensor}) =1$ and hence that $\ktdist_\subindex \ge 1$. The same argument applies for $\ktdist_{\subindex, \subindex'}$: we have some $i \in S_\subindex$ and $i' \in S_{\subindex'}$ where the relative order in  $\Tst$ is different from in $\tensor$ which implies $\ktcomp(i,i',\perm_{\Tst}, \perm_{\tensor}) =1$ and hence that $\ktdist_{\subindex,\subindex'} \ge 1$. Now we are done with case 1. 

\textbf{Case 2}: Same ordering up to ties. First consider $\ktdist_\subindex$. Let $E_\subindex$ be the value of $E$ that this partition corresponds to. Observe that $\Tst_{S_\subindex}=\Ex + E_\subindex$ by construction so that the ranking of entries in $\Tst_{S_\subindex}$ is the same as the ranking over entries in $\Ex$. This implies that $\tensor_{S_\subindex}$ has the same ranking as $\Ex$ up to ties and consequently satisfies that relations in \Cref{lemma:ordering_linear} with weak inequalities. 

Note that all indices $i \in S_\subindex$ have the same values of $i_3,\dots, i_\numcriteria$. Thus $\tensor_{S_\subindex}$ is a sub-matrix of $\tensor_{.i_{3:\numcriteria}}$ for some values $i_3,\dots, i_\numcriteria$ associated with the rows $\{h(i_1), \numvals/3 + h(i_1), 2\numvals/3 + h(i_1)\}$ and columns $\{h(i_2), \numvals/3 + h(i_2), 2\numvals/3 + h(i_2)\}$. Since sub-matrices of matrices in $\Cgen^2$ are also in $\Cgen^2$, $\tensor_{S_\subindex} \in \Cgen^2$ if $\tensor_{.i_{3:\numcriteria}} \in \Cgen^2$. We will now show that this is true. Because $\tensor \in \Cgen^\numcriteria(\genlinf)$, there exists some $a^{(1)},\dots, a^{(\numcriteria)}$ such that for all $i_1,i_2 \in [\numcriteria]^2$ 
\[\tensor_{i_1i_2\dots i_\numcriteria} = \genlinf(\sum_{k \in [d]}a_{i_k}^{(k)}+c) = \genlinf(a_{i_1}^{(1)}+a_{i_2}^{(2)}+\sum_{k =3}^{d}a_{i_k}^{(k)}+c).\]
Setting $b \in \R^n$ to have entries $a^{(2)}_{i_2}+\sum_{k =3}^{d}a_{i_k}^{(k)}+c$ for all $i_2 \in [\numvals]$ and $a^{(1)} = a^{(1)}$ we see that $\tensor_{.i_{3:\numcriteria}} \in \Cgen^2$ for some the same $\genlinf$, $a^{(1)}$ and $a^{(2)}=b$. As our argument does not depend on the value of $\subindex$, $\tensor_{S_\subindex} \in \Cgen^2$ for all $\subindex \in [\numindex]$. 

By \Cref{lemma:ties}, this implies that $\tensor_{S_\subindex, 12} = \tensor_{S_\subindex,21}$, $\tensor_{S_\subindex,31} = \tensor_{S_\subindex,22}=\tensor_{S_\subindex,13}$, and $\tensor_{S_\subindex,23} = \tensor_{S_\subindex,32}$. Since none of these entries are equal in $\Tst$ and each tie in $\tensor$ for non-equal entries in $\Tst$ adds $1/2$ to the Kendall tau distance, it follows that $\ktdist_{\subindex}\ge 1$. 

Now, consider $\ktdist_{\subindex, \subindex'}$. Note that $\Tst_{S_\subindex} = \Ex + E_\subindex$ and $\Tst_{S_{\subindex'}} = \Ex + E_{\subindex'}$. Since the entries of $E$ are unique, it follows that either $\Tst_{S_{\subindex}} > \Tst_{S_{\subindex'}} $ or $\Tst_{S_{\subindex}} < \Tst_{S_{\subindex'}}$. WLOG we will assume that $\Tst_{S_{\subindex}} > \Tst_{S_{\subindex'}}$. If all relations between entries in $S_\subindex$ and $S_{\subindex'}$ are satisfied up to ties, then: 
\[\tensor_{S_{\subindex'}, 12} \le \tensor_{S_{\subindex}, 12} \le \tensor_{S_{\subindex'}, 21} \le \tensor_{S_{\subindex}, 21}\le  \tensor_{S_{\subindex'}, 31} \le \tensor_{S_{\subindex}, 31} \le \tensor_{S_{\subindex'}, 22} \le \tensor_{S_{\subindex}, 22}\]
    and 
\[\tensor_{S_{\subindex'}, 22} \le \tensor_{S_{\subindex}, 22} \le \tensor_{S_{\subindex'}, 13} \le \tensor_{S_{\subindex}, 13} \le \tensor_{S_{\subindex'}, 32} \le \tensor_{S_{\subindex}, 32}.\]
    
This implies that $\tensor_{S_{\subindex'}}, \tensor_{S_{\subindex}} \in \Cgen(\genlinf)$ satisfy the ordinal relations in  \Cref{lemma:ordering_linear} with weak inequalities. By \Cref{lemma:ties},  $\tensor_{S_\subindex,31} = \tensor_{S_\subindex,22}=\tensor_{S_\subindex,13}$ and $\tensor_{S_{\subindex'},31} = \tensor_{S_{\subindex'},22}=\tensor_{S_{\subindex'},13}$. Taken together with the inequalities above:
\[\tensor_{S_{\subindex'},31} = \tensor_{S_\subindex,31} = \tensor_{S_{\subindex'},22} = \tensor_{S_\subindex,22} = \tensor_{S_{\subindex'},13} = \tensor_{S_\subindex,13}\]
which corresponds to six non-equal entries being tied including more than two pairs of entries where one is in $S_\subindex$ and one is in $S_{\subindex'}$. Since each tie adds $1/2$ to $\ktdist_{\subindex, \subindex'}$: $\ktdist_{\subindex, \subindex'} \ge 1$. This concludes the 2nd case. Since we have shown that for all $\subindex, \subindex' \neq \subindex \in [\numindex]$, $\ktdist_{\subindex} \ge 1$ and $\ktdist_{\subindex, \subindex'} \ge 1$, we have 
\[\ktdist(\perm_{\Tst}, \perm_{\tensor}) \ge \numindex + \binom{\numindex}{2} \ge \numindex^2 / 2 = \numvals^{2\numcriteria}/162.\]
Since $\binom{|\calX|}{2}$ is order $\numvals^{2\numcriteria}$ we have shown that the statement of \Cref{prop:ranking_bias_genlin_tensors} holds when $\numcriteria=2$ and $\numcriteria>2$, we have now completed the proof of \Cref{prop:ranking_bias_genlin_tensors}. All that is left is to prove the auxiliary lemmas. 

\begin{proof}[Proof of \Cref{lemma:ordering_linear}]
    Assume for the sake of contradiction that $\mat \in \Cgen(\genlinf)$ for some $\genlinf, a, b,c$. Without loss of generality, we can assume that $c=0$. This is because we can set $\widetilde{b} = b_j +c$ such that for all $(i,j)\in [\numvals]^2$, $f(a_i+\widetilde{b}_j)=f(a_i+b_j+c)$. Therefore, $\mat \in \Cgen(\genlinf)$ for some $\genlinf, a, b,c$ implies that $\mat \in \Cgen(\genlinf)$ for some $\genlinf, a, b$. 
    
    Since $\mat_{ij}=\genlinf(a_i + b_j)$, the first inequality implies:
    \[\genlinf(a_1 + b_2) < \genlinf(a_2 + b_1) < \genlinf(a_3 + b_1) < \genlinf(a_2 + b_2).\]
    Since $\genlinf$ is non-decreasing, the ordering of the entries is the same as the ordering of the underlying linear model. Therefore, the inequalities in terms of the outputs of $\genlinf$ hold for the inputs. Hence, 
    \[a_1 + b_2 < a_2 + b_1 < a_3 + b_1 < a_2 + b_2.\]
    Lastly, since the difference between the first and last terms ($a_1 + b_2$ and $a_2 + b_2$) is greater than the difference between the middle two terms  ($a_2 + b_1$ and $a_3 + b_1$) we have that
    \[a_2 - a_1 > a_3 -a_2.\]
    We will now make the same argument for the second inequality. The second inequality implies that 
    \[\genlinf(a_2 + b_2) < \genlinf(a_1 + b_3) < \genlinf(a_2 + b_3) < \genlinf(a_3 + b_2).\]
    Following the same argument, we get that
    \[a_2 - a_1 < a_3 -a_2\]
    which is a contradiction. Therefore, both inequalities cannot hold at the same time. 
\end{proof}

\begin{proof}[Proof of \Cref{lemma:ties}]
        By assumption:
    \[\mat_{12} \le \mat_{21}\le  \mat_{31} \le \mat_{22}\]
    and 
    \[\mat_{22} \le \mat_{13} \le \mat_{23} \le \mat_{32}.\]
    Since $\mat_{ij}=\genlinf(a_i + b_j)$, the first inequality implies:
    \[\genlinf(a_1 + b_2) \le \genlinf(a_2 + b_1) \le \genlinf(a_3 + b_1) \le \genlinf(a_2 + b_2).\]
    Since $\genlinf$ is non-decreasing, the ordering of the entries is the same as the ordering of the underlying linear model. Therefore, the inequalities in terms of the outputs of $\genlinf$ hold for the inputs. Hence, 
    \[a_1 + b_2 \le a_2 + b_1 \le a_3 + b_1 \le a_2 + b_2.\]
    Lastly, since the difference between the first and last terms ($a_1 + b_2$ and $a_2 + b_2$) is weakly greater than the difference between the middle two terms  ($a_2 + b_1$ and $a_3 + b_1$) we have that
    \[a_2 - a_1 \ge a_3 -a_2.\]
    Following the same argument for the second inequality, we get that
    \[a_2 - a_1 \le a_3 -a_2\]
    In order for both to hold and be consistent with an underlying linear model we require that $(a_2 - a_1) = (a_3 - a_2)$. If the first equation holds with equality, then we have that the difference between ($a_1 + b_2$ and $a_2 + b_2$) is equal to the difference between ($a_2 + b_1$ and $a_3 + b_1$). This is only possible if $a_1 + b_2 = a_2 + b_1$ and $a_3 + b_1 = a_2 + b_2$ which implies that $\mat_{12} = \mat_{21}$ and $\mat_{31} = \mat_{22}$. Following the same argument, if second equation holds with equality this implies that $\mat_{22} = \mat_{13}$ and $\mat_{23} = \mat_{32}$. Consequently, we have that $\mat_{12} = \mat_{21}$,  $\mat_{31} = \mat_{22}=\mat_{13}$ and $\mat_{23} = \mat_{32}$. 
\end{proof}

\begin{proof}[Proof of \Cref{lemma:E_tensor}]
   First, we will construct a tensor that has unique values. For all $i \in [\numvals]^\numcriteria$, let 
   \[\tensor_{i} = \sum_{k=1}^{\numcriteria}\numvals^{k-1}i_k = i_1 + \numvals i_2 + \dots + \numvals^{\numcriteria-1}i_\numcriteria \]
   Since all indices $i_k$ are between $1$ and $\numvals$, each entry of $\tensor$ can be represented by a unique value (base $\numvals$) and thus is unique. Since the value of $\tensor_{i}$ is increasing in the value of each index, $\tensor$ is isotonic. Further, we have that for all $i$, $\tensor_{i}\le \numcriteria \numvals^{\numcriteria}$. Dividing each entry by $\numcriteria \numvals^{\numcriteria}$, we have the desired claim. 
\end{proof}
\end{proof}

\subsection{Measuring criteria importance}
\subsection{Proof of Proposition \ref{prop:criteria_importance}}\label{app:criteria_importance}

Our proof relies on the following result. 
\begin{lemma}\label{lemma:ols_simple}
Consider the setting of a function $\scoref: \calX \to \R$ fit by a linear model of the form, $\scoref(\xval_1,\xval_2) = a_1\xval_1 + a_2\xval_2$ for some $(a_1,a_2) \in \R^2$. Given samples $\{\xval^{(j)}\}_{\subindex=1}^{\numsamp}$ consider the least squares minimization problem:
\[\min_{(a_1,a_2)}\sum_{j = 1}^{\numsamp}(\scoref(\xval^{(j)})-a_1\xval_1^{(j)} - a_2\xval_2^{(j)})^2.\]
Let $\numsamp_{\xval_1,\xval_2}$ for $\xval \in \{0,1\}^2$ be the number of observations corresponding to each input pair. Let $\alpha_{\xval_1,\xval_2}=\scoref(\xval)$ the function value for those inputs. If at least two of $\numsamp_{1,1}, \numsamp_{0,1}, \numsamp_{1,0}$ are strictly positive, then the solutions are:

\begin{align*}
    & a_1 = \frac{(\numsamp_{0,1} + \numsamp_{1,1})(\numsamp_{1,0}\alpha_{1,0} + \numsamp_{1,1}\alpha_{1,1}) -\numsamp_{1,1}(\numsamp_{0,1}\alpha_{0,1} + \numsamp_{1,1}\alpha_{1,1})}{\numsamp_{1,1}(\numsamp_{0,1} + \numsamp_{1,0}) +\numsamp_{1,0}\numsamp_{0,1}} \\
    & a_2 = \frac{-\numsamp_{1,1}(\numsamp_{1,0}\alpha_{1,0} + \numsamp_{1,1}\alpha_{1,1}) +(\numsamp_{1,1} + \numsamp_{1,0})(\numsamp_{0,1}\alpha_{0,1} + \numsamp_{1,1}\alpha_{1,1})}{\numsamp_{1,1}(\numsamp_{0,1} + \numsamp_{1,0}) +\numsamp_{1,0}\numsamp_{0,1}}
\end{align*}
\end{lemma}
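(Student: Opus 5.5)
We compute the least-squares solution directly from the normal equations. With inputs in $\{0,1\}^2$, the point $(0,0)$ contributes nothing to either the objective's dependence on $(a_1,a_2)$ or to the normal equations. So the problem reduces to the three cell types $(1,0)$, $(0,1)$ and $(1,1)$, with counts $\numsamp_{1,0},\numsamp_{0,1},\numsamp_{1,1}$ and targets $\alpha_{1,0},\alpha_{0,1},\alpha_{1,1}$. Grouping identical samples, the objective becomes
\[
\numsamp_{1,0}(\alpha_{1,0}-a_1)^2+\numsamp_{0,1}(\alpha_{0,1}-a_2)^2+\numsamp_{1,1}(\alpha_{1,1}-a_1-a_2)^2 ,
\]
plus a constant. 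This is a convex quadratic in $(a_1,a_2)$.

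Setting the partial derivatives to zero gives the $2\times 2$ linear system
\begin{align*}
(\numsamp_{1,0}+\numsamp_{1,1})a_1+\numsamp_{1,1}a_2 &= \numsamp_{1,0}\alpha_{1,0}+\numsamp_{1,1}\alpha_{1,1},\\
\numsamp_{1,1}a_1+(\numsamp_{0,1}+\numsamp_{1,1})a_2 &= \numsamp_{0,1}\alpha_{0,1}+\numsamp_{1,1}\alpha_{1,1}.
\end{align*}
This is exactly $X^\top X a = X^\top y$ for the design matrix $X$ with rows $\xval^{(j)}$.

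The determinant of the system is
\[
(\numsamp_{1,0}+\numsamp_{1,1})(\numsamp_{0,1}+\numsamp_{1,1})-\numsamp_{1,1}^2=\numsamp_{1,1}(\numsamp_{0,1}+\numsamp_{1,0})+\numsamp_{1,0}\numsamp_{0,1},
\]
which is the common denominator in the claimed formulas. The hypothesis that at least two of the three counts are strictly positive makes this determinant strictly positive. Indeed, if two of the counts are positive, at least one of the products $\numsamp_{1,1}\numsamp_{0,1}$, $\numsamp_{1,1}\numsamp_{1,0}$, $\numsamp_{1,0}\numsamp_{0,1}$ is positive, and every term is nonnegative. Equivalently, $X^\top X$ is positive definite. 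Hence the quadratic is strictly convex, and the stationary point is the unique minimizer.

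Finally, Cramer's rule gives $a_1$ and $a_2$. The numerator for $a_1$ is $(\numsamp_{0,1}+\numsamp_{1,1})(\numsamp_{1,0}\alpha_{1,0}+\numsamp_{1,1}\alpha_{1,1})-\numsamp_{1,1}(\numsamp_{0,1}\alpha_{0,1}+\numsamp_{1,1}\alpha_{1,1})$. The numerator for $a_2$ is $-\numsamp_{1,1}(\numsamp_{1,0}\alpha_{1,0}+\numsamp_{1,1}\alpha_{1,1})+(\numsamp_{1,1}+\numsamp_{1,0})(\numsamp_{0,1}\alpha_{0,1}+\numsamp_{1,1}\alpha_{1,1})$. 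These match the statement.

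The main obstacle is the nondegeneracy argument, namely checking that the positivity hypothesis is exactly what makes the determinant nonzero. That check is straightforward, so the proof is essentially routine linear algebra.
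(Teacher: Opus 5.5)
Your proposal is correct and takes essentially the same route as the paper: group the loss by input cell, set the two partial derivatives to zero, and solve the resulting $2\times 2$ system. The paper uses the explicit $2\times 2$ inverse where you use Cramer's rule, which is the same computation. Your nondegeneracy check is actually cleaner than the paper's, which bounds the determinant below by the \emph{minimum} of the products $\numsamp_{1,1}\numsamp_{0,1}$, $\numsamp_{1,1}\numsamp_{1,0}$, $\numsamp_{1,0}\numsamp_{0,1}$; that minimum vanishes when exactly two counts are positive, whereas you correctly argue that at least one product is positive and all terms are nonnegative.
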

Let us take this lemma as given for now (we will prove it at the end of this section). 

\paragraph{Proof of part (a)}: Consider the case when input pairs for all but $(0,1)$ are observed. Set $\alpha_{1,0}=\frac{1}{4}$, $\alpha_{0,1}=0$, and $\alpha_{1,1}=1$.The denominator of the expressions for $a_1$ and $a_2$ are equal and strictly positive. Thus, it suffices to show that the difference of the numerators is negative. Then taking the difference of the numerators we get that
\[a_1 < a_2 \iff \numsamp_{1,0}\numsamp_{1,1}(2\alpha_{1,0}-\alpha_{1,1})<0.\]
Plugging in $\alpha_{1,0}=\frac{1}{4}$ and $\alpha_{1,1}=1$, one can see that this inequality is satisfied.

\paragraph{Proof of part (b)}: Assume that there is at least one sample of each criteria pair $\xval \in \{0,1\}^2$. Consider the case when $\alpha_{0,1}=\alpha_{1,0}=\alpha \in [0,\frac{1}{2})$, and $\numsamp_{1,0}>\numsamp_{0,1}$. Additionally, assume that $\alpha_{0,0}=0$ and $\alpha_{1,1}=1$. As for the proof of part (a), the denominator of the expressions for $a_1$ and $a_2$ are equal and strictly positive, so it suffices to show that the difference of the numerators is negative. Substituting in the values for $\alpha_{1,1}$ we get that 
\[a_1 < a_2 \iff \numsamp_{1,1}(1-2\alpha)(\numsamp_{0,1}-\numsamp_{1,0})<0.\]
Since $\alpha < \frac{1}{2}$, $(1-2\alpha)>0$. As $\numsamp_{1,0}> \numsamp_{0,1}$, $(\numsamp_{0,1}-\numsamp_{1,0})<0$. Hence the product is negative and $a_1 < a_2$ as claimed.

All that is left is to prove \Cref{lemma:ols_simple}. 
\begin{proof}[Proof of \Cref{lemma:ols_simple}]
    Let $\sqloss(a_1,a_2)$ be the total squared error for any $a_1,a_2$ pair. Plugging in the values of $\scoref$ for the specific inputs we have:
    \begin{align*}
        \sqloss(a_1,a_2) &= \sum_{j \in [\numsamp]}\left(\scoref(\xval_1^{(j)}, \xval_2^{(j)})-a_1\xval_1^{(j)} - a_2\xval_2^{(j)}\right)^2\\
        & = \numsamp_{0,0}\alpha_{0,0}^2 + \numsamp_{1,0}\left(\alpha_{1,0}-a_1\right)^2 + \numsamp_{0,1}\left(\alpha_{0,1} - a_2\right)^2 + \numsamp_{1,1}\left(\alpha_{1,1}-a_1-a_2\right)^2 
    \end{align*}
The partial derivatives of $\sqloss$ with respect to each choice variable are
\begin{align*}
    \frac{\partial \sqloss}{\partial a_1} & = -2\numsamp_{1,0}(\alpha_{1,0}-a_1) - 2\numsamp_{1,1}(\alpha_{1,1}-a_1-a_2) \\
    \frac{\partial \sqloss}{\partial a_2} & = -2\numsamp_{0,1}(\alpha_{0,1}-a_2) - 2\numsamp_{1,1}(\alpha_{1,1}-a_1-a_2)
\end{align*}
Since $\sqloss$ is convex, it is minimized when both partials are zero. With some algebraic manipulation, we have that this is equivalent to the following two systems of equations:
\begin{align*}
    & (\numsamp_{1,0}+\numsamp_{1,1}) a_1 + \numsamp_{1,1}a_2 = \numsamp_{1,0}\alpha_{1,0} +\numsamp_{1,1}\alpha_{1,1}\\
    & \numsamp_{1,1} a_1 + (\numsamp_{0,1}+\numsamp_{1,1})a_2 = \numsamp_{0,1}\alpha_{0,1} +\numsamp_{1,1}\alpha_{1,1}\\
\end{align*}
In matrix form, we can write this as 
\[\begin{pmatrix} \numsamp_{1,0} + \numsamp_{1,1} & \numsamp_{1,1} \\ \numsamp_{1,1} & \numsamp_{1,1} + \numsamp_{0,1} \end{pmatrix}\begin{pmatrix}a_1 \\ a_2 \end{pmatrix} = \begin{pmatrix} \numsamp_{1,0}\alpha_{1,0} + \numsamp_{1,1}\alpha_{1,1} \\ \numsamp_{0,1}\alpha_{0,1} + \numsamp_{1,1}\alpha_{1,1} \end{pmatrix}.\]
Using the formula for the inverse of a $2 \times 2$ matrix, 
\begin{align*}
    \begin{pmatrix}a_1 \\ a_2 \end{pmatrix} & =\begin{pmatrix} \numsamp_{1,0} + \numsamp_{1,1} & \numsamp_{1,1} \\ \numsamp_{1,1} & \numsamp_{1,1} + \numsamp_{0,1} \end{pmatrix}^{-1} \begin{pmatrix} \numsamp_{1,0}\alpha_{1,0} + \numsamp_{1,1}\alpha_{1,1} \\ \numsamp_{0,1}\alpha_{0,1} + \numsamp_{1,1}\alpha_{1,1} \end{pmatrix}.\\
\end{align*}
Let $\widetilde{\numsamp}=\numsamp_{1,1}(\numsamp_{0,1} + \numsamp_{1,0}) +\numsamp_{1,0}\numsamp_{0,1}$. Note that the matrix inverse is well-defined as its determinant is \[\numsamp_{1,1}(\numsamp_{0,1} + \numsamp_{1,0}) +\numsamp_{1,0}\numsamp_{0,1}>\min\{\numsamp_{1,1}\numsamp_{0,1}, \numsamp_{1,1}\numsamp_{1,0}, \numsamp_{1,0}\numsamp_{0,1}\}>0\]
by our assumption that at least two of $\numsamp_{1,1},\numsamp_{0,1}, \numsamp_{1,0}$ are strictly positive. 
With a bit of algebra, we get that the estimated coefficients are
\begin{align*}
    & a_1  = \frac{(\numsamp_{0,1}+\numsamp_{1,1})(\numsamp_{1,0}\alpha_{1,0} + \numsamp_{1,1}\alpha_{1,1}) -\numsamp_{1,1}(\numsamp_{0,1}\alpha_{0,1} + \numsamp_{1,1}\alpha_{1,1})}{\widetilde{\numsamp}}\\
    & a_2 = \frac{-\numsamp_{1,1}(\numsamp_{1,0}\alpha_{1,0} + \numsamp_{1,1}\alpha_{1,1}) +(\numsamp_{1,1} + \numsamp_{1,0})(\numsamp_{0,1}\alpha_{0,1} + \numsamp_{1,1}\alpha_{1,1})}{\widetilde{\numsamp}}.
\end{align*}

\end{proof}

\section{Supplementary material for Section \ref{sec:our_solution}}\label{app:our_solution}

\subsection{Proof of Theorem \ref{thm:CV}}\label{appendix:thm_CV}

We will prove (a) and (b) of \Cref{thm:CV} separately. 
\subsubsection{Proof of Theorem \ref{thm:CV} (a)}
Recall that our goal is to show that there exists a universal constant $\const>0$ such that for any distribution $P$ over $\calX$ and any $\scoref \in \Fiso$
    \[R(\fcv, \scoref)\le \const \log |\Lambda|R(\hatf, \scoref).\]
    We will do this by proving a $\Omega(1/\numsamp)$ lower bound on the risk of $\flin$, and show that, up to a constant factor, the error of $\fcv$ exceeds that of $\flin$ by an additive term of $O(\log|\Lambda|/ \numsamp)$. 

    First, we will focus on the lower bound. It is well known that the risk of linear regression is $\Omega(1/\numsamp)$. However, for completeness, we will include a short proof. Define $\funspace_{\texttt{CONST}}$ to be the set of constant functions mapping $\calX$ to $[0,1]$. Since $\funspace_{\texttt{CONST}}\subset \Flin$ it follows that
    \[\inf_{\hatf}\sup_{\scoref \in \funspace_{\texttt{CONST}}}R(\hatf, \scoref) \le \inf_{\hatf}\sup_{\scoref \in \Flin}R(\hatf, \scoref). \]
    Therefore, it suffices to prove a lower bound for $\funspace_{\texttt{CONST}}$. 

    \textbf{Lower bound for learning constant functions.}
    Let $P_\scoref$ be the probability distribution over $(\xval, \yval)$ induced by a preference function $\scoref$ and $d_{\texttt{TV}}(P, Q) $ be the total variation distance between two distributions $P$ and $Q$. By Assoud's lemma (e.g., \citep{wu_lecture_2020} Theorem 10.2)
    \[\inf_{\hatf}\sup_{\scoref \in \funspace_{\texttt{CONST}}}R(\hatf, \scoref) \ge \sup_{\scoref_0, \scoref_1 \in \funspace_{\texttt{CONST}}}\frac{R(\scoref_0, \scoref_1)}{4}(1-d_{\texttt{TV}}(P_{\scoref_0}, P_{\scoref_1})).\]
    Let $\scoref_0 = 0$ and $\scoref_1 = \frac{1}{\sqrt{\numsamp}}$. Then $R(\scoref_0, \scoref_1)=\frac{1}{\numsamp}$. Further, by the affine invariance of total variation distance, and the fact that the sample mean is a sufficient statistic for normal random variables when the variance is known 
    \[d_{\texttt{TV}}\left(P_{\scoref_0}, P_{\scoref_1}\right) = d_{\texttt{TV}}\left(N(0,\frac{1}{\numsamp}\right), N\left(\frac{1}{\sqrt{\numsamp}},\frac{1}{\numsamp})\right) = d_{\texttt{TV}}\left(N(1,1),N(0,1)\right).\]
    Note that $(1-d_{\texttt{TV}}\left(N(1,1),N(0,1))\right)\approx 0.7\ge \frac{1}{2}$. This implies 
    \[\inf_{\hatf}\sup_{\scoref \in \Flin}R(\hatf, \scoref) \ge \frac{1}{8\numsamp}.\]

    \textbf{Upper bound for our algorithm.}
    To prove an upper bound for our algorithm, we will show that the risk of $\fcv$ can be bounded by the risk of $\scoref_{\infty}$. Then we will show that the risk of $\scoref_{\infty}$ is weakly less than that of non-negative least squares. To bound $R(\fcv,\scoref)$ in terms of $R(\scoref_\infty,\scoref)$ we will use the following result: 
    \begin{corollary}[Corollary 4.2 of \citet{vaart_oracle_2006}]\label{corr:oracle_inequality}
        Consider the setting where we have a set of functions $\scoref \in \funspace$ bounded by a constant $M\ge 1$ and are given access to data $(\xval^{(j)}, \yval^{(j)})_{j \in [\numsamp]}$ of independent and identically distributed samples where for all $j \in [\numsamp]$ 
        \[\yval^{(j)}=\scoref(\xval^{(j)}) +w^{(j)}\]
        with $w^{(j)}$ independent sub-exponential noise with $\E[w^{(j)}|\xval^{(j)}]=0$. Assume we divide the data into two random subsets: a dataset $S_1$ of size at least $\alpha \numsamp$ for validation and a dataset $S_0$ for training, where $\alpha >0$. Suppose we fit a set of estimators $\scoref_1,\dots, \scoref_K$ on data in $S_0$, calculate the mean squared error of each estimator on $S_1$, $\sum_{i \in S_1}(\scoref_k(\xval_i)-\yval_i)^2$ for $k \in [K]$, and return $\scoref_{\widetilde{k}}$ that minimizes the error on $S_1$. Then, there exists some constant $\const_1 >0$ such that for any $\delta \in (0,1)$:
        \[R(\scoref_{\widetilde{k}}, \scoref_0)\le (1+2\delta)\inf_{k \in [K]}R(\scoref_{k}, \scoref_0)+\const_1 /\numsamp \log(1+K)\frac{M^2}{\delta}.\]
    \end{corollary}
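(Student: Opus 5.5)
Since the statement is quoted from \citet{vaart_oracle_2006}, we only need to sketch why it holds. The plan is to condition on the training set $S_0$, so that $\scoref_1,\dots,\scoref_K$ become fixed functions bounded by $M$, and to treat selection on $S_1$ as empirical risk minimization over a finite class of $K$ elements. We work in the excess-loss form. For each $k$, define
\[
\ell_k(\xval,\yval) = (\scoref_k(\xval)-\yval)^2 - (\scoref_0(\xval)-\yval)^2 .
\]
Using $\yval = \scoref_0(\xval)+w$, this equals $(\scoref_k-\scoref_0)^2(\xval) - 2w(\scoref_k-\scoref_0)(\xval)$. So $\E\,\ell_k = \|\scoref_k-\scoref_0\|_{L_2(P)}^2 =: r_k$, and the quantity to control is $r_{\widetilde k}$.

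The first key step is a Bernstein-type bound for the centered empirical averages $\frac{1}{|S_1|}\sum_{j\in S_1}\ell_k(\xval^{(j)},\yval^{(j)}) - r_k$. It uses two facts:
\begin{itemize}
\item the variance satisfies $\mathrm{Var}(\ell_k) \lesssim M^2 r_k$, since $|\scoref_k-\scoref_0|\le 2M$ and $w$ has bounded conditional variance;
\item the sub-exponential tails of $w$ give an Orlicz/Bernstein moment condition with scale of order $M$.
\end{itemize}
This variance-to-mean relation is what yields the fast $1/\numsamp$ rate instead of $1/\sqrt{\numsamp}$. We apply Bernstein's inequality at level $t$ to each $k$ and take a union bound over the $K$ candidates.

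Next we use the selection rule. Since $\widetilde k$ minimizes the empirical loss, its empirical excess loss is at most that of the best candidate $k^*$. Rearranging, on the good event, gives
\[
r_{\widetilde k} \le r_{k^*} + \text{(deviation for } \widetilde k) + \text{(deviation for } k^*).
\]
Each deviation is at most $\sqrt{M^2 r_k\,(t+\log K)/\numsamp} + M^2(t+\log K)/\numsamp$. We then split the square roots with $2\sqrt{ab}\le \delta a + b/\delta$, and absorb $\delta r_{\widetilde k}$ into the left side. This yields $r_{\widetilde k} \le (1+2\delta) r_{k^*} + \const M^2 (t+\log K)/(\delta\numsamp)$ with probability at least $1-e^{-t}$; here $|S_1|\ge\alpha\numsamp$ makes $\alpha$ part of the constant.

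Integrating this tail bound over $t$ converts it into a bound in expectation, first conditional on $S_0$ and then unconditionally. Taking the expectation over $S_0$ of the infimum gives the stated inequality with $\log(1+K)$.

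The main obstacle is the Bernstein step with unbounded sub-exponential noise, because the cross term $w(\scoref_k-\scoref_0)$ is not bounded. We would first try the Bernstein moment condition $\E|\ell_k|^m \le m!\,(cM)^{m-2} c M^2 r_k/2$, verified from the sub-exponential moments of $w$ together with $|\scoref_k-\scoref_0|\le 2M$. If that fails, we would truncate $w$ at level $\log \numsamp$ at the cost of an extra logarithmic factor. The rest of the argument is routine algebra.
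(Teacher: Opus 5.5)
Your sketch is correct and is essentially the argument behind the result, which the paper cites without reproving. It reconstructs the two ingredients the paper points to in \citet{vaart_oracle_2006}: a Bernstein-type moment bound for the squared-loss excess $\ell_k$ under sub-exponential noise, with variance proxy proportional to $\|\scoref_k-\scoref_0\|_{L_2(P)}^2$ (their Lemma 4.1), and a Bernstein-plus-union-bound selection inequality that trades this variance against $\delta$ times the excess risk, giving the $(1+2\delta)$ factor and the $M^2\log(1+K)/(\delta\alpha\numsamp)$ remainder (their Theorem 2.3). One small correction: the $(\scoref_k-\scoref_0)^2$ part of $\ell_k$ is bounded by $4M^2$, so the Bernstein scale of $\ell_k$ is of order $M^2+cM$, not $cM$, and your stated moment condition fails for large $M$ as written; however, the deviation bound you actually display already carries $M^2(t+\log K)/\numsamp$, so the argument goes through without the truncation fallback.
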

    This theorem gives us a bound on a cross-validation estimator that takes as input a set of $K$ functions fitted by regression on the data in terms of the risk of the best method in hindsight. 
    A brief remark is that in the Corollary presented in \citet{vaart_oracle_2006}, they don't mention the dependencies of $\const_1$. However, these dependencies can be attained by referring to Theorem 2.3 and Lemma 4.1 of the same paper from which the Corollary is derived (the proof is trivial). The constant factor depends only on the distribution of $\omega$ and linearly on $\frac{1}{\alpha}$, neither of which is relevant to our asymptotic analysis. 

    In order to apply this result, we need to check that our cross-validation setting is equivalent to the setting of \Cref{corr:oracle_inequality} for a specific choice of $\alpha, K, M$. Recall that we observe outputs from an unknown function $\scoref \in \Fiso$ with additive noise that is sub-exponential and with conditional mean zero (in our case, it is specifically $N(0,1)$) and impose that $\|\scoref\|_\infty \le 1$ (so we can set $M=1$). Further, we test $|\Lambda|$ potential regression functions on a $\frac{1}{4}$ fraction of the data, satisfying the condition that the size of the validation set is at least a constant fraction of the total data ($\alpha = \frac{1}{4}>0$). Lastly, we return $\fcv$ which minimizes the loss of $\{\flam\}_{\lambda \in \Lambda}$ on the validation dataset. Thus, the setting of \Cref{thm:CV} satisfies the conditions of \Cref{corr:oracle_inequality} with $K=|\Lambda|$, $M=1$ and $\alpha = \frac{1}{4}$. Letting $\delta = \frac{1}{2}$ and substituting in for the values of $K$, $M$ and $\alpha$ the result of \Cref{thm:CV} implies that
    \begin{equation}\label{eq:oracle_inequality}
        R(\fcv, \scoref)\le 2\inf_{\lambda \in \Lambda}R(\scoref_{\lambda}, \scoref)+\const_1 /\numsamp \log(1+|\Lambda|)
    \end{equation}
    As $\infty \in \Lambda$, it follows that 
    \[\inf_{\lambda \in \Lambda}R(\scoref_{\lambda}, \scoref) \le R(\scoref_{\infty}, \scoref).\] 
    When $\lambda = \infty$, we impose linearity on the regression function (prior to truncation). As the true $\scoref$ takes on values in $[0,1]$, $R(\scoref_\infty, \scoref) \le R(\flin, \scoref)$. This implies that 
    \[R(\fcv, \scoref) \le 2 R(\flin, \scoref)+\const_1 /\numsamp \log(1+|\Lambda|).\]
    Finally, as $\sup_{\scoref \in \Flin}R(\flin, \scoref)\ge \frac{1}{8\numsamp}$ by our lower bound on $\flin$, 
    \[\const_1 /\numsamp \log(1+|\Lambda|) \le \const_1 \log(1+|\Lambda|)\sup_{\scoref \in \Flin}R(\flin, \scoref).\]
    
    Using the fact that $\log(1+|\Lambda|) \le \log 2 \log(|\Lambda|)$, it follows that there exists some constant $\const >0$ such that 
    \[\sup_{\scoref \in \Flin}R(\fcv, \scoref)\le \const \log(|\Lambda|)\sup_{\scoref \in \Flin}R(\flin, \scoref).\]
    This concludes our proof of \Cref{thm:CV}
\subsubsection{Proof of Theorem \ref{thm:CV} (b)}
Recall that our goal is to prove that 
\[\sup_{\scoref \in \Fiso}R(\fcv, \scoref) \le \const_{\numcriteria, \minval}\numsamp^{-1/\numcriteria}\log^{\gamma_\numcriteria}\numsamp.\]
We will do this by combining the oracle inequality from Equation \Cref{eq:oracle_inequality} with an upper bound on $R(\scoref_0, \scoref)$. Since we have already proved the oracle inequality in the proof of part (a) of this theorem, we will proceed to prove an upper bound on $R(\scoref_0, \scoref)$. We will do this by slightly modifying a result from \citet{han_isotonic_2019} on the risk of isotonic functions mapping $[0,1]^\numcriteria \to [-1,1]$. 
\begin{theorem}[Theorem 4 of \citet{han_isotonic_2019}]\label{thm:isotonic}
    Consider the setting of \Cref{thm:isotonic} where instead of the preference function being in $\Fiso$, it belongs to the set of isotonic functions from $\ccalX =[0,1]^\numcriteria$ to $[-1,1]$ (which we denote by the set $\cFiso$). Let the distribution over $\ccalX$ be denoted by $\cP$. 
    If $\cP$ satisfies 
    \[0 < m_0 \le \inf_{\xval \in \calX}\cP(\xval) \le \sup_{\xval \in \ccalX}\cP(\xval)\le M_0 \le \infty,\]
    then there exists an estimator $\hatf$ such that for $\numcriteria\ge 2$, these exists a $\const_{\numcriteria, m_0, M_0}$ depending only on $\numcriteria, m_0, M_0$ such that 
    \[\sup_{\cscoref \in \cFiso}R(\hatf, \scoref) \le \const_{\numcriteria, m_0, M_0}
\numsamp^{-1/\numcriteria}\log^{\gamma_\numcriteria}\numsamp.\]
\end{theorem}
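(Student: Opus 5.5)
Since this is Theorem 4 of \citet{han_isotonic_2019}, I would not rebuild it from first principles; the plan is to reproduce the architecture of their argument, because the same structure will be needed afterwards to transfer the bound to the discrete space $\calX$. I take the estimator $\hatf$ to be the block max--min (equivalently, least squares) isotonic estimator over $\cFiso$, truncated to $[-1,1]$. The proof then splits into three steps: an entropy estimate for the class, a localized empirical-process bound for the estimator, and a comparison between the empirical norm and the $L_2(\cP)$ norm.

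\textbf{Step 1 (entropy).} I would bound the bracketing entropy of $\cFiso$ in $L_2(\cP)$. By the two-sided density condition $m_0\le \cP\le M_0$, $L_2(\cP)$ is equivalent to Lebesgue $L_2([0,1]^\numcriteria)$ up to constants depending only on $m_0,M_0$. For bounded monotone functions on $[0,1]^\numcriteria$ the known bounds (Gao--Wellner type) are:
\begin{itemize}
\item $\log N_{[\,]}(\eps)\lesssim \eps^{-2(\numcriteria-1)}$ for $\numcriteria\ge3$;
\item $\log N_{[\,]}(\eps)\lesssim \eps^{-2}\log^2(1/\eps)$ for $\numcriteria=2$.
\end{itemize}
The extra logarithm in the $\numcriteria=2$ case is what later makes $\gamma_2$ differ from the general formula.

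\textbf{Step 2 (risk bound).} Here the class is non-Donsker for $\numcriteria\ge3$, so Dudley's entropy integral diverges at zero, and plugging the entropy into the usual fixed-point equation gives only the suboptimal rate $\numsamp^{-1/(2(\numcriteria-1))}$. I would instead follow Han et al.\ and exploit the lattice and upper/lower-set structure. Write the max--min estimator at a point $x$ as a max over lower rectangles of a min over upper rectangles of block averages. Then control, uniformly over all rectangles whose empirical mass is at least about $\numsamp^{-(\numcriteria-1)/\numcriteria}$, the deviation of a block average from its mean; this costs a $\sqrt{\log \numsamp}$ union-bound factor, with the number of relevant rectangles being polynomial in $\numsamp$. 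Monotonicity turns each pointwise bias into a difference of $\scoref$ between the corners of a small rectangle. Integrating these differences telescopes along chains of cells, giving a total bias of order $\numsamp^{-1/\numcriteria}$ up to polylogarithmic factors. The powers of $\log \numsamp$ accumulated here (from union bounds and from the dyadic peeling over rectangle sizes in each of the $\numcriteria$ coordinates) produce $\gamma_\numcriteria=(\numcriteria^2+\numcriteria+1)/2$.

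\textbf{Step 3 (empirical to population norm).} Finally I would pass from the empirical or block-average error to $R(\hatf,\scoref)$ in $L_2(\cP)$ using the lower density bound $m_0$. This guarantees that every small rectangle of volume at least $\numsamp^{-1}\,\mathrm{polylog}(\numsamp)$ contains enough samples with high probability, by a Chernoff bound plus a union bound over a grid of rectangles. The failure event is absorbed since $|\hatf-\scoref|\le 2$.

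I expect Step 2 to be the main obstacle. It is exactly where the naive entropy argument fails in the non-Donsker regime, and the careful bookkeeping of logarithmic factors that yields $\gamma_\numcriteria$ happens there; I would lean on the explicit chaining over rectangles in \citet{han_isotonic_2019} rather than reprove it.
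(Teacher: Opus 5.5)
\textbf{Step~2 does not go through, and the trouble starts with the estimator.} For $\numcriteria\ge 2$ the block max--min estimator (max over lower rectangles, min over upper rectangles) is not the least squares estimator; the two coincide in general only when $\numcriteria=1$. The LSE's max--min representation runs over \emph{all} lower and upper sets containing the point, not just rectangles. For $\numsamp$ random design points there are exponentially many such sets: every subset of an antichain generates a distinct upper set, and a random sample contains antichains of polynomial size. So a $\sqrt{\log\numsamp}$ union bound over polynomially many rectangles gives no control of the LSE.

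If you instead commit to the block estimator, you are bounding a different estimator. It is neither the one analyzed in \citet{han_isotonic_2019} nor the isotonic LSE $\hatf_0$ that \Cref{thm:CV}(b) actually uses.

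Even for the block estimator, the bias step fails as written. Blocks of empirical mass about $\numsamp^{-(\numcriteria-1)/\numcriteria}$ have side lengths summing to at least order $\numsamp^{-(\numcriteria-1)/\numcriteria^2}$. Telescoping corner differences over cells of that size bounds the integrated bias only by that sum. This cannot be improved at a fixed block scale: take $\cscoref$ to be the indicator of $\{\sum_i \cxval_i>\numcriteria/2\}$. That bias is much larger than $\numsamp^{-1/\numcriteria}$, and balancing scales yields only the histogram rate $\numsamp^{-1/(\numcriteria+1)}$.

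Reaching $\numsamp^{-1/\numcriteria}$ requires two things:
\begin{itemize}
\item using the max--min adaptively, giving a pointwise error of order $\min_{b\ge \cxval}\{(\cscoref(b)-\cscoref(\cxval))^2+\log\numsamp/(\numsamp\,\cP([\cxval,b]))\}$, with block masses going down to $\numsamp^{-1}\log\numsamp$;
\item a proof that the integral of this pointwise quantity over $\ccalX$ is $O(\numsamp^{-1/\numcriteria}\,\mathrm{polylog}\,\numsamp)$.
\end{itemize}
That integral estimate is the real content of a block-estimator argument, and it is absent. Relatedly, your Step~1 entropy bound is never used, and the exponent $\gamma_\numcriteria$ is asserted rather than derived.

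\textbf{The argument the paper relies on avoids pointwise analysis altogether.} For the LSE, \citet{han_isotonic_2019} split the risk on the event $A=\{\|\hatf-\cscoref\|_\infty\le 6\log^{1/2}\numsamp\}$ and its complement.
\begin{itemize}
\item On $A$, their Proposition~7 controls the risk through localized Rademacher and Gaussian complexities. Their Lemmas~5 and~6 reduce these to $\E\sup_{\cscoref\in\mathcal{G}_{\cxval}(0,r,1)}|\numsamp^{-1/2}\sum_{j}\xi^{(j)}\cscoref(\cxval^{(j)})|$. This is the only $\cP$-dependent ingredient, and it is exactly the quantity the paper transfers to the lattice through the coupling in \Cref{prop:complexity_UB}.
\item Off $A$, their Lemma~10 shows that $A^{c}$ has probability $O(\numsamp^{-1})$, using only that the fitted values stay within the range of the responses.
\end{itemize}
The non-Donsker obstacle you correctly identify is handled inside the bound on that local complexity, not by a union bound over blocks. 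If you want to follow \citet{han_isotonic_2019}, that complexity bound is the step to reproduce, and it is also the form the paper needs for its transfer to $\calX$.
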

Our goal is to show that, given a different set of preference functions and our algorithm (which is slightly different from theirs), the same upper bound still holds. In \citet{han_isotonic_2019}, they pick estimator $\hatf$ as the function in $\cFiso$ minimizing squared error on the sample data. With a small amount of algebra, one can confirm that this is equivalent to our approach of finding the minimizing function with outputs in $\R$ and truncating the outputs to $[-1,1]$. For values not seen during training, they interpolate using. 
\[\hatf(\xval):=\min\left(\{\hatf(\xval^{(j)}): 1\le i \le \numcriteria, \xval_{i}^{(j)} \ge \xval_i\}\cup \{\max_{j \in [\numsamp]}\hatf(\xval^{(j)})\}\right).\]
In order to transfer the results of \Cref{thm:isotonic} to our setting, we have to ensure that (extensions of) their arguments apply given differences in the domain, co-domain, and interpolation method. We will show later that the upper bounds apply, given most reasonable interpolation methods, including ours. Further, since we do an ex-post truncation to the true output space $[0,1]$ instead of $[-1,1]$, this step will weakly lower the risk of the estimator. Therefore, we will focus on adapting their arguments to apply to distributions on $\calX$ satisfying the conditions of \Cref{thm:CV} (b). 

In order to do this, we will use the fact that the learning problem of isotonic functions defined on $\calX$ is equivalent to the learning problem when the function is defined only on the lattice, 
\[\mathbb{L}_{\numcriteria, \numvals}:=\prod_{i \in [\numcriteria]}\{1/\numvals, 2/\numvals, \dots, 1\}.\]

In order to bound the risk $R(\hatf, \cscoref)$, the authors define the event that the infinity norm of the function error exceeds some threshold and consider the risk when this event occurs and when it does not. Specifically, for some value of $a > 0$, they define  
$A = \{\|\hatf-\scoref\|_\infty \le a\}$ and indicators $\mathbb{I}_{A}$ and $\mathbb{I}_{A^C}$ to write the following decomposition of risk 
\[R(\hatf, \cscoref) = \E [\|\hatf-\cscoref\|_{L_2(\cP)}\mathbb{I}_{A}] + \E [\|\hatf-\cscoref\|_{L_2(\cP)}\mathbb{I}_{A^C}].\]
They prove upper bounds on each of these terms separately. We will show that, in both cases, the upper bounds still apply when the marginal distribution is defined on the lattice for a suitable choice of $m_0, M_0$. First, consider the risk given bounded error, $\E [\|\hatf-\cscoref\|_{L_2(\cP)}\mathbb{I}_{A^C}]$. The authors use arguments not depending on $\cP$ to show that if certain upper bounds on certain complexity measures for subsets of isotonic functions on $\ccalX$ hold, then when $a=6\log^{\frac{1}{2}}\numsamp$ one can upper-bound the value of $\E [\|\hatf-\cscoref\|_{L_2(\cP)}\mathbb{I}_{A^C}]$ by $\const_{m_0, M_0}\numsamp^{-1/\numcriteria}\log^{\gamma_\numcriteria}\numsamp$. While in the key result bounding the risk conditional on $A$ (Proposition 7), depend on a Rademacher and Gaussian complexity term, they show via further analysis (specifically Lemma 5 and 6, neither of which depending on properties of $\cP$) that it suffices to bound the Rademacher complexity of the set of isotonic functions in the $L_2(\cP)$ ball around the zero function with outputs in $[-1,1]$. 

Now we will more specifically discuss the complexity measure that they use. Let $\xi^{(1)}, \dots, \xi^{(\numsamp)}$ be a sequence of random signs. Define $B(r,\cP)$ to be the ball of radius $2$ in the $L_2(\cP)$. Let $\mathcal{G}_{\cxval}(0,r,1) = \{\cscoref \in \cFiso: \cscoref \in B_2(r,\cP) \text{ and } \|\cscoref\|_\infty \le 1\}$ be the set of isotonic functions over $\ccalX$ in the $L_1(\cP)$ ball of radius $r$ around the zero function. The authors bound 
\[\E \left[\sup_{\cscoref \in \mathcal{G}_{\cxval}(0,r,1)}\left|\frac{1}{\numsamp^{1/2}}\sum_{j=1}^{\numsamp}\xi^{(j)}\cscoref(\cxval^{(j)})\right|\right].\]
We can define the respective quantities for the probability measures restricted to the lattice (or equivalently over $\calX$) in terms by $\mathcal{G}_\xval(0,r,1) = \{\scoref \in \Fiso(\R): \scoref \in B_2(r, P) \text{ and } \|\scoref\|_\infty \le 1\}$ for $B_2(r, P)$ the $L_2(P)$ ball around the zero function. The relevant complexity measure on this restricted domain is 
\[\E \left[\sup_{\scoref \in \mathcal{G}_{\xval}(0,r,1)}\left|\frac{1}{\numsamp^{1/2}}\sum_{j=1}^{\numsamp}\xi^{(j)}\scoref(\xval^{(j)})\right|\right].\]

In the following proposition we show that the Rademacher complexity of $\mathcal{G}_{\xval}(0,r,1)$ given a distribution $P$ satisfying $\min_{\xval \in \calX}P(\xval)|\calX| \ge \minval$ can be upper bounded by the complexity of $\mathcal{G}_{\cxval}(0,r,1)$ given distributions over $\ccalX$ satisfying the conditions of \Cref{thm:isotonic} for $m_0=\minval, M_0=1-\minval$. 
\begin{proposition}\label{prop:complexity_UB}
    For any distribution $P$ satisfying $\min_{\xval \in \calX}P(\xval)|\calX| \ge \minval$, there exists a distribution $\cP$ over $\ccalX$ satisfying the assumptions of Proposition 8 with constants $m_0=\minval,M_0=1-\minval$ such that
    \[\E \left[\sup_{\scoref \in \mathcal{G}_\xval(0,r,1)}\left|\sum_{j=1}^{\numsamp}\xi^{(j)}\scoref(\xval^{(j)})\right|\right] \le \E \left[\sup_{\scoref \in \mathcal{G}_{\cxval}(0,r,1)}\left|\sum_{j=1}^{\numsamp}\xi^{(j)}\cscoref(\cxval^{(j)})\right|\right].\]
\end{proposition}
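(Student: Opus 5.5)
We will construct $\cP$ by spreading the mass of each lattice point over a cell of $\ccalX$. Identify $\calX=[\numvals]^\numcriteria$ with the lattice $\mathbb{L}_{\numcriteria,\numvals}$ via $\xval\mapsto \xval/\numvals$. For each $\xval\in\calX$ let $Q(\xval)=\prod_{i\in[\numcriteria]}((\xval_i-1)/\numvals,\xval_i/\numvals]$; these cells partition $(0,1]^\numcriteria$ and each has Lebesgue volume $1/|\calX|$. Define $\cP$ to have Lebesgue density $\cP(\cxval)=|\calX|\,P(\xval)$ on $Q(\xval)$. The cells are disjoint and have equal volume, so the cell containing a draw $\cxval\sim\cP$, written $\lceil \numvals\cxval\rceil$ coordinatewise, has distribution exactly $P$. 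The hypothesis $P(\xval)|\calX|\ge\minval$ gives the lower bound $m_0=\minval$ on the density. For the upper bound, $P(\xval)\le 1-(|\calX|-1)\minval/|\calX|$, so the density is finite and bounded by a constant depending only on $\minval$ and $|\calX|$. I expect the claimed $M_0=1-\minval$ to need adjusting to this bounded value. Any finite $M_0$ suffices for \Cref{thm:isotonic}, so only the density bound matters for the constants.

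Next we couple the samples. Draw $\cxval^{(1)},\dots,\cxval^{(\numsamp)}$ i.i.d.\ from $\cP$ and set $\xval^{(j)}=\lceil\numvals\cxval^{(j)}\rceil$. Then the $\xval^{(j)}$ are i.i.d.\ from $P$, and we use the same Rademacher signs $\xi^{(j)}$ in both expressions.

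The key step is an embedding of function classes. For $\scoref\in\mathcal{G}_\xval(0,r,1)$ define the piecewise-constant lift $\cscoref(\cxval)=\scoref(\lceil\numvals\cxval\rceil)$ on $(0,1]^\numcriteria$. We extend it to the measure-zero boundary where some coordinate equals $0$ by using the value at the corresponding cell with that coordinate set to $1$; this extension preserves monotonicity. The map $\cxval\mapsto\lceil\numvals\cxval\rceil$ is coordinatewise non-decreasing, so $\cscoref$ is isotonic. It also satisfies $\|\cscoref\|_\infty=\|\scoref\|_\infty\le 1$ and takes values in $[-1,1]$. Because $\cscoref$ is constant on cells and the cells carry the $P$ masses,
\[\|\cscoref\|_{L_2(\cP)}^2=\sum_{\xval\in\calX}P(\xval)\scoref(\xval)^2=\|\scoref\|_{L_2(P)}^2\le r^2.\]
Hence $\cscoref\in\mathcal{G}_{\cxval}(0,r,1)$. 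Under the coupling, $\cscoref(\cxval^{(j)})=\scoref(\xval^{(j)})$ for every $j$ almost surely, so pointwise in the randomness
\[\sup_{\scoref\in\mathcal{G}_\xval(0,r,1)}\Bigl|\sum_j\xi^{(j)}\scoref(\xval^{(j)})\Bigr|\le\sup_{\cscoref\in\mathcal{G}_{\cxval}(0,r,1)}\Bigl|\sum_j\xi^{(j)}\cscoref(\cxval^{(j)})\Bigr|.\]
Taking expectations gives the claim, because the left side's law depends only on the law of $(\xval^{(j)},\xi^{(j)})$, which is the required one.

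The embedding and coupling are routine. The main obstacle is checking the regularity conditions on $\cP$. These are the lower and upper density bounds required by \Cref{thm:isotonic} and by the lemmas of \citet{han_isotonic_2019} that consume this complexity bound. We must make sure the constants depend only on $\numcriteria$ and $\minval$, and not on $\numvals$. The lower bound $\minval$ is uniform in $\numvals$. The upper bound is the delicate part, since a single lattice point could carry nearly all the mass. I would first check whether their argument uses $M_0$ only through a doubling-type comparison with Lebesgue measure; if so, a finite $\numvals$-dependent $M_0$ is harmless. Otherwise I would state the result with $M_0=1/\minval\cdot|\calX|$-type constants and absorb them into $\const_{\numcriteria,\minval}$ for fixed $\numvals$.
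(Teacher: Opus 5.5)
Your proposal is correct and is essentially the paper's proof: the same coupling that draws $\cxval$ uniformly from the cell $A_\xval$ (giving density $|\calX|P(\xval)$), the same piecewise-constant lift $\cscoref(\cxval)=\scoref(\xval(\cxval))$, which is isotonic and preserves both the sup norm and the $L_2$ norm, and the same pointwise comparison of the two suprema before taking expectations. You are also right that $M_0=1-\minval$ is unattainable---the paper's assertion $P(\xval)|\calX|\le 1-\minval$ already fails for uniform $P$, and no probability density on $[0,1]^\numcriteria$ can be bounded above by a constant below $1$---so the valid bound is $M_0=|\calX|(1-\minval)+\minval$, and your caution that this may introduce $\numvals$-dependence into the constant of \Cref{thm:CV}(b) is warranted, since the paper's argument does not address it.
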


The result of this proposition (which we prove later) is an inequality in the complexities of function classes. This enables us to directly apply the distribution-dependent upper bounds. Let $ \lesssim_{\minval}$ denote an inequality up to a constant factor depending only on $\minval$. By \Cref{prop:complexity_UB}, $\E [\|\hatf-\scoref\|_{L_2(P)}\mathbb{I}_{A}]\lesssim_{\minval}\numsamp^{-1/\numcriteria}\log^{\gamma_\numcriteria}\numsamp$. Thus we have a bound of the first term. 

To bound the second term, $\E [\|\hatf-\scoref\|_{L_2(P)}\mathbb{I}_{A^C}]$, because $\scoref, \hatf$ map to points in $[0,1]$, we have the following inequality
\[\E [\|\hatf-\scoref\|_{L_2(P)}\mathbb{I}_{A^C}]\le P(A^C).\]
In order to bound $P(A^C)$, the authors prove that $P(A^C) \lesssim \numsamp^{-1}$ for their specific choice of $r$ (see \citep{han_isotonic_2019} Lemma 10). Specifically, they use the fact that $\|\hatf -\cscoref \|_\infty \le \|\hatf\|_\infty + \|\cscoref\|_\infty \le 2 + \max_{j \in [\numsamp]}|w^{(j)}|$ by the constraint that $\|\cscoref\|_\infty \le 1$ and $\|\hatf\|_\infty \le \max_{j \in [\numsamp]}\cscoref(\xval^{(j)})+w^{(j)}$. This argument is the only one dependent on the interpolation method and relies only on the fact that the value of unseen $\xval$ is within the range of observed outputs. Our interpolation method 
is an average of the function values for observed outputs, and thus their argument still applies to attain a $O(\numsamp^{-1})$ upper bound on $\E [\|\hatf-\scoref\|_{L_2(P)}\mathbb{I}_{A^C}]$. Because the second term is asymptotically smaller, the risk is dominated by $\E [\|\hatf-\scoref\|_{L_2(P)}\mathbb{I}_{A}]$. Thus
\begin{align*}
    R(\scoref_0, \scoref) \lesssim_{\numcriteria, \minval}\numsamp^{-1/\numcriteria}\log^{\gamma_\numcriteria} \numsamp + \numsamp^{-1}
\end{align*}
Applying our oracle inequality on the risk of $\fcv$:
\begin{align*}
    R(\fcv, \scoref)& \le 2\inf_{\lambda \in \Lambda}R(\scoref_{\lambda}, \scoref)+\const /\numsamp \log(1+|\Lambda|)\\
    & \lesssim_{\numcriteria, \minval}\numsamp^{-1/\numcriteria}\log ^{\gamma_\numcriteria} \numsamp.
\end{align*}
Therefore, all that is left is to prove \Cref{prop:complexity_UB}. 
\begin{proof}[Proof of \Cref{prop:complexity_UB}.]
    We will prove this showing that there exists a joint distribution over $\xval$ and $\cxval$ such that for all functions $\scoref \in \mathcal{G}_\xval(0,r,1)$ there exists a function $\cscoref \in \mathcal{G}_{\cxval}(0,r,1)$ such that $\scoref(\xval) = \cscoref(\cxval)$ with probability $1$. If such a construction exists, it implies that given any realized of the $\xi^{(j)}$, any supremum taken over all $\mathcal{G}_{\xval}(0,r,1)$ (including the Rademacher complexity) is at most the supremum taken over $\mathcal{G}_{\cxval}(0,r,1)$. As the inequality holds with probability $1$ and does not depend on the joint distribution of $\xi$ and $(\xval,\cxval)$, it also holds for the expected Rademacher complexities when $\xi$ is independent of $(\xval,\cxval)$, proving the Lemma. 

    Hence, all that is left is to show that such a joint distribution and set of functions that are equal with probability $1$ exists. To define this coupling between $\calX$ and $\ccalX$, we first will partition $[0,1]^\numcriteria$ into hypercubes of equal size. For all $\xval \in [\numvals]^\numcriteria$, let $A_x = \prod_{i =1}^{\numcriteria}R_{x_i}$ where $R_{\subindex}=(\frac{\subindex-1}{\numvals}, \frac{\subindex}{\numvals}]$ for $\subindex >1$ and $R_\subindex = [\frac{\subindex-1}{\numvals}, \frac{\subindex}{\numvals}]$ for $\subindex=1$. Let $\mathcal{A}=\{A_x:x \in [\numvals]^\numcriteria\}$. By construction, all elements of $\mathcal{A}$ have volume $\frac{1}{\numvals^\numcriteria}$. Let $\cxval \sim \mathrm{Unif}(A_\xval)$, that is, $\cxval$ takes on all values in the hypercube $A_{\cxval}$ with equal probability. Now we will confirm that the distribution of $\cxval$, denoted by $\cP$, satisfies the condition of Proposition 8, namely that
    \[m_0 \le \inf_{\cxval \in \ccalX}\cP(\cxval) \le \sup_{\cxval \in \ccalX}P_{\cxval}(\cxval) \le M_0.\]
    Let $\xval(\cxval)$ be the element of $\calX$ corresponding to the hypercube that $\cxval$ is contained in. By construction $\cP(\cxval) = P(\xval(\cxval))\numvals^\numcriteria$. Since $P(\xval)|\calX| \in [\minval, 1-\minval]$ for all $x \in [\numvals]^\numcriteria$, it follows that $P(\xval(\cxval))|\calX| \in [m_0, M_0]$ for all $\cxval \in \ccalX$. Given this joint distribution, now we need to show that for all functions $\scoref \in \mathcal{G}_\xval(0,r,1)$ there exists a function $\cscoref \in \mathcal{G}_{\cxval}(0,r,1)$ such that $\scoref(\xval) = \cscoref(\cxval)$. For any  $\scoref \in \mathcal{G}_{\xval}(0,r,1)$ consider the function $\cscoref(\cxval) = \scoref(\xval(\cxval))$. Since $\xval(\cxval)$ and $\scoref$ are isotonic, and the composition of isotonic functions is isotonic, so is $\cscoref$. Further, by construction $\E\|\cscoref\|_2=\E\|\scoref\|_2\le r$ and $\|\cscoref\|_\infty = \|\scoref\|_\infty \le 1$ which implies that $\cscoref \in B_2(r,\cP) \cap B_\infty(1)$ and thus $\cscoref \in \mathcal{G}_{\cxval}(0,r,1)$. 

    This concludes our proof that there exists a joint distribution over $\xval$ and $\cxval$ such that for all functions $\scoref \in \mathcal{G}_\xval(0,r,1)$ there exists a function $\cscoref \in \mathcal{G}_{\cxval}(0,r,1)$ such that $\scoref(\xval) = \cscoref(\cxval)$ with probability $1$. Since such a distribution exists and is a sufficient condition for the statement of the Lemma, we are done.   
\end{proof}

\section{Supplementary material for Section \ref{sec:empirical_results}}\label{app:empirical_results}
\paragraph{Necessary compute resources.} All experiments were run locally on a personal computer. The computational requirements are minimal---all experiments complete in under a few hours on standard consumer hardware (we used the ``caffeinate'' command in terminal) and do not require GPUs or specialized infrastructure.

\paragraph{Code and data availability.} Our code is publicly available online at our \href{https://github.com/madelineceli13/evaluator-preference-algorithm}{Github repository}.

The datasets for Tripadvisor and ICLR LLM/Human peer review are too large to store on GitHub. They can be accessed at the following links: \href{https://github.com/diegoantognini/HotelRec}{Tripadvisor dataset link}, and \href{https://huggingface.co/datasets/IntelLabs/AI-Peer-Review-Detection-Benchmark}{ICLR LLM/Human peer review dataset link}. 

\subsection{Comparison of our algorithm with alternative machine learning methods}\label{app:ml_comparisons}
In addition to comparing our algorithm with non-negative least squares, we test two additional methods: a monotone-constrained neural network \citep{runje_constrained_2023} and a monotone-constrained gradient-boosted tree ensemble. A monotone-constrained neural network (henceforth monotone NN) enforces monotonicity in each variable. We use the methods detailed in \citet{runje_constrained_2023}. A monotone-constrained gradient-boosted tree ensemble (henceforth monotone GBM for gradient-boosting machine) follows the structure of a typical GBM. Starting from a constant prediction, the model fits an estimator, in this case a regression tree, to the residuals (the errors the ensemble makes on each training point) and then adds that tree (scaled by a learning rate) to the current prediction. Then the model recomputes the new residuals and repeats, with each new tree correcting the errors from the previous ones, until a stopping criterion is reached (e.g., a maximum number of trees, minimum leaf size or marginal error reduction). The final estimator is the sum of all fitted trees. In a monotone GBM, the only additional component is that the estimators are constrained to generate isotonic outputs. As such, any weighted combination of the individual estimators (with non-negative weights) will output an isotonic function. Each method---monotone NNs and GBMs---is tuned by cross-validation on the same train/validation/test split and clipping range as our method, so all reported estimation and reducible errors are directly comparable. 

\subsubsection{Synthetic simulations}
We use synthetic simulations to compare the relative performance of our algorithm with linear regression and machine learning baselines given a diversity of preference types and sample sizes. In particular, we test each algorithm on linear, Cobb-Douglas, and Leontief preference functions using the same setup as in \Cref{sec:synthetic_simulations}. As shown in \Cref{fig:nn_gbm}, our algorithm outperforms GBM on all preference types. Moreover, while the monotone NN has a similar estimation error to our algorithm for Linear and Cobb-Douglas preferences, it does significantly worse for Leontief preferences. 

\begin{figure}[htb]
    \centering
   \includegraphics[width = \linewidth]{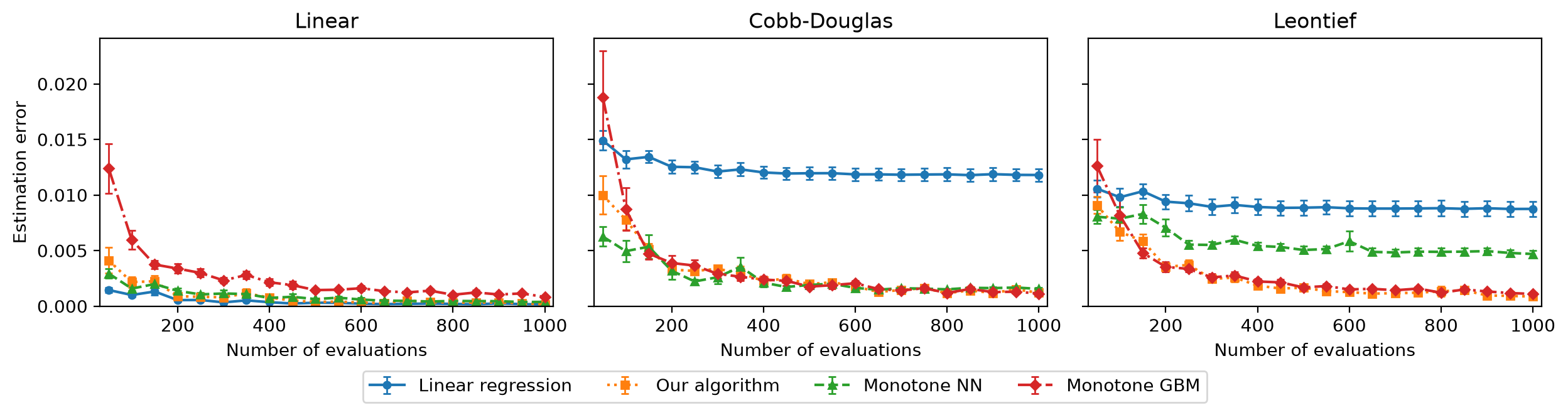}
    \caption{Simulation results for synthetic preferences including monotone NN and GBM baselines. Error bars are computed using the standard error of the mean estimation error.}\label{fig:nn_gbm}
\end{figure}

\subsubsection{Results on Tripadvisor data}
We then test our algorithm on the Tripadvisor datasets described in \Cref{sec:trip_advisor}. We follow the same procedure and report the reducible error of each method (a proxy for estimation error). \Cref{tab:reducible-error} shows the results. While the differences in reducible error are not necessarily statistically significant, it is notable that our algorithm has lower error than both monotone NN and GBM for all of the datasets. Altogether with the synthetic simulation results, this evidence suggests that our algorithm does not just outperform linear models but potentially SOTA machine learning algorithms as well.

\begin{table}[htb]
\centering
\begin{tabular}{l r r r}
\toprule
Dataset    & \multicolumn{1}{c}{Monotone NN} & \multicolumn{1}{c}{Monotone GBM} & \multicolumn{1}{c}{Our algo} \\
\midrule
All - 2019 & 0.014 (0.003) & 0.023 (0.004) & \textbf{0.013 (0.003)} \\
Business   & 0.015 (0.003) & 0.018 (0.003) & \textbf{0.014 (0.003)} \\
Couples    & 0.013 (0.002) & 0.016 (0.002) & \textbf{0.012 (0.002)} \\
Family     & 0.013 (0.002) & 0.019 (0.002) & \textbf{0.011 (0.002)} \\
\bottomrule
\end{tabular}
\caption{Reducible error by dataset, with the lowest error in each row bolded. Standard errors (in parentheses) are computed as the standard error of the mean for prediction error.}
\label{tab:reducible-error}
\end{table}

\subsection{Plots for understanding preferences}\label{app:criteria_plots}

\subsubsection{Construction of error bars}
To construct error bars for the criteria plots, we use an upper bound on the standard error of reducible error. Prediction error is the sum of two independent components: estimation error and noise. Therefore, the expected standard error of prediction error is weakly larger than that of estimation error. As reducible error is our proxy for estimation error, this is approximately true for reducible error as well. We therefore bound the standard error of reducible error by that of prediction error. Lastly, we use the delta method to convert this to an upper confidence bound on $L_1$ error. Therefore, the width of the error bar is $\frac{1.96 \times \widehat{\mathrm{SE}}(\text{prediction error})}{2\sqrt{\text{reducible error}}}$.

\subsubsection{Additional plots for understanding human and LLM preferences}

Below we plot the values of our learned preference functions on the ICLR LLM and human peer review dataset. Recall that the criteria are scored from 1 to 4, and that 2 and 3 are the most common scores. We fix all but one criterion at a given score (2 on the left, 3 on the right). Then we vary the remaining criterion from 1-4. As for presentation (see \Cref{sec:ICLR}) GPT are concave when the other criterion scores are 2, and convex when they are 3. In contrast, the human preference functions functions are closer to linear in all of the plots. 
\begin{figure}
    \centering
   \includegraphics[width = \linewidth]{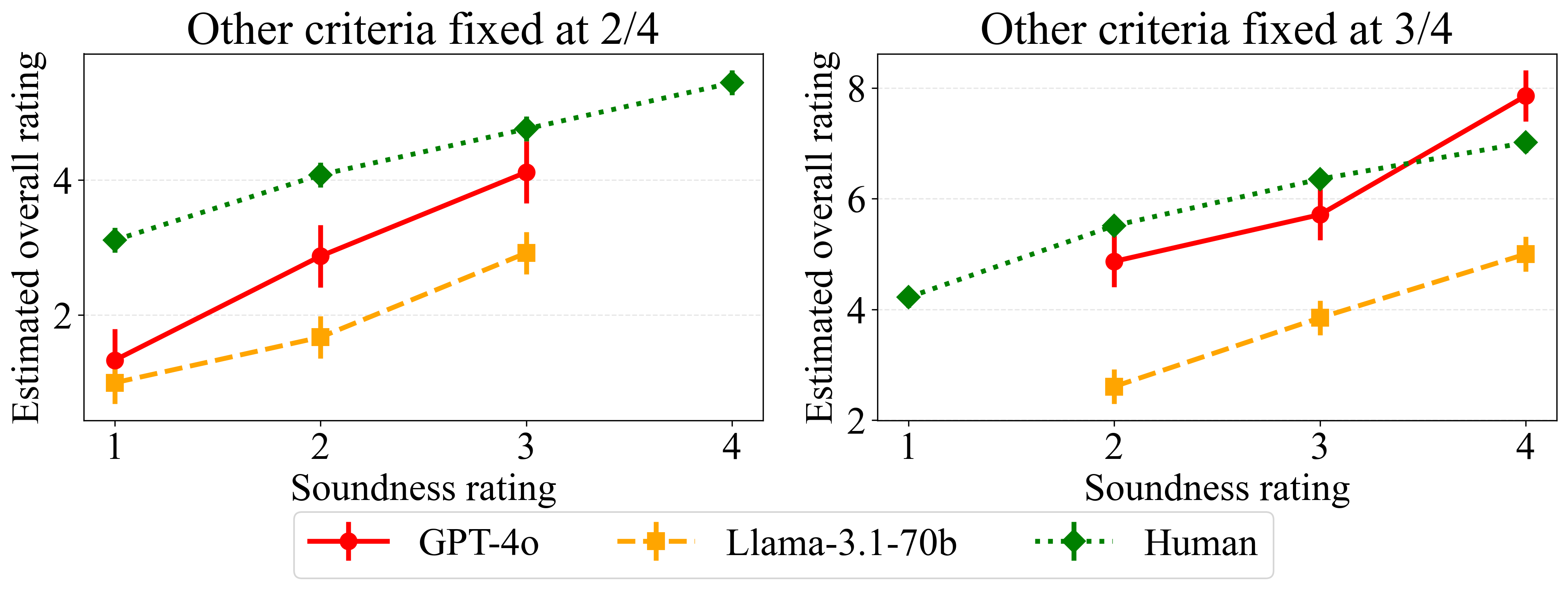}
   \includegraphics[width = \linewidth]{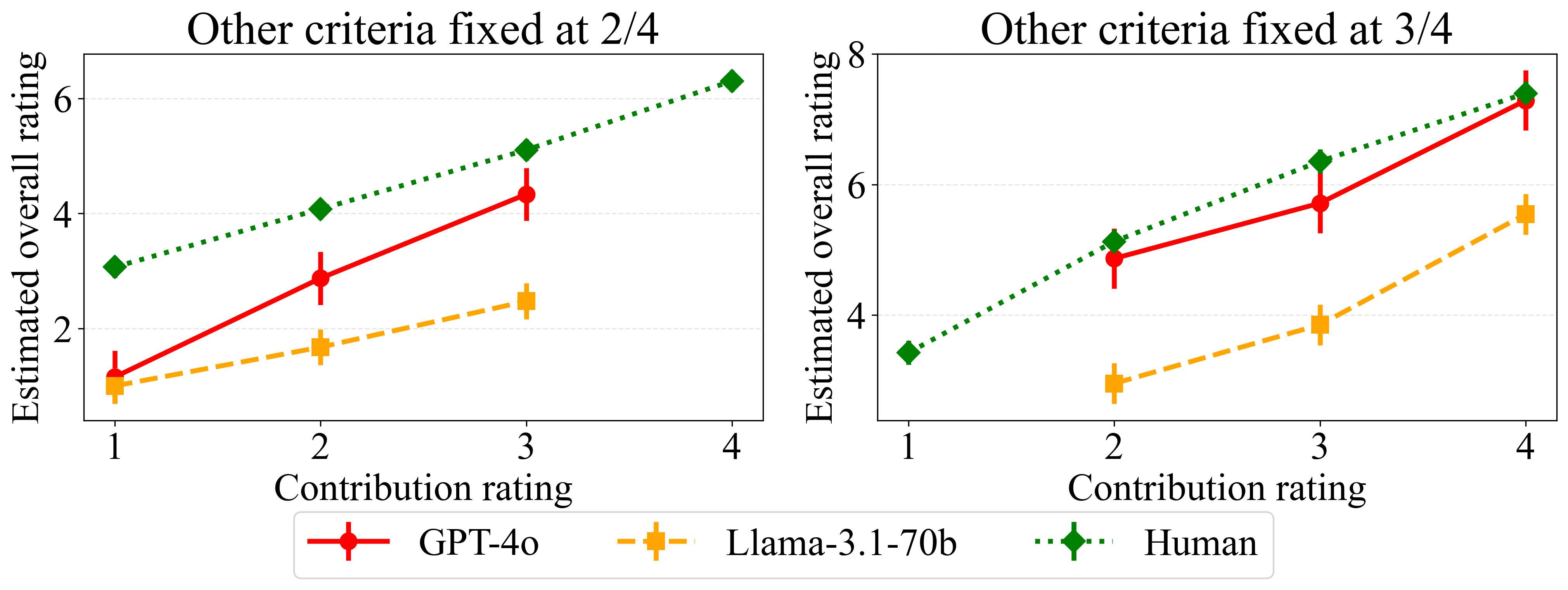}
    \caption{Impact of increasing specific criterion scores on the overall paper rating. We omit estimates for all criterion scores not seen in training. Error bars are computed based on the standard error of prediction error.}\label{fig:LLM_additional_criteria_plots}
\end{figure}

\subsection{Results on additional human preference datasets}\label{appendix:additional_human_evaluator}
In addition to Tripadvisor, we test our algorithm relative to linear regression on two other datasets of human preferences. However, their sample size was insufficient to generate statistically significant findings. 

\paragraph{ICLR Peer Review data}. The first dataset is of criteria scores and overall scores or recommendations provided by peer reviewers for the International Conference on Learning Representations (ICLR) from 2023 to 2025. Since all of the reviews are publicly available, we obtained them via web scraping. Additionally, each year used different criteria and/or rating scales. Every year has 3 criteria, which are rated on a scale of 1-4, and an overall recommendation or rating. In 2023, the criteria were correctness, technical novelty and significance, and empirical novelty and significance. In 2024 and 2025, the criteria were soundness, presentation, and contribution. For the years 2023 and 2025, overall ratings took on values in $\{1,3,5,6,8,10\}$. In 2024, the possible rating values were $\{1,3,5,6,8\}$. In this context, the non-equal intervals are meant to show the relative difference in each level of recommendation. In 2024, for instance, 3 meant ``reject, not good enough'' and 6 meant ``marginally above the acceptance threshold.'' Since the criteria and/or rating scale varied based on the year, we analyzed each separately. The data was cleaned to contain only reviews for which all criteria scores were filled in and obtained via web-scraping. An example of an evaluation from 2024 is presented in \Cref{fig:tripadvisor_iclr} (b). 

\paragraph{Astrobee Challenges Series evaluations}
The last dataset we tested contains evaluations of engineering robotics solutions in the aerospace domain. The solutions being evaluated came from a competition called the Astrobee Challenges Series run by NASA and other researchers. They asked people to propose solutions to $17$ different challenges relating to the creation of an attachment and orientation arm for Astrobee, a free-flying robotic system used by NASA. The winning solution got money and was incorporated into new designs. After the competition ended, Lane and co-authors did a study to see how the type of evaluator impacted their specific preferences \citep{lane_architectural_2023}. Evaluators were asked to provide criteria scores for the feasibility and novelty of the solution as well as an overall quality score. Each score took on integer values between 1 and 7. We use data from that study, which contains $3,850$ evaluations from $374$ evaluators corresponding to $101$ solutions of $9$ challenges. An example of what the response questions look like is given in \Cref{fig:nasa}.

\begin{figure}[htb]
    \centering
        \includegraphics[width = \textwidth]{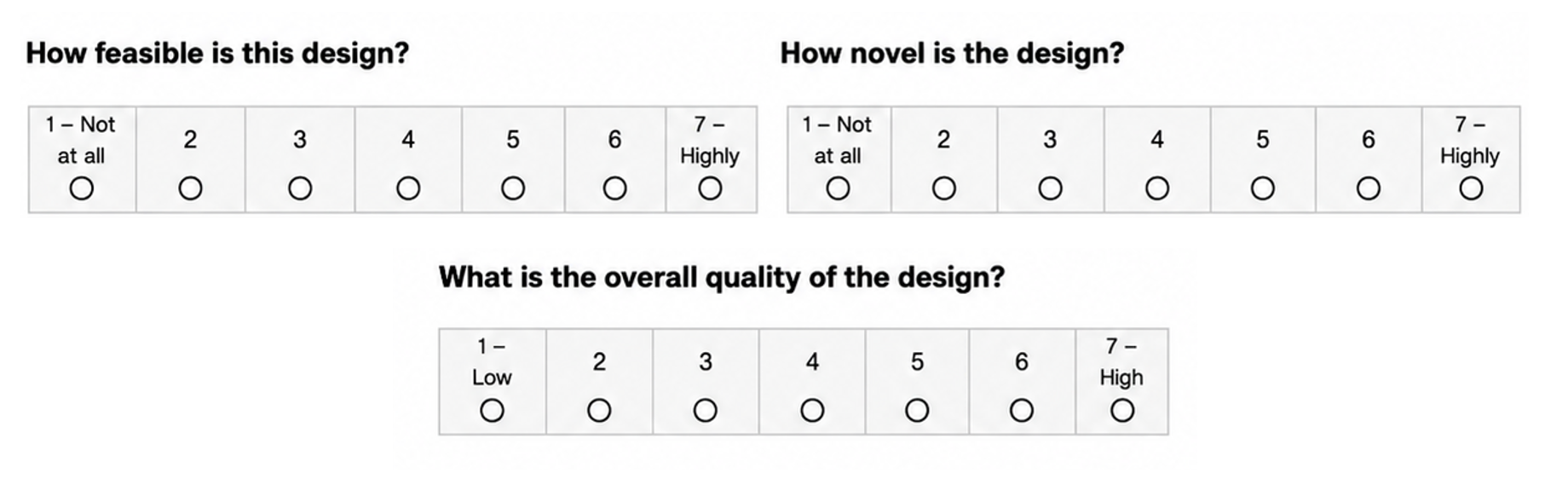}
    \caption{Criteria and overall score input template for Astrobee evaluations. Obtained from \citet{lane_architectural_2023}. Each criterion and overall score are provided on a scale from 1 to 7.}
    \label{fig:nasa}
\end{figure}

\begin{table}[t!]
    \centering
    \begin{tabular}{lcccc}
        \toprule
        Dataset & \# criteria ($\numcriteria$) & \# values ($\numvals$) & score range & \# samples ($\numsamp$) \\
        \midrule
        \multicolumn{5}{l}{\textit{ICLR peer review data}} \\
        \quad 2023 & \multirow{3}{*}{3} & \multirow{3}{*}{4} & 1--10 & 13{,}212 \\
        \quad 2024 &                    &                    & 1--8  & 28{,}028 \\
        \quad 2025 &                    &                    & 1--10 & 46{,}748 \\
        \midrule
        \multicolumn{5}{l}{\textit{Astrobee Challenges Series evaluations}} \\
        \quad All & \multirow{4}{*}{2} & \multirow{4}{*}{7} & \multirow{4}{*}{1-7} & 3{,}850 \\
        \quad Multivalent & &  &  & 89 \\
        \quad Univalent & &  &  & 719 \\
        \quad Outside expertise & &  &  & 3{,}042 \\
        \bottomrule
    \end{tabular}
    \caption{Summary of dataset attributes for ICLR and Astrobee evaluations.}\label{tbl:data_summary_appendix}
\end{table}
\paragraph{Results} Our results are shown in \Cref{tbl:results_appendix}. In all cases, our algorithm does weakly better. For ICLR, some of the improvements in estimation error are very large (as much as 81.6\% less error). However, none of the results are statistically significant. We suspect that the lack of significance is a function of insufficient sample size. The largest of the datasets is ICLR 2025, with a sample size of 47k. For comparison, the smallest Tripadvisor dataset is over four times this size, 197k. The smallest dataset is the Astrobee evaluations with 3.8k samples, nearly one hundredth the size of the largest Tripadvisor dataset (332k samples). 

\begin{table}[t!]
\caption{Results for linear regression and our algorithm on Tripadvisor dataset. Standard errors are computed with the standard error of the mean prediction error. $\Delta$ (\%) denotes the relative error reduction of our algorithm over linear regression.}
\label{tbl:results_appendix}
\centering
\begin{tabular}{lccrccrr}
\toprule
 & \multicolumn{3}{c}{Prediction Error} & \multicolumn{3}{c}{Reducible Error} \\
\cmidrule(lr){2-4} \cmidrule(lr){5-7}
 & Lin. Reg. & Our algo & $\Delta (\%)$. & Lin. Reg. & Our algo & $\Delta (\%)$  \\
\midrule
ICLR 2023 & 1.518 (0.043) & 1.491 (0.043) & 1.8\% & 0.053 (0.043) & 0.027 (0.043) & 50.2\% \\
ICLR 2024 & 1.531 (0.051) & 1.462 (0.042) & 4.5\% & 0.084 (0.051) & 0.015 (0.042) & 81.6\% \\
ICLR 2025 & 1.280 (0.019) & 1.257 (0.019) & 1.8\% & 0.03 (0.019) & 0.007 (0.019) & 75.3\% \\
Astrobee & 0.893 (0.081) & 0.893 (0.081) & 0.0\% & 0.079 (0.081) & 0.079 (0.081) & 0.5\% \\
\bottomrule
\end{tabular}
\end{table}

\subsubsection{Additional analysis on Astrobee dataset}\label{app:nasa}
The final empirical analysis we perform is on the Astrobee dataset.~\\ 

\textit{\textbf{Question}: Do evaluator preferences depend on their type of expertise?} 

One of the questions in the corresponding paper \citep{lane_architectural_2023} is: how does the evaluator's expertise impact their evaluation preferences? In particular, the authors group evaluators by whether they have expertise in the relevant domains of robotics or aerospace. Univalent evaluators have expertise in either robotics or aerospace, multivalent evaluators have expertise in both domains, and evaluators with outside expertise have experience in neither. The authors find that the preferences of multivalent and univalent evaluators differ qualitatively. If this is the case, we would expect (all else constant) to have less noise in the dataset of multivalent and univalent evaluators than the dataset of all of the evaluators. 

To test our hypothesis that evaluator preferences differ based on their expertise, we partition the evaluation-solution pairs into three subsets based on the expertise of the evaluator: multivalent, univalent, and outside expertise. We measure preference consistency by computing irreducible error. Under our hypothesis, we would expect to have a lower irreducible error on a dataset of only evaluators in one of the three categories---multivalent, univalent, or outside expertise---relative to the dataset of all evaluators. To compute an estimate and a confidence interval for the irreducible error, we use bootstrapping. We do $1000$ resamples with replacement, where each resampled dataset has the size of the input dataset. We report the mean and a 95\% percentile confidence interval ($2.5$ to $97.5$ quantiles). 

Additionally, we run our algorithm on the data for each subset of evaluators and all evaluators. We report the optimal regularization parameter and compute the prediction error according to Equation \Cref{eq:prediction_error}. These statistics tell us (loosely speaking) how well we can learn evaluator preferences given the available data. 

\paragraph{Results}
First, we report the amount of irreducible error to quantify how consistent the preference function is across evaluations. We find that the datasets of evaluator types have a weakly lower irreducible error than the dataset of all evaluators. For some evaluator types, this difference is substantial: as shown in \Cref{tbl:nasa}, the level of irreducible error for multivalent evaluators is less than half of that for all evaluators. A stronger version of our hypothesis predicts that groups with more overlapping expertise should show greater preference consistency. Since multivalent evaluators share two domains of expertise, they are likely to have the most similar work background to one another, then univalent, and then those with neither (i.e., outside expertise). Our results track this: irreducible error for multivalent evaluators is lowest, then univalent, outside expertise, and all evaluators (though the latter three relations are not statistically significant).

Another way to test preference consistency is to analyze the prediction error of our algorithm. The less noise in the data, the easier it is to learn the preference function. Thus, all else constant, we expect there to be lower estimation and prediction error for less noisy datasets. Additionally, the more data points we have, the better we can estimate the function. Thus, ceteris paribus, we expect prediction error to decrease with the number of samples. The optimal regularization parameters support this argument, and are larger for datasets with smaller samples, indicating that purely isotonic regression may be overfitting due to insufficient sample size (see \Cref{tbl:data_summary_appendix}). These effects work in opposite directions: since the datasets with the lowest amount of noise have the smallest number of samples, it is unclear a priori whether they would have lower or higher prediction error. We find that prediction error is higher for larger datasets, even though the largest dataset is over 30 times the size of the smallest one. Multivalent evaluators ($\numsamp = 89$) have the lowest prediction error, then univalent evaluators ($\numsamp = 719$) outside expertise ($\numsamp = 3{,}042$), and all evaluators ($\numsamp = 3{,}850$). \textit{Together, these results illustrate the intuitive idea: the more similar the expertise of different evaluators is, the more similar their preferences are, and, consequently, the easier those preferences are to learn. }

\begin{table}[tbp]
\caption{Level of preference consistency and our algorithm performance by evaluator expertise. Standard errors are in parentheses for the prediction error. 95\% bootstrap CI are in brackets for irreducible error.}
\label{tbl:nasa}
\centering
\begin{tabular}{lcccc}
\toprule
 & All & Multivalent & Univalent & Outside expertise  \\
\midrule
Irreducible error & 0.85 [0.785,0.918] & 0.406 [0.243,0.595] & 0.653 [0.538,0.791] & 0.85 [0.771,0.924]\\
Prediction error & 0.893 (0.081) & 0.533 (0.153) & 0.587 (0.094) & 0.864 (0.071) \\
$\lambda^\star$ & 8.0 & $\infty$ & 0.5 & 0\\
\bottomrule
\end{tabular}
\end{table}

\subsection{Chosen regularization parameters}\label{app:reg_param}
In \Cref{tab:lambda_star} we present the values of $\lambda^\star$ chosen by our algorithm for each of the real-world datasets we run our algorithm on. We will now make a few comments about what these values illustrate about our algorithm and the data. Observe that the selected values for each of the Tripadvisor datasets (which have hundreds of thousands of samples) are quite small. Thus, the relative improvement for our algorithm presented in \Cref{sec:trip_advisor} is a result of the flexibility provided by isotonic regression. Additionally, it is notable that for the ICLR datasets from 2024-2026, each picks purely isotonic regression. This fact is perhaps indicative of the nonlinearity of peer review preferences, as illustrated in \Cref{sec:iclr_case_study}. Lastly, the only datasets for which $\lambda>1$ is selected are for the Astrobee evaluations. These datasets have significantly fewer samples. Indeed, on the dataset with $\lambda^\star = \infty$ (multivalent evaluators), there are only $89$ samples!  

Furthermore, in \Cref{tab:synthetic_lambda} we include the chosen regularization parameters for our synthetic simulations in \Cref{sec:synthetic_simulations}. Interestingly, most simulations of non-linear preferences pick an intermediate value of $\lambda$, and this is especially pronounced for the Cobb-Douglas function, as intermediate $\lambda$ values are picked in $64\%$ of the simulations. Furthermore, Cobb-Douglas and Leontief choose $\lambda = \infty$ in less than $3\%$ of simulations. \textbf{Put together, these results illustrate that cross-validation generally does not choose very large $\lambda$ unless we explicitly simulate a linear model or have very small sample sizes, and, oftentimes, that it finds intermediate $\lambda$ values to be the best.}

\begin{table}[htbp]
\centering
\begin{tabular}{lcc}
\toprule
Dataset & $\lambda^{\star}$ & $\numsamp$\\
\midrule
\textit{Tripadvisor} & &\\
\quad All - 2019 & $2^{-6}$ & $332{,}349$\\
\quad Couples & $2^{-8}$ & $302{,}894$ \\
\quad Family & $0$ & $371{,}185$\\
\quad Business & $2^{-9}$ & $197{,}031$\\
\textit{ICLR peer reviews} & &\\
\quad ICLR 2023 & $2^{-4}$ & $13{,}212$\\
\quad ICLR 2024 & $0$ & $28{,}082$\\
\quad ICLR 2025 & $0$ & $46{,}748$\\
\quad ICLR 2026 & $0$ & $75{,}859$\\
\textit{LLM Case Study} & &\\
\quad GPT-4o & $0$ & $27{,}857$\\
\quad Llama 3.1 70b & $0$ & $27{,}834$\\
\quad Human evaluators & $2^{-5}$ & $27{,}857$\\
\textit{Astrobee Evaluations} & &\\
\quad All evaluators & $8$ & $3{,}850$\\
\quad Univalent & $0.5$ & $719$\\
\quad Multivalent & $\infty$ & $89$\\
\quad Outside expertise & $0$ & $3{,}042$\\
\bottomrule
\end{tabular}
\caption{Optimal regularization parameter $\lambda^{\star}$ selected via cross-validation for each dataset.}
\label{tab:lambda_star}
\end{table}

\begin{table}[htbp]
\centering
\begin{tabular}{lcccc}
\toprule
Preferences & $\lambda^\star = 0$ & $0 < \lambda^\star < 1$ & $1 \leq \lambda^\star < \infty$ & $\lambda^\star = \infty$ \\
\midrule
Cobb-Douglas & 33.5\% & 47.0\% & 17.0\% & 2.5\% \\
Linear & 3.2\% & 7.3\% & 40.2\% & 49.3\% \\
Leontief & 46.2\% & 39.5\% & 12.4\% & 1.9\% \\
\bottomrule
\end{tabular}
\caption{Distribution of the cross-validation-selected $\lambda^\star$ across synthetic simulations, averaged over all sample sizes ($N = 50, \dots, 1000$) and trials, by preference structure.}
\label{tab:synthetic_lambda}
\end{table}
\end{document}